\documentclass[twoside,11pt]{article}

%

\usepackage{jmlr2e}
\usepackage{shortcuts}
\usepackage{algorithm2e}

\usepackage{bold-extra}
\usepackage{rotating}

\usepackage{bold-extra}
\usepackage{courier}
 \usepackage{booktabs}

 \usepackage{slantsc}
 \usepackage{lmodern}

\usepackage{scrextend}

\usepackage{multirow}




\usepackage{listings}
\usepackage{xcolor}

\newbox\flinebox
\newbox\slinebox
\newbox\mlinebox
\def\duplines{\setlength\parindent{0pt}
  \setbox\flinebox\lastbox
  \ifvoid\flinebox\relax
  \else
  \setbox\slinebox\hbox{\copy\flinebox}
  \setbox\mlinebox\hbox{\copy\flinebox}
  \unskip\unpenalty
  {\duplines}
  \color{black!95}\box\flinebox\vspace*{-2.8ex}
  \box\slinebox \fi
}


%


\jmlrheading{17}{2017}{1-42}{04/17}{--/--}{C\'{e}dric Malherbe and Nicolas Vayatis}


\ShortHeadings{Global optimization of Lipschitz functions}{Malherbe and Vayatis}

\newtheorem{condition}{Condition}

\newcommand\blankfootnote[1]{
  \let\thefootnote\relax\footnotetext{#1}
  \let\thefootnote\svthefootnote
}

\begin{document}


\title{Global optimization of Lipschitz functions \vspace{1em}}

\author{\name C\'{e}dric Malherbe \email malherbe@cmla.ens-cachan.fr
  \AND \name Nicolas Vayatis \email vayatis@cmla.ens-cachan.fr
  \AND \addr CMLA - Ecole Normale Sup\'{e}rieure de Cachan\\
  CNRS - Universit\'{e} Paris-Saclay\\
  94 235 Cachan cedex, France}

\editor{-}

\maketitle

\vspace{-0.7em}
\begin{abstract}
The goal of the paper is to design sequential strategies which lead 
to efficient optimization of an unknown function under the only 
assumption that it has a finite Lipschitz constant. 
We first identify sufficient conditions for the consistency 
of generic sequential algorithms and formulate the expected minimax rate for their performance.
We introduce and analyze a first algorithm called LIPO which assumes the Lipschitz 
constant to be known. 
Consistency, minimax rates for LIPO are proved, 
as well as fast rates under an additional H\"older like condition. 
An adaptive version of LIPO is also introduced for the more realistic 
setup where the Lipschitz constant is unknown and has to be estimated along with the optimization. 
Similar theoretical guarantees are shown to hold for the adaptive LIPO algorithm 
and a numerical assessment is provided at the end of the paper
to illustrate the potential of this strategy 
with respect to state-of-the-art methods over typical benchmark problems for global optimization.
\end{abstract} 

\begin{keywords}
  global optimization, Lipschitz constant, statistical analysis, convergence rate bounds
\end{keywords}

\section{Introduction}
\label{submission}
In many applications such as complex system design or hyperparameter
calibration for learning systems, the goal is to optimize the output 
value of an unknown function with as few evaluations as possible.
Indeed, in such contexts, evaluating the performance
of a single set of parameters often requires 
numerical simulations or cross-validations with significant computational cost
and the operational constraints impose a sequential exploration of
the solution space with small samples. Moreover,
it can ge\-ne\-rally not be assumed that the function
has good properties such as linearity or convexity.
This generic problem of sequentially optimizing the output of an unknown and potentially
nonconvex function is often referred to as global optimization (\cite{pinter1991global}),
black-box optimization (\cite{jones1998efficient})
or derivative-free optimization (\cite{rios2013derivative}).
In particular, there is a large number of algorithms based on various
heuristics which have been introduced in order to address this problem,
such as genetic algorithms, 
model-based methods or Bayesian optimization. 
We focus here on
the smoothness-based approach to global optimization.
This approach is based on the simple observation that, in many applications,
the system presents some regularity with respects to the input.
In particular, the use of the Lipschitz constant,
first proposed in the seminal works of \cite{shubert1972sequential} 
and \cite{piyavskii1972algorithm}, initiated an active line of research and 
played a major role in the development of many efficient
global optimization algorithms such as DIRECT (\cite{jones1993lipschitzian}),
MCS (\cite{huyer1999global}) or more recently SOO (\cite{preux2014bandits}).
Convergence properties of global optimization methods have been developed 
in the works of \cite{valko2013stochastic} and \cite{munosmono} 
under local smoothness assumptions, but, up to our knowledge, 
such properties have not been considered in the case where only the global smoothness
of the function can be specified. An interesting question is how much 
global assumptions on regularity which cover in some sense local assumptions 
may improve the convergence of the latter.
In this work, we address the following questions:
(i) find the limitations and the best performance that can be achieved 
by any algorithm over the class of Lipschitz functions  and (ii) design efficient
and optimal algorithms for this class of problems.
More specifically, our contribution with regards to the above mentioned works
is twofold. First, we  introduce two novel algorithms for global optimization 
which exploit the global smoothness of the unknown function and
display good performance in typical benchmarks for optimization.
Second,
we show that these algorithms 
can achieve faster rates of convergence on globally smooth problems
than the previously known methods which only exploit the local smoothness of the function.
The rest of the paper is organized as follows.
In Section \ref{sec:global_opt}, we introduce the framework and provide generic results
about the convergence of global optimization algorithms.
In Section \ref{sec:lipopt}, we introduce and analyze the LIPO algorithm
which requires the knowledge of the Lipschitz constant.
In Section \ref{sec:adalipopt}, the algorithm is extended to the case
where the Lipschitz constant is not assumed to be previously known.
Finally, the adaptive version of the algorithm is compared
to other global optimization methods in Section \ref{sec:experiments}.
All proofs are postponed to the Appendix Section.

\section{Setup and preliminary results}
  \label{sec:global_opt}
  
  \subsection{Setup and notations}

%
  
  {\bf Setup.}
  Let $\X \subset \R^d$ be a compact and convex set with non-empty interior
  and let $f:\X \rightarrow \R$ be an unknown function which is only supposed
  to admit a  maximum over its input domain.
  The goal in global optimization consists in finding some point
  \[
    x^{\star} \in \underset{x \in \X}{\arg \max}~ f(x)
  \]
  with a minimal amount of function evaluations.
  The standard setup involves a sequential procedure which 
  starts by evaluating the function $f(X_1)$ at an initial point $X_1$ 
  and iteratively selects at each step $t \geq 1$ an evaluation point $X_{t+1}\in\X$ 
  which depends on the previous evaluations
  $(X_1, f(X_1)), \dots, (X_t, f(X_t)) $ and receives the evaluation
  of the unknown function $f(X_{t+1})$ at this point.
  After $n$ iterations, we consider that the algorithm returns an evaluation point
  $
   X_{\hatin} 
  $
  with $\hatin \in \arg\min_{i=1\dots n}f(X_i)$
  which has recorded the highest evaluation.
  The performance of the algorithm over the function $f$ is then
  measured after $n$ iterations through the difference between the value of the true maximum 
  and the value of the highest evaluation observed so far:
  \[
   \max_{x \in \X} f(x) - \max_{i=1 \dots n}f(X_i).
  \]
  The analysis provided in the paper considers   that the number $n$ of
  evaluation points is not fixed and it is assumed that function
  evaluations are noiseless.
  Moreover, the assumption made on the unknown function $f$
  throughout the paper is that it has 
  a finite Lipschitz constant $k$, {\it i.e.}
  \[
   \exists k\geq0~~\normalfont{\text{s.t.~}} 
   |f(x) -f(x')| \leq k \cdot \norm{x-x'}_2~
   \forall (x,x')\in\X^2\!.
  \]
  Before starting the analysis, we point out that 
  similar settings have also been studied in the works of 
  \cite{munosmono} and \cite{malherbe}
  and that
  \cite{valko2013stochastic} and \cite{grill2015black} considered the noisy scenario.\\

  \noindent {\bf Notations.} For all $x=(x_1, \ldots, x_d) \in \R^d$, 
  we denote by $\norm{x}^2_2= \sum_{i=1}^d x_i^2$ the standard $\ell_2$-norm  and
  by $B(x,r)=\{x' \in \R^d: \norm{x-x'}_2 \leq r \}$
  the  ball centered in $x$ of radius $r\geq 0$.
  For any bounded set $\X \subset \R^d$, we define its inner-radius as
  $\textrm{rad}(\X)=\max \{r>0: \exists x\in \X \textrm{~such that~} B(x,r)\subseteq \X \}$,
  its diameter as $\diam{\X}=\max_{(x,x')\in \X^2}\norm{x-x'}_2$
  and we denote by $\mu(\X)$ its volume where $\mu(\cdot)$ stands for the Lebesgue measure.
  In addition,  $\text{Lip}(k) := \{f : \X \to \R \text{~s.t.~} | f(x) -f(x') |
  \leq k \cdot \norm{x -x'}_2, ~\forall  (x,x')\in \X^2\}$
  denotes the class of $k$-Lipschitz functions defined on $\X$
  and $\bigcup_{k\geq 0} \Lip(k)$  denotes the set of Lipschitz continuous functions.
  Finally,  $\mathcal{U}(\X)$ stands for
  the uniform distribution over a bounded measurable domain $\X$,
  $\mathcal{B}(p)$ for the Bernoulli distribution of parameter $p$,
  $\indic{\cdot}$ for the standard indicator function taking values in $\{0,1 \}$
  and the notation $X\sim \cal{P}$ means that the random variable $X$
  has the distribution $\cal{P}$.\\


  \subsection{Preliminary results}
  \label{sec:generic_results}
  
  In order to design efficient procedures,
    we first investigate the best performance that can be achieved
    by any algorithm over the class of Lipschitz functions.\\

  \noindent {\bf Sequential algorithms and optimization consistency.} 
  We first describe the sequential procedures that are considered here 
  and the corresponding concept of consistency in the sense of global optimization.
  
  \sloppy
    \begin{definition}
  \label{def:algo}
  {\sc (Sequential algorithm)} 
  The class of optimization algorithms
  we consider, denoted in the sequel by $\A$,
  contains all the algorithms $A=\{A_t\}_{t\geq 1}$
  completely described by:
  \begin{itemize}
    \item[1.] A distribution $A_1$ taking values in $\X$ which allows to generate
    the first evaluation point, {\it i.e.~}$X_1 \sim A_1$;
    \item[2.] An infinite collection of parametric distributions $\{A_t \}_{t\geq 2}$ 
    taking values in $\X$ and
    based on the previous evaluations 
    which define the iteration loop,
        {\it i.e.}~$X_{t+1} |X_1\dots,X_t
    \sim A_{t+1}( (X_1, f(X_1)),\dots, (X_t, f(X_t)))$.
  \end{itemize}
  \end{definition}
  Note that this class of algorithms also includes the deterministic methods
  in which case the distributions $\{ A_t\}_{t \geq 1}$ are degenerate.
  The next definition introduces the notion of asymptotic convergence.
  
      \begin{definition}
    \label{def:consistency}
  {\sc (Optimization Consistency)}
   A global optimization algorithm $A$ is said to be consistent over a set $\mathcal{F}$ of 
   real-valued functions admitting a maximum over their input domain $\X$ if and only if
   \[
    \forall f \in \mathcal{F},~\max_{i=1\dots n}f(X_i)
    \xrightarrow{p} \max_{x \in \X} f(x)
   \]
  where $X_1,\dots, X_n$ denotes a sequence of $n$ evaluations points generated 
  by the algorithm $A$  over the function $f$. 
  \end{definition}
  
  \noindent {\bf Asymptotic performance.} 
  We now investigate the minimal conditions 
  for a sequential algorithm to achieve asymptotic convergence.
  Of course, it is expected that  a global optimization algorithm 
  should be consistent at least for the class of Lipschitz functions
  and the following result reveals a necessary and sufficient condition (NSC)
  in this case.

  \begin{proposition}
  \label{prop:consistency_equivalence}
  {\sc (Consistency NSC)}
  A global optimization algorithm $A$ is consistent over 
  the set of Lipschitz functions if and only if
  \[
    \forall f \in \textstyle{\bigcup_{k \geq 0}\normalfont{\text{Lip}}(k)},~~
    \displaystyle{\sup_{x \in \X} \min_{i=1 \dots n} \lVert X_i - x\rVert_2
    \xrightarrow{p} 0.}
  \]
  \end{proposition}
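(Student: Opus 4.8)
The plan is to prove the two implications separately, the forward one (space-filling $\Rightarrow$ consistency) being a one-line Lipschitz estimate and the converse being the substantive part, handled by contraposition through a perturbation-and-coupling argument.

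First, suppose the space-filling condition holds, i.e.\ $\sup_{x\in\X}\min_{i\le n}\norm{X_i-x}_2 \xrightarrow{p} 0$, and fix any $f\in\Lip(k)$ with maximizer $x^\star$. Letting $i^\star=\arg\min_i\norm{X_i-x^\star}_2$, the $k$-Lipschitz property gives $0 \le \max_{x\in\X} f(x) - \max_{i\le n} f(X_i) \le f(x^\star)-f(X_{i^\star}) \le k\,\min_i\norm{X_i-x^\star}_2 \le k\,\sup_{x\in\X}\min_i\norm{X_i-x}_2$. Since the right-hand side tends to $0$ in probability, so does the nonnegative quantity on the left, which is precisely consistency at $f$.

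For the converse I would argue by contraposition: assuming the space-filling condition fails for some $f_1\in\Lip(k_1)$, I will build a Lipschitz function $\tilde f$ on which the algorithm is not consistent. Failure of convergence in probability yields $\varepsilon,\delta>0$ and infinitely many $n$ with $\Pr_{f_1}(\sup_x\min_i\norm{X_i-x}_2>\varepsilon)\ge\delta$, where $\Pr_{f_1}$ refers to the run of $A$ on $f_1$. The first step is to localize the random ``hole'': fix a finite $(\varepsilon/2)$-net $c_1,\dots,c_N\in\X$ of the compact set $\X$; whenever the fill distance exceeds $\varepsilon$ there is a point $x^\star\in\X$ all of whose samples lie at distance $>\varepsilon$, and the net point $c_j$ nearest to $x^\star$ then satisfies $\norm{X_i-c_j}_2>\varepsilon/2$ for every $i$, so $B(c_j,\varepsilon/2)$ is free of samples. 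A union bound over the $N$ balls together with the pigeonhole principle over the infinite set of admissible $n$ produces a single fixed center $c:=c_{j_0}$ for which the event $B_n=\{X_1,\dots,X_n\notin B(c,\varepsilon/2)\}$ has $\Pr_{f_1}(B_n)\ge\delta/N$ for infinitely many $n$.

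The second step plants a spike in this ball. With $M_1=\max_\X f_1$ and $m=\min_\X f_1$, I set $\tilde f(x)=\max\big(f_1(x),\,(M_1+1)-L\norm{x-c}_2\big)$ with slope $L=2(M_1+1-m)/\varepsilon$; as a maximum of two Lipschitz functions $\tilde f\in\Lip(\max(k_1,L))$, it coincides with $f_1$ outside $B(c,\varepsilon/2)$ (there the cone has dropped to $m\le f_1$), yet $\max_\X\tilde f\ge\tilde f(c)=M_1+1$. Finally I would couple the runs of $A$ on $f_1$ and on $\tilde f$ via shared internal randomness: as long as no sample has entered $B(c,\varepsilon/2)$ the observed values agree, so the trajectories are identical on $B_n$, whence $\Pr_{\tilde f}(B_n)\ge\delta/N$; on $B_n$ one has $\max_{i\le n}\tilde f(X_i)=\max_{i\le n}f_1(X_i)\le M_1$, so the gap $\max_\X\tilde f-\max_{i\le n}\tilde f(X_i)\ge 1$ with probability $\ge\delta/N$ for infinitely many $n$, contradicting consistency at $\tilde f$. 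I expect the genuinely delicate points to be the coupling and the reduction of the randomly located, adaptively avoided hole to a single deterministic ball (the net/pigeonhole step, plus checking measurability of the fill distance, which follows from its continuity in $(X_1,\dots,X_n)$); the spike construction itself is routine once $\tilde f$ is allowed a Lipschitz constant larger than $k_1$.
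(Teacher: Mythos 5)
Your proof is correct and follows essentially the same route as the paper's: the forward direction by an elementary estimate, and the converse by localizing a single fixed ball that the algorithm avoids with probability bounded away from zero (your covering-plus-pigeonhole step, where the paper instead exploits monotonicity in $n$ of the fill-distance probability), planting a Lipschitz spike supported in that ball, and concluding via indistinguishability of the two runs. Your $(\varepsilon/2)$-net bookkeeping and the explicit shared-randomness coupling are, if anything, slightly more careful than the paper's own write-up of the same argument.
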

  A crucial consequence of the latter proposition is that
  the design of any consistent method ends up to covering the whole input space
  regardless of the function values.
  The example below introduces the most popular space-filling method
  which will play a central role in our  analysis.

  \begin{example}
  \label{ex:PRS}
   {\sc (Pure Random Search)} The Pure Random Search {\sc (PRS)}
   consists in sequentially evaluating the function over a sequence 
   of points $X_1,X_2,X_3,\dots$
   uniformly and independently distributed over the input space $\X$.
   For this method, a simple union bound indicates that 
   for all $n\in\mathbb{N}^{\star}$ and $\delta \in (0,1)$,
   we have with probability at least $1-\delta$ and
   independently of the function values,
   $$
    \sup_{x \in \X} \min_{i=1 \dots n} \lVert X_i - x\rVert_2
    \leq  \diam{\X} \cdot
    \left( \frac{\ln(n/\delta)+d\ln(d)}{n} \right)^{\frac{1}{d}}\!\!\!.
   $$
  \end{example}
  In addition to this result, 
  we point out that the covering rate of any method 
  can easily be shown to be at best of order $\Omega(n^{-1/d})$
  and thus  subject to to the curse of dimensionality
  by means of covering arguments.
  Keeping in mind the equivalence of Proposition \ref{prop:consistency_equivalence}, 
  we may now turn to the nonasymptotic analysis.\\

  \noindent {\bf Finite-time performance.}
  We investigate here
  the best performance that can be achieved by any algorithm
  with a finite number of function evaluations. 
  We start by casting a negative result stating that any algorithm 
  can suffer, at any time, an arbitrarily large loss over the 
  class of Lipschitz functions.

  \begin{proposition}
    \label{prop:unbounded_error}
    Consider any global optimization algorithm $A$.
    Then, for any constant $C>0$ arbitrarily large,
    any $n\in\mathbb{N}^{\star}$ and $\delta \in (0,1)$, 
    there exists a function 
    $\tilde{f} \in\bigcup_{k\geq0}\Lip(k)$ only depending on $(A, C, n, \delta)$
    for which we have
    with probability at least $1-\delta$,
    \[
      C \leq \max_{x \in \X} \tilde{f}(x) - \max_{i=1 \dots n}\tilde{f}(X_i).
    \]
  \end{proposition}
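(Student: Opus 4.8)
The plan is to run an adversarial ``needle in a haystack'' argument. Since the algorithm issues only $n$ evaluations while the class $\bigcup_{k\ge0}\Lip(k)$ permits an arbitrarily large Lipschitz constant, I would hide a narrow spike of height $C$ in a region that $A$ queries with probability at most $\delta$. The one genuine difficulty is that $A$ is adaptive, so the law of its query points depends on the values it observes; I therefore construct the bad function so that it is \emph{indistinguishable from the zero function on everything the algorithm actually sees}, and transfer the probability estimate by coupling.

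First I would run $A$ on the constant function $f_0\equiv 0\in\Lip(0)$ and let $P_0$ be the law of the resulting query points $X_1,\dots,X_n$. Since $\X$ has non-empty interior, fix a ball $B(x_0,r)\subseteq\X$ and, for a radius $\rho>0$ to be chosen, pack $N$ pairwise disjoint balls $B(c_1,\rho),\dots,B(c_N,\rho)\subseteq B(x_0,r)$ with $N\ge n/\delta$; this is possible for $\rho$ small enough because a ball of radius $r$ admits order $(r/\rho)^d$ disjoint sub-balls of radius $\rho$. As the balls are disjoint, each $X_i$ meets at most one of them, so $\sum_{j=1}^{N}\Pr_{P_0}[\exists i\le n: X_i\in B(c_j,\rho)]\le \mathbb{E}_{P_0}[\#\{i: X_i\text{ in some }B(c_j,\rho)\}]\le n$. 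Averaging over $j$, some center $c_{j^\star}=:x^\star$ satisfies $\Pr_{P_0}[\exists i\le n: X_i\in B(x^\star,\rho)]\le n/N\le\delta$.

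Then I would set $\tilde f(x):=\max\bigl(0,\;C-(C/\rho)\,\norm{x-x^\star}_2\bigr)$, a tent function which is $(C/\rho)$-Lipschitz, vanishes outside $B(x^\star,\rho)$, attains $\max_{x\in\X}\tilde f=C$ at $x^\star$, and depends only on $(A,C,n,\delta)$ through $x^\star$ and $\rho$, as required. Crucially $\tilde f$ agrees with $f_0$ on $\X\setminus B(x^\star,\rho)$. I would then couple the two runs by feeding $A$ the same internal randomness whether it optimizes $f_0$ or $\tilde f$: the first point coincides because $A_1$ ignores the function, and as long as every query so far has avoided $B(x^\star,\rho)$ the observed values under $\tilde f$ equal those under $f_0$ (both zero), so the next query distribution, and hence the next point, again coincide. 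By induction, on the event $E_0=\{X_i\notin B(x^\star,\rho)\ \forall i\}$, which has $P_0$-probability at least $1-\delta$ by the previous step, the $\tilde f$-run produces exactly the same points and so also avoids the spike; there $\tilde f(X_i)=0$ for all $i$, whence $\max_{x}\tilde f(x)-\max_{i=1\dots n}\tilde f(X_i)=C$.

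The step I expect to be the main obstacle is this last coupling: the pigeonhole estimate and the explicit tent construction are routine, but making the adaptivity argument rigorous—so that the bound proved under the harmless function $f_0$ genuinely carries over to $\tilde f$—is where care is needed, since one must verify that the two processes stay identical for as long as they remain outside the hidden support.
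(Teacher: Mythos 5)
Your proposal is correct and follows essentially the same route as the paper's proof: run the algorithm on the zero function, use a pigeonhole/packing argument (the paper packs $\lceil(n/\delta)^{1/d}\rceil^d$ disjoint hypercubes inside $\X$ rather than balls) to find a region visited with probability at most $\delta$, plant a Lipschitz spike of height $C$ there, and transfer the probability bound via indistinguishability of the two functions on the queried points. Your explicit coupling induction is simply a more rigorous rendering of the step the paper states informally as ``$\tilde f$ and $f_0$ can not be distinguished over $\X/H_{I^{\star}}$.''
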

  This result might not be very surprising since the class of Lipschitz functions
  includes functions with finite, but arbitrarily large variations.
  When considering the subclass of functions with fixed Lipschitz constant,
  it becomes possible to derive finite-time bounds 
  on the minimax rate of convergence.

  \begin{proposition} 
  \label{prop:minimax}
  {\sc (Minimax rate)}, adapted from \cite{bull2011convergence}.
  For any Lipschitz constant $k \geq 0$ 
  and any 
  $n \in \mathbb{N}^{\star}$, 
  the following inequalities hold true:
   \begin{align*}
     c_1 \cdot k \cdot n^{-\frac{1}{d}}\leq
     \inf_{A \in  \A}
     \sup_{f \in \normalfont{\text{Lip}}(k)}
     \esp{ \max_{x \in \X}f(x) - \max_{i=1\dots n}f(X_i) }
     \leq c_2 \cdot k \cdot n^{ -\frac{1}{d} }
   \end{align*}
   where $c_1 = \rad{\X}/(8\sqrt{d})$, $c_2 = \diam{\X} \times d!$
   and the expectation is taken over a sequence $X_1, \dots, X_n$ of
   $n$ evaluation points generated by the algorithm $A$ over $f$.
  \end{proposition}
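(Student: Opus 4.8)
The plan is to establish the two inequalities by completely different means: the right-hand bound by displaying one explicit space-filling algorithm, and the left-hand bound by a packing-and-fooling construction. Both rest on the elementary reduction that for \emph{any} evaluation points $X_1,\dots,X_n$ and any $f\in\Lip(k)$ the loss is dominated by the covering radius of the sample. Indeed, taking $x^\star\in\arg\max_x f(x)$ and letting $i^\star$ attain $\min_i\norm{X_i-x^\star}_2$, the Lipschitz property gives
\[
\max_{x\in\X}f(x)-\max_{i=1\dots n}f(X_i)\;\le\; f(x^\star)-f(X_{i^\star})\;\le\; k\cdot\sup_{x\in\X}\min_{i=1\dots n}\norm{X_i-x}_2 ,
\]
so the upper bound becomes a pure covering question, independent of the function values.

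For the upper bound it suffices to exhibit a single algorithm meeting the estimate, and I would use a \emph{deterministic} uniform grid. Enclose $\X$ in an axis-aligned cube of side $\diam{\X}$ (possible since each coordinate-extent of $\X$ is at most its diameter), partition the cube into $m^d$ subcubes with $m=\lfloor n^{1/d}\rfloor$, and query the $m^d\le n$ cell centers. Every point of the cube, hence of $\X$, then lies within $\tfrac{\sqrt d}{2}\diam{\X}/m$ of a queried node, and since $\lfloor n^{1/d}\rfloor^{-1}\le 2\,n^{-1/d}$ for every $n\ge1$, the covering radius is at most $\sqrt d\,\diam{\X}\,n^{-1/d}$. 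Substituting into the reduction inequality and using $\sqrt d\le d!$ yields the claimed bound (with room to spare); as the grid is deterministic, no expectation is involved.

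For the lower bound I would run the classical needle-in-a-haystack argument, adapted to adaptive and possibly randomized strategies. First pack $\X$ with $N=2n$ pairwise disjoint balls $B(c_1,r),\dots,B(c_N,r)$: inscribing a cube of side $2\rad{\X}/\sqrt d$ in the ball $B(x_0,\rad{\X})\subseteq\X$ and spacing centers by $2r$ shows one may take $r$ of order $\rad{\X}\,d^{-1/2}n^{-1/d}$. To each ball attach the conical bump $\phi_j(x)=\max\{0,\,k(r-\norm{x-c_j}_2)\}$, which is $k$-Lipschitz, vanishes outside $B(c_j,r)$, and attains maximum $kr$ at $c_j$. The crucial point is that $f=\phi_j$ and $f\equiv 0$ generate identical observation sequences up to the first query falling inside $B(c_j,r)$; coupling the two runs through the shared internal randomness, the event that $B(c_j,r)$ is ever queried has the same probability under both functions, and the zero-function run places its $n$ queries in at most $n$ of the $N=2n$ disjoint balls. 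Hence for $j$ drawn uniformly the per-ball hit probability is at most $n/N=\tfrac12$; on the complementary ``miss'' event the returned value is $0$ while $\max_x\phi_j=kr$, so
\[
\sup_{f\in\Lip(k)}\esp{\max_{x}f-\max_i f(X_i)}\;\ge\;\frac1N\sum_{j=1}^N\esp{\max_x\phi_j-\max_i\phi_j(X_i)}\;\ge\;\tfrac12\,kr ,
\]
and substituting the value of $r$ produces $c_1\,k\,n^{-1/d}$.

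The main obstacle I anticipate is entirely on the lower-bound side: making the indistinguishability argument watertight for the whole class $\A$ of adaptive, randomized algorithms. One must verify that, before any query enters $B(c_j,r)$, the joint law of the evaluation points is literally the same under $\phi_j$ as under the zero function, so that the miss probability computed on the trivial function transfers to $\phi_j$; the clean way to do this is to couple the two runs along the shared randomness until the first query hits the ball, after which the deterministic counting estimate ``$n$ points meet at most $n$ disjoint balls'' closes the bound. Once this coupling is in place, pinning down the precise constants ($8\sqrt d$ in $c_1$, $d!$ in $c_2$) is a routine bookkeeping of the floor functions and the cube-versus-$\X$ containments.
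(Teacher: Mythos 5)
Your proof is correct, and it diverges from the paper's in one place worth noting. On the lower bound you and the paper run essentially the same needle-in-a-haystack argument: the paper inscribes a cube in $B(x_0,\rad{\X})\subseteq\X$, partitions it into $\lceil(2n)^{1/d}\rceil^d\geq 2n$ cells, runs the algorithm on the zero function, uses an expectation/pigeonhole count to exhibit a single cell whose interior is missed with probability at least $1/2$, plants the Lipschitz cone there, and concludes by Markov's inequality; you instead average the miss probability over all $2n$ bumps and lower-bound the supremum by the average. The two are interchangeable (pigeonhole first versus average last), and your coupling formalization of ``indistinguishable until the ball is hit'' is exactly the paper's statement that $\tilde f$ and $f_0$ ``can not be distinguished'' outside the chosen cell; your constants also land precisely on $c_1=\rad{\X}/(8\sqrt d)$. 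The genuine divergence is the upper bound: the paper bounds the infimum by the expected loss of Pure Random Search, converting the high-probability covering bound of Proposition~\ref{prop:cvg_prs} into an expectation via $\esp{X}=\int_0^{\infty}\P(X\geq t)\,\mathrm{d}t$ and a Gamma integral, whereas you use a deterministic grid of $\lfloor n^{1/d}\rfloor^d$ cell centers and a purely geometric covering-radius estimate. Your route is more elementary (no randomness, no tail integration, and a constant $\sqrt d\,\diam{\X}$ that is smaller than the stated $c_2=\diam{\X}\times d!$), while the paper's route buys a fact its narrative reuses: PRS itself attains the minimax rate, which then transfers to LIPO through Proposition~\ref{prop:fasterprs}. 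One detail you must patch: Definition~\ref{def:algo} requires evaluation points to lie in $\X$, and the centers of your enclosing-cube grid need not belong to $\X$. Replace each center $c_i$ by its projection $\Pi_{\X}(c_i)$ onto the compact convex set $\X$ and use that $B(c_i,\epsilon)\cap\X\subseteq B(\Pi_{\X}(c_i),\epsilon)$, which is exactly what the paper proves inside Corollary~\ref{coro:covering_number}; with that one-line fix your grid is an admissible algorithm and the argument closes.
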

  We stress that this minimax convergence rate of order $\Theta(n^{-1/d})$
  can still be achieved by any method with
  an optimal covering rate of order $O(n^{-1/d})$. Observe indeed that since
  $
   \esp{\max_{x \in \X}f(x) - \max_{i=1\dots n}f(X_i)}
   \leq k\times \esp{ \sup_{x \in \X} \min_{i=1\dots n}\norm{x - X_i}_2 }
  $
  for all  $f \in \Lip(k)$,~
  then an optimal covering rate necessarily implies minimax efficiency.
  However, as it can be seen by examining the proof of Proposition \ref{prop:minimax}
  provided in the Appendix Section,
  the functions constructed to prove the limiting bound of $\Omega(n^{-1/d})$
  are spikes which are almost constant  everywhere 
  and  do not present a large interest from a practical perspective.
  In particular, we will see in the sequel that  one can 
  design:
  \begin{itemize}
   \item[I)] An  algorithm with fixed constant $k\!\geq\!0$
   which achieves minimax efficiency
   and also presents exponentially decreasing rates
  over a large subset of functions, as opposed to space-filling	 methods
  (LIPO, Section \ref{sec:lipopt}).
  \item[II)] A consistent algorithm which does not require the knowledge of the Lipschitz
  constant and presents comparable performance as
  when the constant $k$ is assumed to be known (AdaLIPO, Section \ref{sec:adalipopt}).
  \end{itemize}


  \section{Optimization with fixed Lipschitz constant}
  \label{sec:lipopt}
  
  In this section, we consider the problem of optimizing 
  an unknown function $f$ given the knowledge that
  $f \in \text{Lip}(k)$ for a given $k\geq 0$.

  \subsection{The LIPO Algorithm}

\newcommand{\nosemic}{\renewcommand{\@endalgocfline}{\relax}}
\newcommand{\dosemic}{\renewcommand{\@endalgocfline}{\algocf@endline}}
\newcommand{\pushline}{\Indp}
\newcommand{\popline}{\Indm\dosemic}
\let\oldnl\nl
\newcommand{\nonl}{\renewcommand{\nl}{\let\nl\oldnl}}
\SetNlSty{textbf}{}{.}

\RestyleAlgo{boxed}
\begin{figure}[t!]
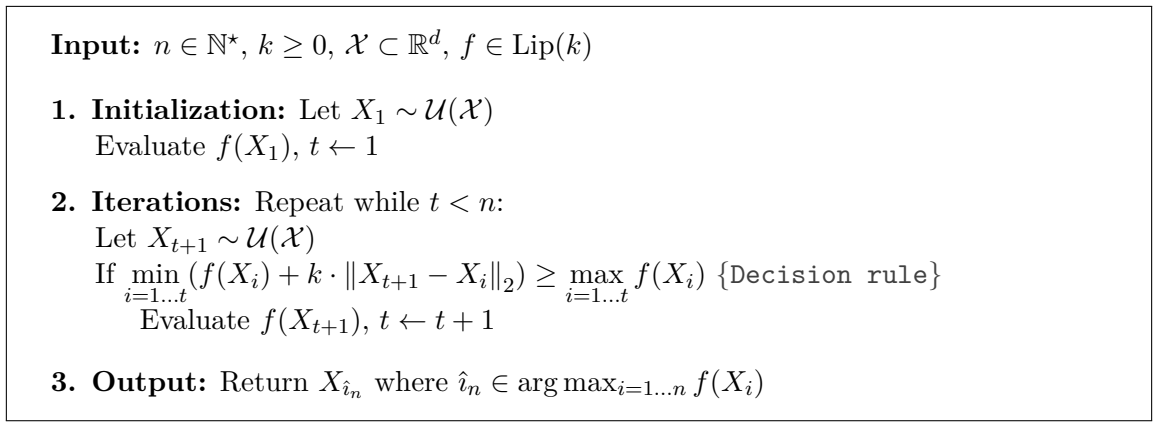

  \begin{algorithm}[H]
  \vspace{0.5em}
  \textbf{Input:} $n\in\mathbb{N}^{\star}$, $k\geq0$, $\X\subset\R^d$, $f\in\Lip(k)$\\
  \vspace{1em}
  \textbf{1. Initialization:} Let $X_1 \sim \mathcal{U}(\X)$\\
  \nonl \pushline  Evaluate $f(X_1)$, $t \leftarrow 1$\\
  \vspace{0.7em}
  \nl \Indm \textbf{2. Iterations:} Repeat while $t <n$: \\
  \pushline \nonl Let $X_{t+1} \sim \mathcal{U}(\X)$\\
  \nonl If  $\displaystyle{\min_{i=1\dots t} (f(X_i) 
	+ k \cdot \norm{X_{t+1}-X_i}_2 )
	\geq \max_{i=1\dots t} f(X_i)}$~\textcolor{black!75}{\tt \{Decision rule\}}\\
  \pushline \nonl Evaluate $f(X_{t+1})$, $t \leftarrow t+1$\\\vspace{0.2em}
  \vspace{0.7em}
  \Indm \Indm \textbf{3. Output:} Return $X_{\hatin}$ where 
  $\hatin \in \arg\max_{i=1 \ldots n}f(X_i)$
  \vspace{0.5em}
  \end{algorithm}
  \vspace{-0.5em}
  \caption{The  LIPO algorithm}
  \label{fig:rankopt}
\end{figure}

  The inputs of the LIPO algorithm 
  (displayed in Figure \ref{fig:rankopt}) are a number $n$ of function evaluations,
  a Lipschitz constant $k\geq0$, the input space $\X$ and the unknown function $f\in\Lip(k)$.
  At each iteration $t \geq 1$, a random variable
  $X_{t+1}$ is sampled uniformly over the input space $\X$ and the algorithm 
  decides whether or not to evaluate the function at this point.
  Indeed, it evaluates the function over $X_{t+1}$ if and only if the value 
  of the upper bound on possible values
  $UB_{k,t}:x \mapsto \min_{i=1\dots t} f(X_i)$ $+ k$ $\cdot\norm{x - X_i}_2$
  evaluated at this point and
  computed from the previous evaluations,
  is at least equal to the value of the 
  best evaluation observed so far $\max_{i=1\dots t}f(X_i)$.
  To illustrate how the decision rule operates in practice,
  an example of the computation of the upper bound can be found in Figure \ref{fig:ub}.\\

  \noindent More formally, the mechanism behind 
  the decision rule can be explained
  using the active subset of consistent functions previously considered in active learning
  (see, {\it e.g.}, \cite{dasgupta2011two} or \cite{hanneke2011rates}).
  \begin{definition}
  \label{def:consistent_functions}
   {\sc (Consistent functions)}
   The active subset of $k$-Lipschitz functions 
   consistent with the unknown function $f$ over a sample 
   $(X_1,f(X_1)), \dots,(X_t,f(X_t))$
  of $t\geq1$ evaluations is defined as follows:
     \[
    \mathcal{F}_{k,t} := \{ g \in \Lip(k) : \forall i \in\{ 1 \dots t\},
    ~ g(X_i) = f(X_i)\}.
  \]
  \end{definition}
  One can indeed recover from this definition
  the subset of points which can actually maximize the target function $f$.
    \begin{definition}
    \label{def:potential}
   {\sc (Potential maximizers)}
   Using the same notations as in Definition \ref{def:consistent_functions},
   we define the subset of potential maximizers 
   estimated over any sample $t\geq 1$ evaluations
   with a constant $k\geq0$
   as follows:
  \[
   \X_{k,t} := \left\{x \in \X: \exists g \in \F_{k,t} \normalfont{\text{~\!~such that~~\!}} x 
   \in \underset{x \in \X}{\arg \max}~g(x) \right\}\!\!.
  \]
  \end{definition}
  We may now provide an equivalence which makes
  the link with the decision rule of the LIPO algorithm.
    \begin{lemma}
  \label{lem:potential}
   If  $\X_{k,t}$ denotes the set of potential maximizers defined above,
   then we have the following equivalence:
   \[
    x \in \X_{k,t} \Leftrightarrow \min_{i=1\dots t}  f(X_i) + k \cdot \norm{x-X_i}_2
    \geq \max_{i=1\dots t}f(X_i).
   \]
  \end{lemma}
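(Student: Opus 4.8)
The plan is to prove the two implications separately, and the key fact underlying both is that for every consistent function $g\in\F_{k,t}$ the map appearing in the decision rule is a genuine pointwise upper bound on $g$. Indeed, since $g$ is $k$-Lipschitz and $g(X_i)=f(X_i)$ for all $i$, we have $g(x)\leq g(X_i)+k\cdot\norm{x-X_i}_2=f(X_i)+k\cdot\norm{x-X_i}_2$, and taking the minimum over $i$ gives $g(x)\leq \min_{i=1\dots t} f(X_i)+k\cdot\norm{x-X_i}_2$ for every $x\in\X$. This single inequality immediately settles the forward direction: if $x\in\X_{k,t}$, pick $g\in\F_{k,t}$ with $x\in\arg\max_{z\in\X}g(z)$; then $\min_{i} f(X_i)+k\cdot\norm{x-X_i}_2\geq g(x)=\max_{z\in\X}g(z)\geq g(X_j)=f(X_j)$ for each $j$, and maximizing over $j$ yields the decision-rule inequality.

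For the backward direction the task is to exhibit, from the inequality alone, a $k$-Lipschitz function consistent with the data that is actually maximized at $x$. Writing $v:=\min_{i=1\dots t} f(X_i)+k\cdot\norm{x-X_i}_2$, I would set
\[
  g(z):=\max\Big(\max_{i=1\dots t}\big(f(X_i)-k\cdot\norm{z-X_i}_2\big),\; v-k\cdot\norm{z-x}_2\Big).
\]
Being a maximum of $k$-Lipschitz maps, $g\in\Lip(k)$. Consistency $g(X_j)=f(X_j)$ follows because at $z=X_j$ the envelope term equals $f(X_j)$ (the index $i=j$ gives $f(X_j)$, and every other index gives at most $f(X_j)$ since $f\in\Lip(k)$), while the cone term satisfies $v-k\cdot\norm{X_j-x}_2\leq f(X_j)$, which is just the inequality $v\leq f(X_j)+k\cdot\norm{x-X_j}_2$ coming straight from the definition of $v$.

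It then remains to check that $x$ is a global maximizer of $g$. First, $g(x)=v$: the cone term equals $v$ at $z=x$, and the envelope term $\max_i(f(X_i)-k\cdot\norm{x-X_i}_2)$ does not exceed $v$, which one verifies via the triangle inequality and $f\in\Lip(k)$ (equivalently, the upper envelope always dominates the lower envelope). Second, and this is where the hypothesis enters decisively, $g(z)\leq v$ for all $z\in\X$: the cone term is bounded by $v$ trivially, and the envelope term is bounded by $\max_i f(X_i)$, which the assumption $\min_i f(X_i)+k\cdot\norm{x-X_i}_2\geq\max_i f(X_i)$ tells us is at most $v$. Hence $g(z)\leq v=g(x)$, so $x\in\arg\max_{z\in\X}g(z)$ and therefore $x\in\X_{k,t}$.

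I expect the only real obstacle to be the choice of interpolant in the backward direction. The intuitive candidate, the upper envelope $z\mapsto\min_i f(X_i)+k\cdot\norm{z-X_i}_2$, does pass through the point $(x,v)$ but is in general maximized far away from $x$ (its values grow as $z$ leaves the data), so it fails to certify $x$ as a maximizer. Taking instead the lower envelope capped by the cone $v-k\cdot\norm{z-x}_2$ is precisely what pins the global maximum at $x$, and the threshold condition $v\geq\max_i f(X_i)$ is exactly what forbids any competing point from exceeding the value $v$.
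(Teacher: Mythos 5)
Your proof is correct, and it differs from the paper's in the choice of witness for the backward implication. The forward direction is the same argument the paper leaves implicit: for $g\in\F_{k,t}$, the Lipschitz property plus consistency give $g(z)\leq \min_{i=1\dots t}(f(X_i)+k\cdot\norm{z-X_i}_2)$, and a maximizer value dominates every $f(X_j)$. For the converse, the paper exhibits the single function $\hat{f}:z\mapsto \min\bigl(\max_{i=1\dots t}f(X_i),\ \min_{i=1\dots t}(f(X_i)+k\cdot\norm{z-X_i}_2)\bigr)$, i.e.\ the upper envelope capped at the best observed value: it lies in $\F_{k,t}$, and its set of maximizers is exactly $\{z\in\X:\min_{i}(f(X_i)+k\cdot\norm{z-X_i}_2)\geq\max_i f(X_i)\}$, so one fixed function certifies the equivalence for every $x$ simultaneously. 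You instead build, for each fixed $x$ satisfying the decision rule, the function $g(z)=\max\bigl(\max_{i}(f(X_i)-k\cdot\norm{z-X_i}_2),\ v-k\cdot\norm{z-x}_2\bigr)$ with $v=\min_i(f(X_i)+k\cdot\norm{x-X_i}_2)$ — the lower envelope joined with a cone of apex $(x,v)$. Both constructions rest on the same two facts (the lower envelope never exceeds the upper envelope, by the triangle inequality and $f\in\Lip(k)$; and the threshold hypothesis $v\geq\max_i f(X_i)$), and both need $f\in\Lip(k)$ for the consistency check, which you verify correctly. What your construction buys: it proves the slightly stronger, pointwise statement that some consistent function actually attains the upper-envelope value $v=UB_{k,t}(x)$ at $x$, i.e.\ the upper bound is tight — the paper's flat-topped witness only produces a maximizer at level $\max_i f(X_i)$. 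What the paper's construction buys: a single, $x$-independent witness and a one-line identification of its argmax with the decision-rule region. Your closing remark about why the uncapped upper envelope fails as a witness is exactly the right diagnosis; capping it (the paper) and replacing it by a lower-envelope-plus-cone (yours) are the two natural fixes.
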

  Hence, we deduce from this lemma 
  that the algorithm only evaluates the function
  over points that still have a chance to be a maximizer of the unknown function.

\begin{figure}[!t]
\begin{center}
\includegraphics[height = 2.3cm, width=4cm ]{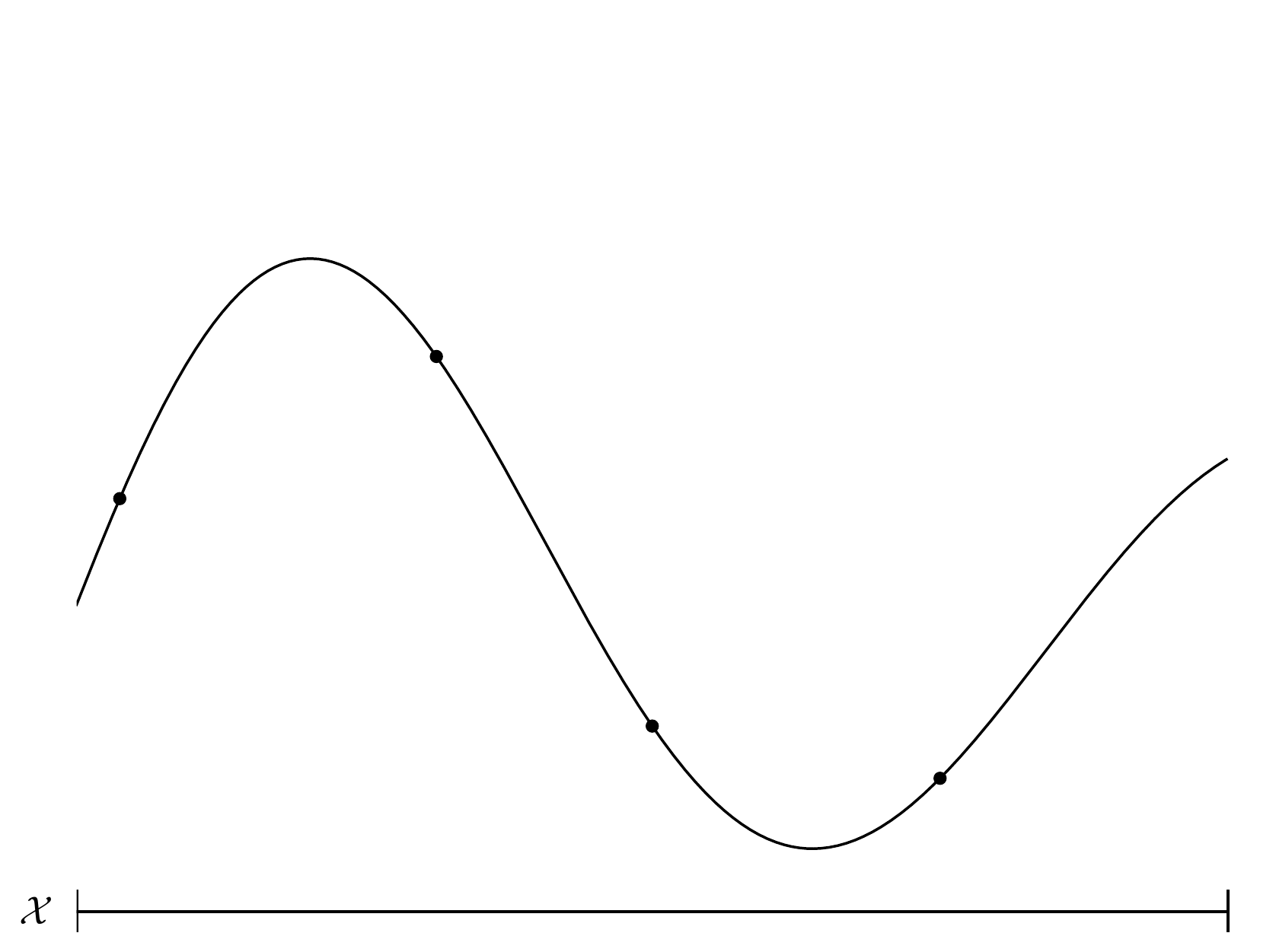}  
~~~~
\includegraphics[height = 2.3cm, width=4cm]{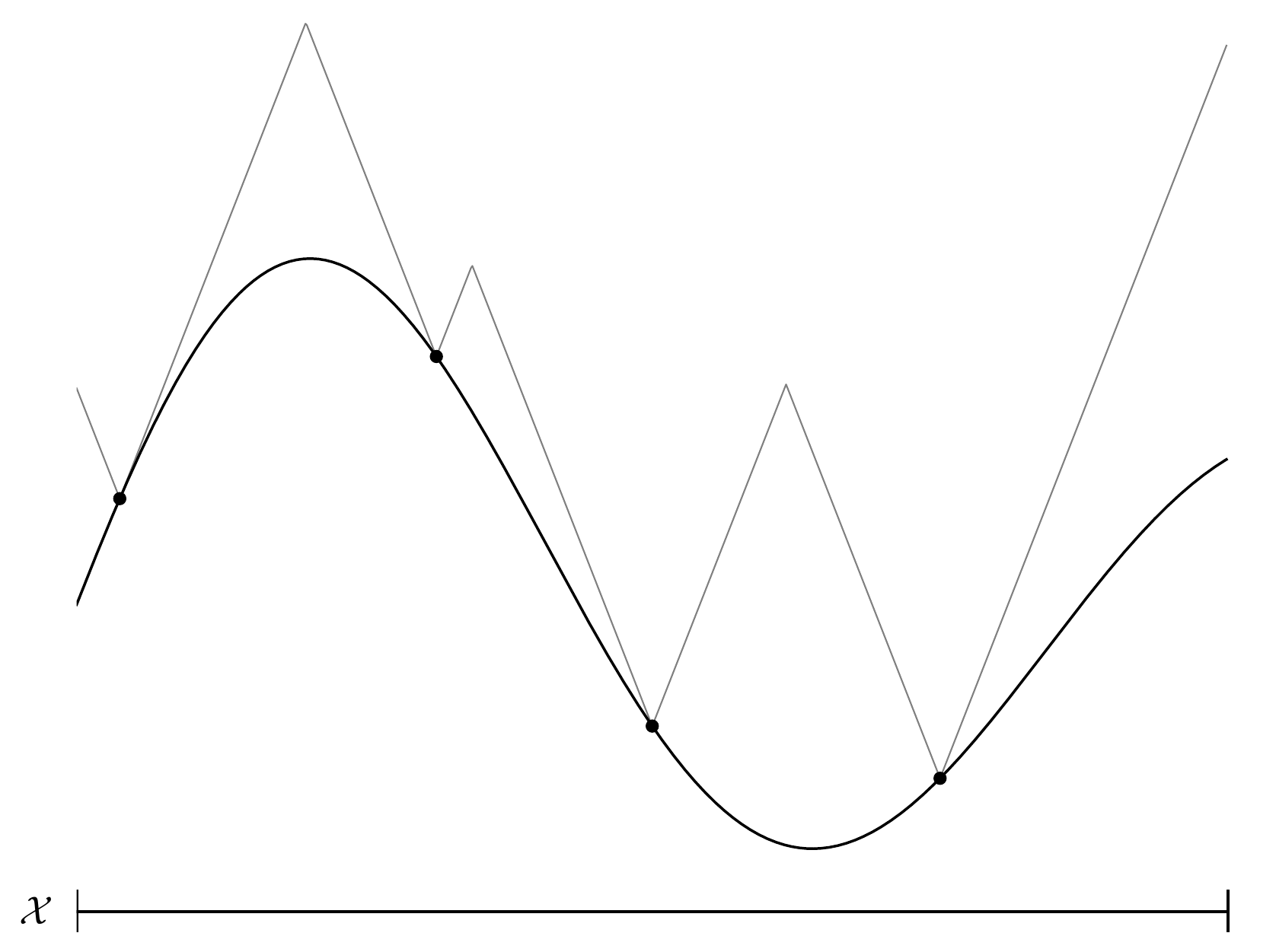}
~~~~
\includegraphics[height = 2.3cm, width=4cm]{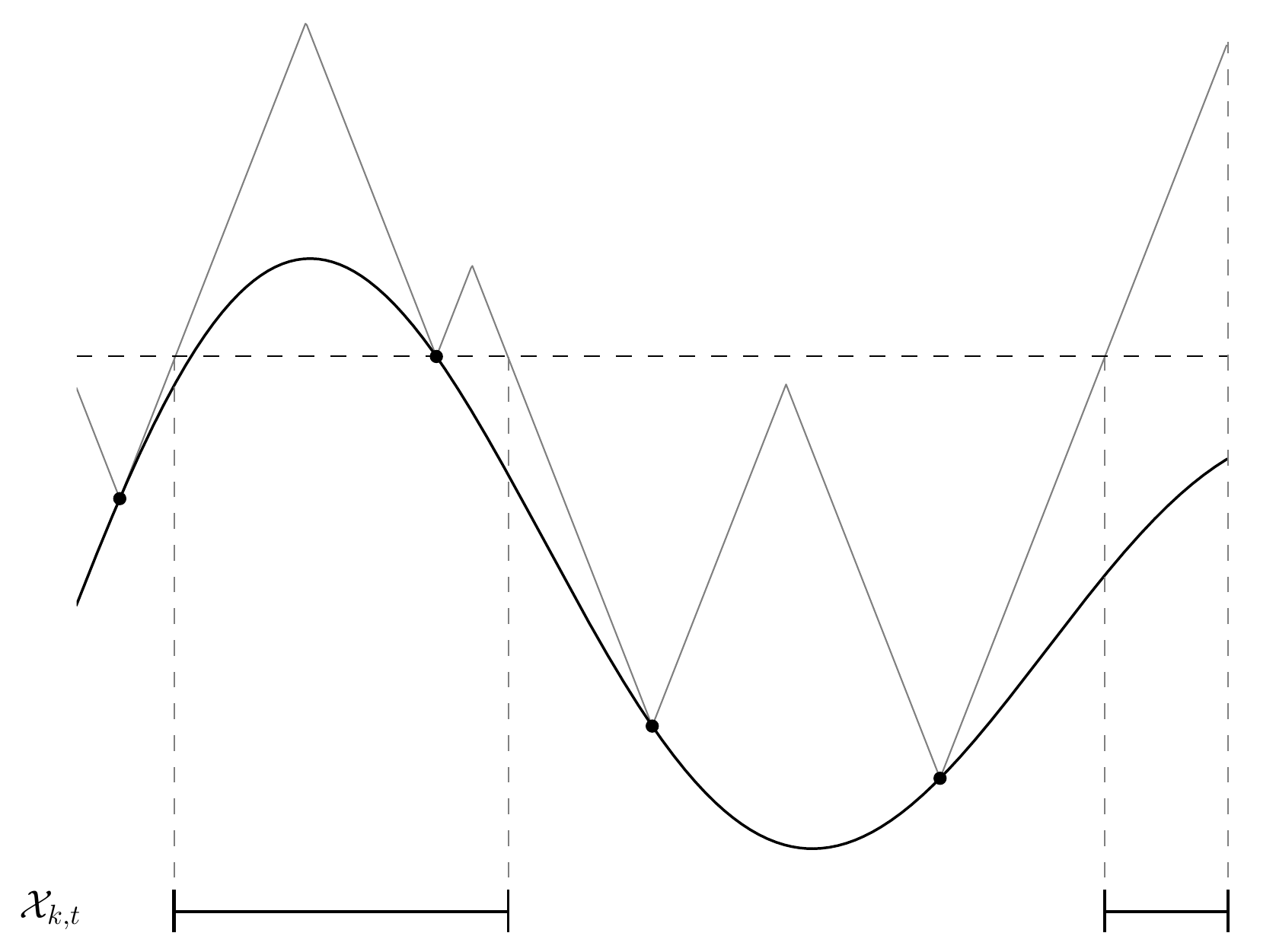}
\end{center}
\vspace{-1.5em}
\caption{{\it Left:}\! A Lipschitz function and
a sample of $t=4$ evaluations.
{\it Middle:}
In grey, the upper bound 
$UB: x \mapsto \min_{i=1\dots t} f(X_i)$ $+ k$ $\cdot\norm{x - X_i}_2$.
{\it Right:} the set of points   $\X_{k,t} :=\{x \in \X: UB(x) \geq \max_{i=1\dots t}f(X_i) \}$
 which satisfy the decision rule.
}
\label{fig:ub}
\end{figure}


    \begin{remark}{\sc (Adaptation to noisy evaluations)}
  \label{rem:noisy}
  In addition to these defintions, we point out that 
  the LIPO algorithm could
  be extended to settings with noisy evaluations 
  by slightly adapting the ideas developed in \cite{dasgupta2011two} and \cite{hanneke2011rates}.
  More specifically, when considering a sample 
  $(X_1, Y_1), \dots, (X_n,Y_n)$
  of $n\geq 1$ noisy observations where $Y_i = f(X_i) + \sigma \epsilon_i$
  and $\epsilon_i\iid \mathcal{N}(0,1)$ 
  and observing that the empirical mean-squared error
  $R_n(f) := (1/n)\sum_{i=1}^n (f(X_i) -Y_i )^2 = 
  (\sigma^2/n)\sum_{i=1}^n\epsilon_i^2$
  evaluated in $f$ is distributed as a chi-square,
  a possible approach would consist in using a relaxed
  version of the active subset $\F_{k,\delta, t} := \{g \in \Lip(k):
  R_n(g) \leq (\sigma^2/n)\cdot \chi^2_{1-\delta, n} \}$
  of Definition \ref{def:consistent_functions}
  where $\chi^2_{1-\delta, n}$  denotes the $1-\delta$ 
  quantile of the chi-squared distribution with $n$ degrees of freedom.
  \end{remark}

    \begin{remark}
     {\sc (Extension to other smoothness assumptions)} 
     Additionally, it is also important to note 
     the proposed optimization scheme 
     could easily be extended to a large number of
     classes of globally and locally smooth functions by slightly adapting the 
     decision rule. 
     For instance, when
     $\F_{\ell}=\{f :\X\to \R \mid x^{\star}\text{~is unique~and~}
     ~\forall x \in \X, f(x^{\star})-f(x) \leq \ell(x^{\star},x)\}$ 
     denotes the set of functions
     previously considered in \cite{munosmono} which are
     locally smooth around their maxima
     with regards to a given semi-metric $\ell:\X\times\X \to \R^+$\!\!,
     a straightforward derivation of Lemma \ref{lem:potential} directly gives that
     the decision rule applied in $X_{t+1}$ would simply consists in testing whether
     $\max_{i=1\dots t}f(X_i) \leq \min_{i=1\dots t}f(X_i) +\ell(X_{t+1},X_i)$.
     However, since the purpose of this work is to design fast algorithms for Lipschitz
     functions, we will only derive convergence results
     for the version of the algorithm stated above.
    \end{remark}

  \subsection{Convergence analysis}  
  We start by casting the consistency property of the algorithm.

  \begin{proposition}
  \label{prop:consistency_lipopt}
   {\sc (Consistency)} 
   For any Lipschitz constant $k\geq 0$,
   the LIPO algorithm tuned with a parameter $k$ 
   is consistent over the set $k$-Lipschitz
   functions, {\it i.e.}
   \[
    \forall f \in \normalfont{\text{Lip}}(k),
    ~\max_{i=1\dots n} f(X_i) \xrightarrow{p} \max_{x \in \X} f(x).
   \]
  \end{proposition}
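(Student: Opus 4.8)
The plan is to reduce the statement to a simple geometric--probabilistic argument driven by the decision rule, rather than invoking the covering characterization of Proposition \ref{prop:consistency_equivalence} directly (that criterion is tailored to space-filling methods, which evaluate \emph{every} sampled point, whereas LIPO discards many candidates). Fix $f \in \Lip(k)$ and assume without loss that $k>0$, the case $k=0$ being immediate since $f$ is then constant. For $\epsilon>0$ introduce the super-level region $\X_\epsilon := \{x \in \X : f(x) > \max_{x' \in \X} f(x') - \epsilon\}$ and let $x^\star$ be a maximizer. I would first check that $\X_\epsilon$ has positive volume: by the Lipschitz property $f(x) \geq f(x^\star) - k\norm{x-x^\star}_2$, so $B(x^\star,\epsilon/k)\cap\X \subseteq \X_\epsilon$, and since $\X$ is convex with non-empty interior this intersection has positive Lebesgue measure $v_\epsilon := \mu(\X_\epsilon)>0$. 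Consistency then amounts to showing that $\mathbb{P}(\max_{i \leq n} f(X_i) \leq \max_{x\in\X} f(x) - \epsilon) \to 0$ for every fixed $\epsilon$.

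The second step describes the law of the evaluated points. By Lemma \ref{lem:potential} the decision rule accepts a candidate $X_{t+1}\sim\mathcal{U}(\X)$ exactly when it lies in the current set of potential maximizers $\X_{k,t}$; since rejected candidates are resampled, the accepted point $X_{t+1}$ is distributed uniformly on $\X_{k,t}$. The crux is the inclusion
\[
\X_\epsilon \subseteq \X_{k,t} \qquad \text{whenever} \qquad \max_{i \leq t} f(X_i) \leq \max_{x\in\X} f(x) - \epsilon.
\]
Indeed, for $x\in\X_\epsilon$ the upper bound satisfies $UB_{k,t}(x) = \min_{i \leq t}(f(X_i)+k\norm{x-X_i}_2) \geq f(x) > \max_{x'\in\X} f(x') - \epsilon \geq \max_{i\leq t} f(X_i)$, hence $x\in\X_{k,t}$ by Lemma \ref{lem:potential}. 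In words: as long as the algorithm has not yet recorded an $\epsilon$-optimal value, every $\epsilon$-optimal point remains a potential maximizer, and therefore a possible target for the next evaluation.

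The third step combines these facts. Let $E_t$ denote the event $\{\max_{i \leq t} f(X_i) \leq \max_{x\in\X} f(x) - \epsilon\}$; since $\max_{i\leq t} f(X_i)$ is non-decreasing in $t$, the events $E_t$ are nested decreasing. On $E_t$ we have $\X_\epsilon \subseteq \X_{k,t} \subseteq \X$, so, writing $\mathcal{H}_t$ for the history up to step $t$, the conditional probability that the next evaluation falls into $\X_\epsilon$ (which would force $E_{t+1}$ to fail) is
\[
\mathbb{P}(X_{t+1}\in\X_\epsilon \mid E_t,\mathcal{H}_t) = \frac{\mu(\X_\epsilon)}{\mu(\X_{k,t})} \geq \frac{v_\epsilon}{\mu(\X)} =: p_\epsilon > 0.
\]
Thus $\mathbb{P}(E_{t+1}\mid E_t,\mathcal{H}_t) \leq 1-p_\epsilon$, and chaining these bounds (together with $\mathbb{P}(E_1)=1-p_\epsilon$ for $X_1\sim\mathcal{U}(\X)$) yields $\mathbb{P}(E_n) \leq (1-p_\epsilon)^n \to 0$. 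Since $\epsilon>0$ was arbitrary, convergence in probability follows.

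The main obstacle is not the final chaining estimate but making the middle step fully rigorous: one must justify that the accepted candidate is genuinely uniform on the random, history-dependent set $\X_{k,t}$ (a rejection-sampling argument) and treat the ratio $\mu(\X_\epsilon)/\mu(\X_{k,t})$ as a bona fide conditional probability, which in particular requires $\mu(\X_{k,t})>0$ throughout --- guaranteed on $E_t$ precisely by the inclusion $\X_\epsilon\subseteq\X_{k,t}$. The geometric fact $v_\epsilon>0$ must also be argued carefully when $x^\star$ lies on the boundary of $\X$, which is exactly where the convexity and non-empty interior assumptions are used.
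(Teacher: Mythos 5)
Your proof is correct, and it reaches exactly the bound the paper obtains, namely $\P\bigl(\max_{i\leq n} f(X_i) \leq \max_{x\in\X}f(x)-\epsilon\bigr) \leq \left(1-\mu(\X_\epsilon)/\mu(\X)\right)^n$, but by a different organization of the argument. The paper's proof is modular and very short: it invokes Proposition \ref{prop:fasterprs} (LIPO stochastically dominates Pure Random Search) as a black box and then computes the probability that $n$ i.i.d.\ uniform points all miss the level set $\X_\epsilon$. Your proof never uses the dominance statement; instead, the inclusion $\X_\epsilon \subseteq \X_{k,t}$ on the event $E_t$ that no $\epsilon$-optimal value has yet been recorded, combined with the rejection-sampling observation that the accepted point is uniform on $\X_{k,t}$, gives the conditional hitting bound $\P(X_{t+1}\in\X_\epsilon \mid E_t,\mathcal{H}_t) \geq \mu(\X_\epsilon)/\mu(\X)$, and chaining does the rest. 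This is, in substance, the proof of the dominance lemma itself, specialized to the one level set needed for consistency; since the paper defers the proof of Proposition \ref{prop:fasterprs} to an external reference (Proposition 12 of Malherbe and Vayatis, 2016), your argument is actually more self-contained than the chain of reasoning available inside this paper, and it makes explicit the two measure-theoretic points the modular route hides: that the accepted candidate is genuinely uniform on the history-dependent set $\X_{k,t}$, and that $\mu(\X_{k,t})>0$ wherever the conditional probability is invoked --- which, as you correctly note, is guaranteed precisely on $E_t$ by the inclusion. What the paper's route buys in exchange is reusability: the dominance statement is stronger than consistency requires (it holds for every threshold $y$ simultaneously) and is immediately recycled to obtain the finite-time bound of Corollary \ref{prop:upperlipopt}, whereas your fixed-$\epsilon$ argument would need to be restated uniformly in $y$ to recover it.
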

  The next result shows that the value of the highest evaluation observed by 
  the algorithm is always superior or equal in the usual stochastic ordering sense
  to the one of a Pure Random Search.
  
  \begin{proposition}
  \label{prop:fasterprs}
   {\sc (Faster than pure random search)} 
   Consider the LIPO algorithm tuned with any constant $k\geq0$.
   Then, for any $f \in \normalfont{\text{Lip}}(k)$ and
   $n \in \mathbb{N}^{\star}$,
   we have that $\forall y \in \R$,
   \[
    \P\left(\max_{i=1\dots n } f(X_i) \geq y \right) \geq 
    \P\left(\max_{i=1\dots n } f(X'_i) \geq y \right)
   \]
   where $X_1,\dots, X_n$ is a sequence of $n$ evaluation points generated by LIPO
  and $X_1',\dots, X'_n$ is a sequence of $n$ 
  independent random variables uniformly distributed over $\X$.
  \end{proposition}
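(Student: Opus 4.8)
The plan is to prove the stochastic dominance by an explicit coupling of the two procedures on a common probability space, built from a single infinite stream of i.i.d. uniform draws. Let $U_1,U_2,U_3,\dots \iid \mathcal{U}(\X)$ and realize both algorithms on this stream. For the Pure Random Search I would simply set $X'_i := U_i$, so that $\max_{i=1\dots n} f(X'_i) = \max_{j\le n} f(U_j)$. For LIPO I would feed the same draws through the decision rule: keep $U_1$ as the first evaluation point, and once $t$ points have been accepted, scan the draws until one, say $U_j$, satisfies the decision rule, accept it as the $(t+1)$-th evaluation point, and discard the intervening draws. Writing $\tau(1)<\tau(2)<\cdots$ for the random indices of the accepted draws gives $X_i = U_{\tau(i)}$, and this filtered process has exactly the law of LIPO, since the accept/reject decision at draw $j$ depends only on $U_j$ and the past accepted points while future draws remain independent. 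As $\tau$ is a strictly increasing sequence of positive integers, we have $\tau(i)\ge i$ for all $i$.

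The heart of the argument is the observation that a discarded draw can never carry the running maximum. Let $A_m := \{\,j\le m : U_j \text{ is accepted}\,\}$ be the accepted indices among the first $m$ draws and $M_t := \max_{i=1\dots t} f(X_i)$ the current best value after $t$ acceptances. The claim I would establish is that $\max_{j\le m} f(U_j) = \max_{j\in A_m} f(U_j)$ for every $m$. Indeed, since $f\in\Lip(k)$, the upper bound $UB_{k,t}(x)=\min_{i=1\dots t}(f(X_i)+k\norm{x-X_i}_2)$ dominates $f$ pointwise, because $f(x)\le f(X_i)+k\norm{x-X_i}_2$ for each $i$. Hence, whenever a draw $U_j$ is discarded, Lemma \ref{lem:potential} gives $U_j\notin\X_{k,t}$, i.e.\ $UB_{k,t}(U_j)<M_t$, and therefore $f(U_j)\le UB_{k,t}(U_j)<M_t$: the discarded draw lies strictly below the current maximum. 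The claim then follows by induction on $m$, since accepting a draw updates both maxima identically, whereas discarding one leaves $\max_{j\le m} f(U_j)$ unchanged (by the inequality just obtained) and leaves $A_m$, hence $\max_{j\in A_m} f(U_j)$, unchanged as well.

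Finally I would combine these facts. On the coupling one gets
\[
  \max_{i=1\dots n} f(X_i) = \max_{j\in A_{\tau(n)}} f(U_j)
  = \max_{j\le \tau(n)} f(U_j)
  \ge \max_{j\le n} f(U_j)
  = \max_{i=1\dots n} f(X'_i),
\]
where the middle equality is the claim applied with $m=\tau(n)$ and the inequality uses $\tau(n)\ge n$. Thus $\max_{i=1\dots n} f(X_i)\ge \max_{i=1\dots n} f(X'_i)$ holds surely, which immediately yields $\P(\max_{i} f(X_i)\ge y)\ge \P(\max_{i} f(X'_i)\ge y)$ for every $y\in\R$, as claimed.

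The main obstacle I anticipate is the bookkeeping around the filtering map $\tau$: one must verify carefully that the filtered stream genuinely reproduces LIPO's conditional sampling law, so that the coupling is legitimate, and that $\tau(n)<\infty$ almost surely. The latter holds because the set of potential maximizers $\X_{k,t}$ always contains the current maximizer (by the Lipschitz inequality, $UB_{k,t}$ attains the value $M_t$ there), hence a neighborhood of positive Lebesgue measure, so acceptance occurs after finitely many draws with probability one and $M_n^{\mathrm{LIPO}}$ is well defined.
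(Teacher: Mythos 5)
Your main argument is correct, and it is worth noting that the paper itself contains no self-contained proof of this proposition: it only states that the proof is ``similar to the one of Proposition 12'' in (\cite{malherbe2016ranking}). Your coupling construction supplies exactly the content that reference hides. Its two pillars are sound: (i) the filtered stream $X_i=U_{\tau(i)}$ has the law of LIPO, since the algorithm as written \emph{is} an accept/reject filter applied to an i.i.d.\ uniform stream, and (ii) because $UB_{k,t}\geq f$ pointwise whenever $f\in\Lip(k)$, every rejected draw satisfies $f(U_j)\leq UB_{k,t}(U_j)<M_t$, so a rejected draw can never carry the running maximum; this gives $\max_{j\leq m}f(U_j)=\max_{j\in A_m}f(U_j)$ for all $m$ and hence, via $\tau(n)\geq n$, the pathwise inequality $\max_{i\leq n}f(X_i)\geq\max_{i\leq n}f(X'_i)$. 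This in fact proves slightly more than the statement: you obtain almost-sure dominance on a coupling, from which the stochastic ordering follows for all $y$ simultaneously.

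The one step whose justification is wrong is the almost-sure finiteness of $\tau(n)$. You argue that $\X_{k,t}$ contains the current best evaluation point and ``hence a neighborhood of positive Lebesgue measure''. That inference is invalid: at the current best point the upper bound equals $M_t$ exactly (not strictly above it), and no neighborhood need be contained in $\X_{k,t}$. Concretely, take $\X=[0,1]$, $f(x)=-k\abs{x-1/2}$ and evaluation points $1/4$, $1/2$, $3/4$; a short computation gives $\X_{k,t}=\{0,1/2,1\}$, a Lebesgue-null set that does contain the maximizer $1/2$. The repair uses the same pointwise domination you already invoked: $UB_{k,t}\geq f$ implies $\X_{k,t}\supseteq\{x\in\X : f(x)\geq M_t\}$. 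Almost surely one of two cases holds: either $\mu(\arg\max f)>0$, in which case $\arg\max f\subseteq\X_{k,t}$ always has positive measure; or $\arg\max f$ is Lebesgue-null, in which case almost surely none of the countably many uniform draws ever lands in it, so $M_t<\max_{x\in\X}f(x)$ for every $t$ and the nonempty relatively open set $\{x\in\X: f(x)>M_t\}\subseteq\X_{k,t}$ has positive measure (for instance by Lemma \ref{lem:ball_zab}, since it contains a set of the form $B(x_0,r)\cap\X$). In both cases, while $t$ is fixed the per-draw acceptance probability is a fixed positive number, so the waiting time until the next acceptance is geometric and $\tau(n)<\infty$ almost surely. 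With that one-line repair your proof is complete.
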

  Based on this result, one can easily derive a first finite-time bound on the difference
   between the value of the true maximum and its approximation.
  \begin{corollary}
  \label{prop:upperlipopt}
   {\sc (Upper bound)}
   For any $f\in \normalfont{\text{Lip}}(k)$,
   any $n \in \mathbb{N}^{\star}$ and
   $\delta \in (0,1)$, we have with probability at least $1-\delta$,
   \[
    \max_{x \in \X}f(x) - \max_{i=1\dots n} f(X_i) 
    \leq k \cdot \diam{\X} \cdot \left( \frac{\ln(1/\delta)}{n} \right)^{\frac{1}{d}}\!\!.
   \]
  \end{corollary}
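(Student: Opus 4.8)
The plan is to transfer a high-probability bound from Pure Random Search (PRS) to LIPO using the stochastic dominance of Proposition~\ref{prop:fasterprs}. Writing $M := \max_{i=1\dots n} f(X_i)$ for the highest LIPO evaluation and $M' := \max_{i=1\dots n} f(X'_i)$ for PRS, the inequality $\P(M \geq y) \geq \P(M' \geq y)$ valid for all $y\in\R$ rearranges, upon taking $y = \max_{x\in\X} f(x) - c$, into
$\P\!\left(\max_{x\in\X} f(x) - M \leq c\right) \geq \P\!\left(\max_{x\in\X} f(x) - M' \leq c\right)$ for every $c \geq 0$. Hence it suffices to prove the claimed bound for PRS: with $c := k \cdot \diam{\X} \cdot (\ln(1/\delta)/n)^{1/d}$, I want to show $\P\!\left(\max_{x\in\X} f(x) - M' \leq c\right) \geq 1-\delta$.

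Fix a maximizer $x^{\star} \in \arg\max_{x\in\X} f(x)$, which exists since $\X$ is compact and $f \in \Lip(k)$ is continuous. The first step is a Lipschitz argument: if some uniform sample $X'_i$ falls in the ball $B(x^{\star}, c/k)$, then $\max_{x\in\X} f(x) - f(X'_i) \leq k\cdot\norm{X'_i - x^{\star}}_2 \leq c$, so the PRS loss is at most $c$. Consequently $\{\max_{x\in\X} f(x) - M' > c\} \subseteq \bigcap_{i=1}^n \{ X'_i \notin B(x^{\star}, c/k)\}$, and by independence of the uniform draws,
$\P\!\left(\max_{x\in\X} f(x) - M' > c\right) \leq (1-p)^n$, where $p := \mu\!\left(B(x^{\star}, c/k)\cap \X\right)/\mu(\X)$ is the probability that a single draw lands in the ball.

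The key step is to lower bound $p$ using convexity of $\X$. Set $r := c/k = \diam{\X}\cdot(\ln(1/\delta)/n)^{1/d}$ and $\lambda := r/\diam{\X}$. If $\lambda \geq 1$, then $c \geq k\cdot\diam{\X} \geq \max_{x\in\X} f(x) - M'$ holds deterministically (any evaluation is within $k\cdot\diam{\X}$ of the maximum), so the bound is trivial; assume therefore $\lambda < 1$. Consider the shrunk copy $x^{\star} + \lambda(\X - x^{\star})$, which is contained in $\X$ by convexity and whose points lie within distance $\lambda\cdot\diam{\X} = r$ of $x^{\star}$; hence it is contained in $B(x^{\star},r)\cap\X$, giving $\mu(B(x^{\star},r)\cap\X) \geq \lambda^d \mu(\X)$ and thus $p \geq \lambda^d = \ln(1/\delta)/n$. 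Plugging in and using $1-p \leq e^{-p}$,
$\P\!\left(\max_{x\in\X} f(x) - M' > c\right) \leq (1-p)^n \leq e^{-np} \leq e^{-\ln(1/\delta)} = \delta$, which is exactly the required PRS bound and, combined with the dominance of the first paragraph, completes the argument.

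The only delicate point is the convexity-based volume lower bound; everything else is a union bound over independent draws together with the elementary inequality $1-p\leq e^{-p}$. I note that this localized argument centered at $x^{\star}$ is what produces the clean $\ln(1/\delta)$ dependence, strictly sharper than applying the global covering bound of Example~\ref{ex:PRS} would yield, precisely because controlling the optimization loss only requires approaching a single maximizer rather than covering all of $\X$.
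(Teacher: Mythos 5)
Your proof is correct and follows the paper's own route exactly: the paper proves this corollary by combining the stochastic dominance of Proposition~\ref{prop:fasterprs} with the Pure Random Search bound of Proposition~\ref{prop:cvg_prs}, whose proof in turn rests on the convexity volume bound of Lemma~\ref{lem:ball_zab}. The only difference is that you re-derive the PRS bound and the shrunk-copy volume argument inline instead of citing those two auxiliary results.
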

  This bound which proves the miminax optimality of LIPO
  stated in Proposition \ref{prop:minimax} once integrated does
  however not show any improvement over PRS
  and it cannot be significantly improved without making any additional assumption
  as shown below.

  \begin{proposition}
  \label{prop:limit_lipopt}
   For any $n\in\mathbb{N}^{\star}$ and $\delta \in (0,1)$,
   there exists a function $\tilde{f} \in \normalfont{\text{Lip}}(k)$,
   only depending on $n$ and $\delta$,
   for which we have with probability at least $1-\delta$:
   \[
     k \cdot \rad{\X} \cdot \left( \frac{\delta}{n}\right)^{\frac{1}{d}} 
     \leq  \max_{x \in \X}\tilde{f}(x) - \max_{i=1 \dots n}\tilde{f}(X_i).
   \]
  \end{proposition}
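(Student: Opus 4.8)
The plan is to exhibit a single ``spike'' function that is flat on most of the domain, so that the LIPO decision rule never rejects a proposal, and then to argue that with high probability none of the $n$ uniformly drawn proposals lands in the spike.

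Concretely, I would first fix a point $x_0 \in \X$ that is the center of an inscribed ball, i.e. $B(x_0, \rad{\X}) \subseteq \X$, and set the radius $r = \rad{\X}\cdot(\delta/n)^{1/d}$; since $\delta/n < 1$ we have $r < \rad{\X}$, so $B(x_0,r)\subseteq\X$. I would then define
\[
\tilde{f}(x) = k\cdot\max\big(0,\, r - \norm{x - x_0}_2\big).
\]
This function lies in $\Lip(k)$ (it is $k$ times the positive part of the $1$-Lipschitz map $x\mapsto r-\norm{x-x_0}_2$), it vanishes outside $B(x_0,r)$, and it attains $\max_{x\in\X}\tilde f(x) = kr = k\cdot\rad{\X}\cdot(\delta/n)^{1/d}$ at $x=x_0$. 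Note that $\tilde f$ depends only on $n$ and $\delta$ through $r$, as required.

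Second, I would establish the behavioral claim that on $\tilde f$ the LIPO decision rule is vacuous as long as all observed values coincide. Indeed, if $f(X_1)=\dots=f(X_t)=0$, then by Lemma \ref{lem:potential} the upper bound evaluated at any proposal $X_{t+1}$ equals $\min_{i\le t}(0 + k\norm{X_{t+1}-X_i}_2) = k\min_{i\le t}\norm{X_{t+1}-X_i}_2 \ge 0 = \max_{i\le t}f(X_i)$, so the rule holds and $X_{t+1}$ is accepted. Consequently, if $U_1,U_2,\dots$ denotes the i.i.d.\ $\mathcal{U}(\X)$ stream of proposals, then on the event $E=\{U_1,\dots,U_n \notin B(x_0,r)\}$ every observed value is $0$, every proposal is accepted, and the $n$ evaluation points of LIPO coincide with $U_1,\dots,U_n$; in particular $\max_{i\le n}\tilde f(X_i)=0$ on $E$.

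Finally, I would lower-bound $\P(E)$. Since $B(x_0,\rad{\X})\subseteq\X$ gives $\mu(\X)\ge \mu(B(x_0,\rad{\X}))$, the probability that a single uniform proposal falls in $B(x_0,r)$ is $p=\mu(B(x_0,r))/\mu(\X)\le (r/\rad{\X})^d = \delta/n$, whence $\P(E)=(1-p)^n\ge 1-np\ge 1-\delta$. On $E$ we then have $\max_{x}\tilde f(x)-\max_{i\le n}\tilde f(X_i)=kr=k\cdot\rad{\X}\cdot(\delta/n)^{1/d}$, the desired inequality. The step I expect to require the most care is the behavioral/coupling argument: one must verify that on $E$ no rejection ever occurs, so that the accepted points are literally the first $n$ terms of the uniform stream, and that the equivalence of Lemma \ref{lem:potential} indeed collapses to the trivial condition $k\min_i\norm{\cdot-X_i}_2\ge 0$ on the flat region. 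Everything else — the Lipschitz regularity, the volume comparison via the inscribed ball, and Bernoulli's inequality $(1-p)^n\ge 1-np$ — is routine.
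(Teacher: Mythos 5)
Your proof is correct and follows essentially the same route as the paper's: the same spike function supported on a ball of radius $\rad{\X}(\delta/n)^{1/d}$ centered at the inscribed ball's center, the same observation that LIPO behaves exactly like pure random search as long as every evaluation equals zero, and the same volume comparison plus Bernoulli's inequality to bound the probability of missing the spike $n$ times. The only cosmetic difference is that you phrase the equivalence with pure random search as a coupling with the i.i.d.\ proposal stream (every proposal being accepted on the miss event), whereas the paper writes the same fact as a product of conditional probabilities, each factor being $1-\mu(B(x^{\star},r_{\delta,n})\cap\X)/\mu(\X)$.
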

  As announced in Section \ref{sec:generic_results}, one can nonetheless 
  get tighter polynomial bounds and even an exponential decay
  by using the following condition 
  which describes the behavior of the function around its maximum.

  \begin{condition}
  \label{cond}
  {\sc (Decreasing rate around the maximum)}
   A function $f:\X \to \R$ is $(\kappa, c_{\kappa})$-decreasing around its maximum
   for some $\kappa \geq 0$, $c_{\kappa} \geq 0$ if:
   \begin{enumerate}
    \item The global optimizer $x^{\star}\in \X$ is unique;
    \item For all $x \in \X$, we have that:
    \[
      f(x^{\star}) - f(x) \geq  c_{\kappa} \cdot \norm{x-x^{\star}}_2^{\kappa}.
    \]
   \end{enumerate}
  \end{condition}
  This condition, already considered in the works of \cite{zhigljavsky1991theory} 
  and \cite{munosmono}, captures how fast the function decreases around its maximum.
  It can be seen as a local one-sided H\"{o}lder condition 
  that can only be met for $\kappa\geq 1$
  when $f$ is assumed to be Lipschitz.
  As an example,  three functions satisfying this condition
  with different values of $\kappa$ are displayed 
  on Figure \ref{fig:smooth}.

       \begin{figure}[!b]
    \begin{center}$
      \begin{array}{ccc}
      \includegraphics[width=28mm, height=17.5mm]{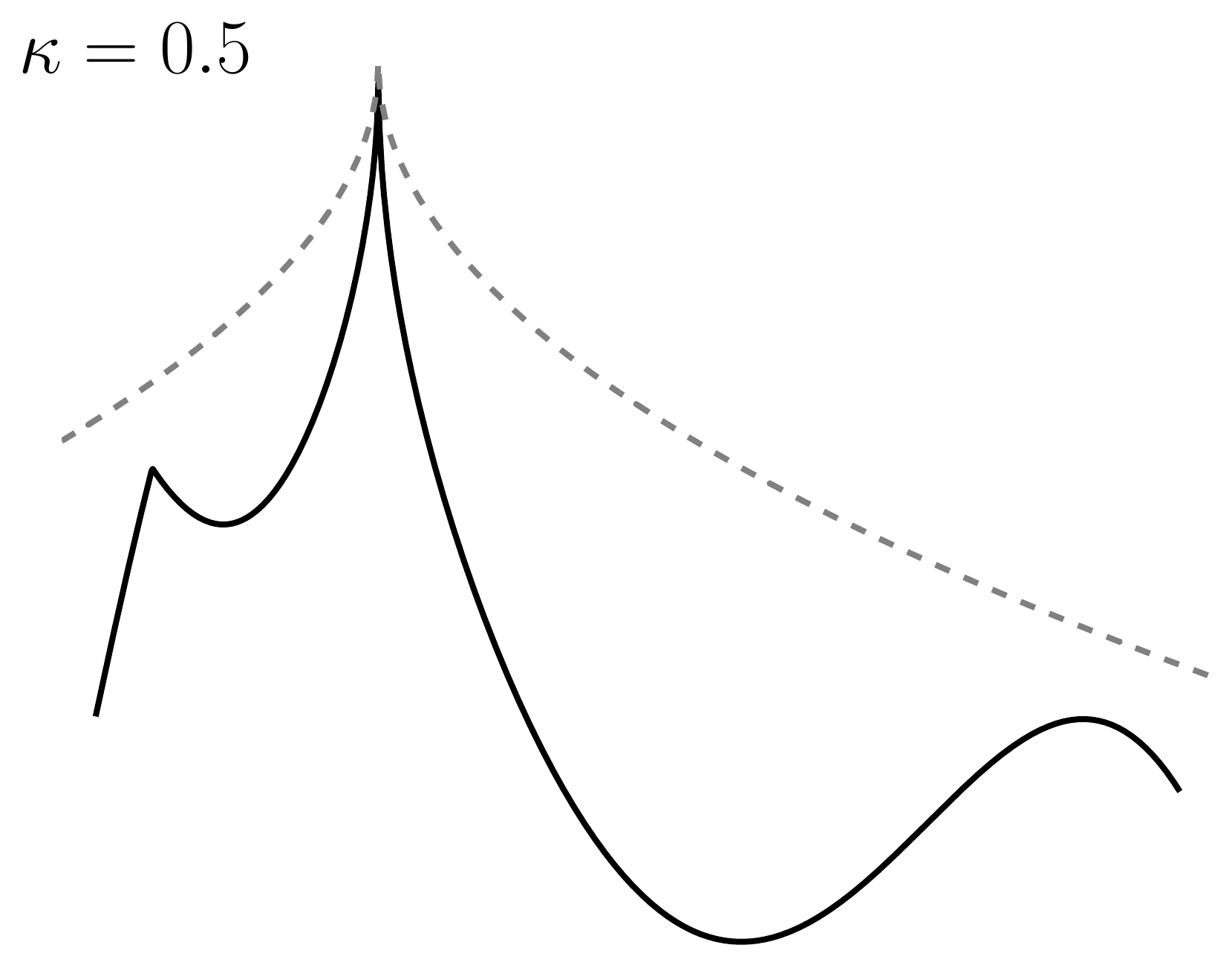}&~~~~~~~
      \includegraphics[width=28mm, height=17.5mm]{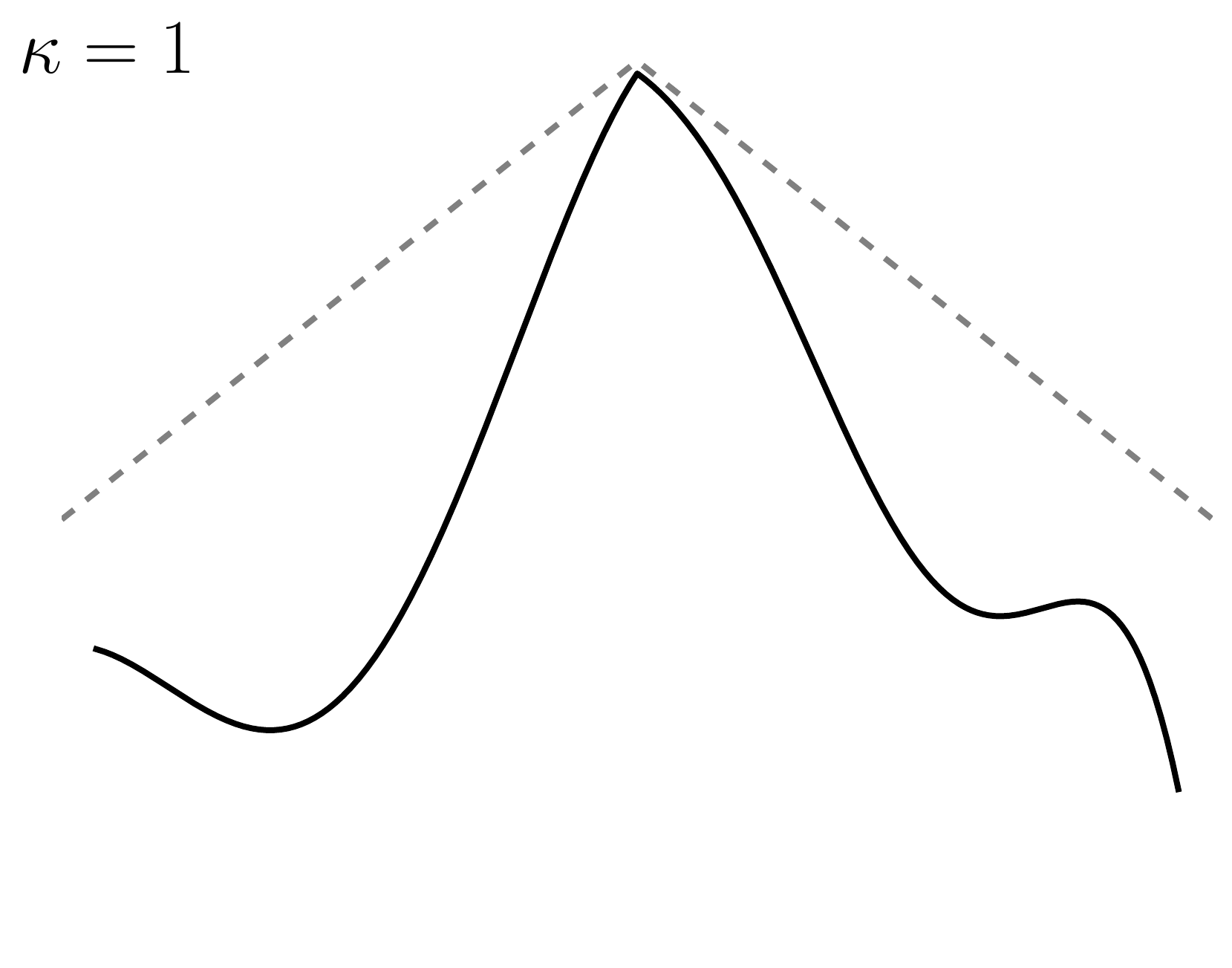}&~~~~~~~
      \includegraphics[width=28mm, height=17.5mm]{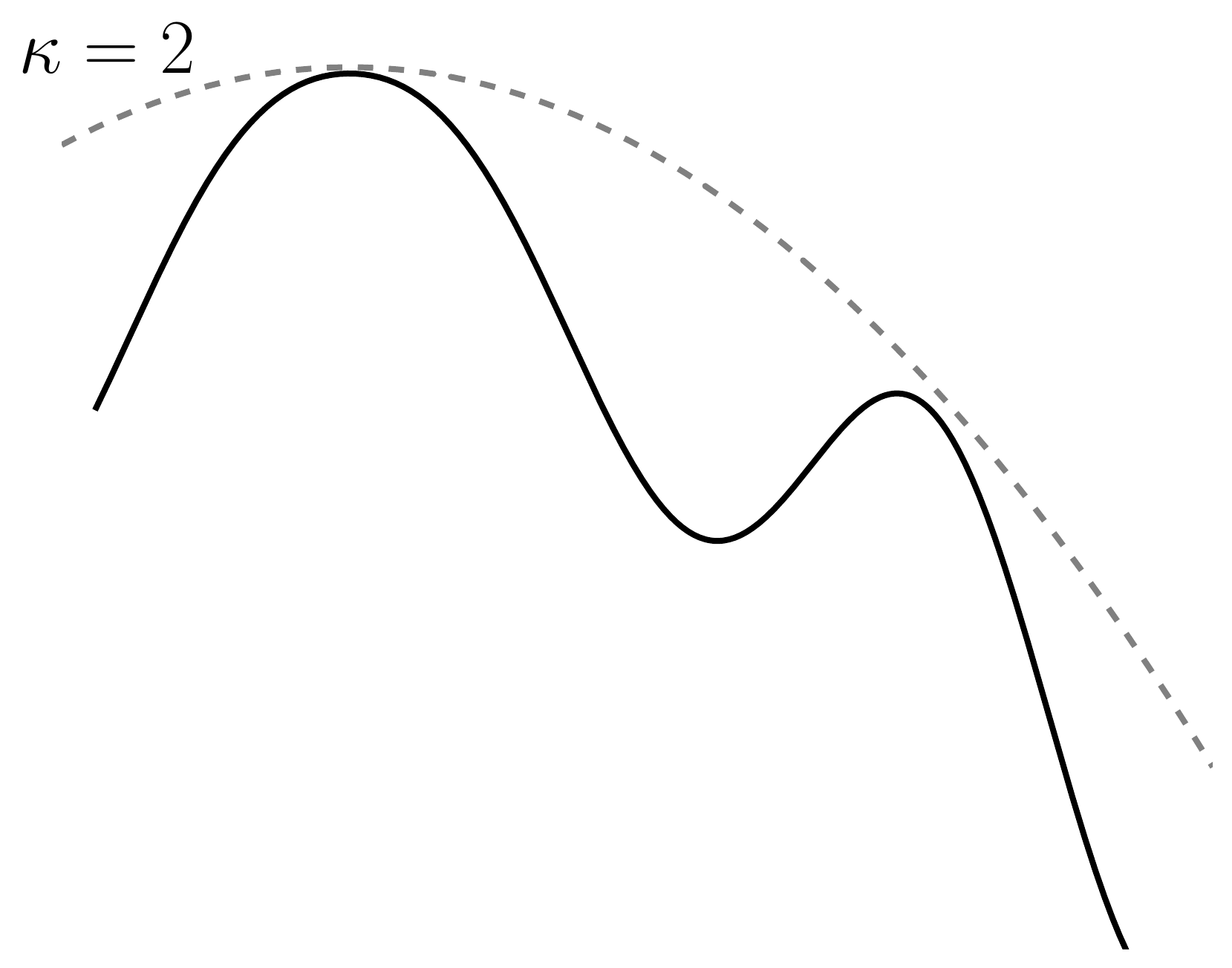}
      \end{array}$
    \end{center}
    \vspace{-1.5em}
    \label{fig:smooth}
   \caption{Three one-dimensional functions satisfying Condition \ref{cond}
   with $\kappa =1/2$ ({\it Left}), $\kappa = 1$ ({\it Middle}) and
   $\kappa = 2$ ({\it Right}).}\label{fig:exh_prs}
  \end{figure}

  \begin{theorem}
  \label{th:fast_rates}
  {\sc (Fast rates)}
   Let $f\in\Lip(k)$ be any Lipschitz function 
   satisfying Condition \ref{cond} for some $\kappa \geq 1, c_{\kappa} > 0$.
   Then, for any $n\in\mathbb{N}^{\star}$ and $\delta \in (0,1)$, we have 
   with probability at least $1-\delta$,
    \[    
    \max_{x \in \X}f(x) -\max_{i=1\dots n} f(X_i) \leq k \cdot \diam{\X} \times
    \begin{dcases*}
     \exp\left\{-~  C_{k, \kappa} \cdot 
	\frac{n\ln(2)}{\ln(n/\delta) + 2(2\sqrt{d})^d }
	\right\},   \text{~~~~~~~~}\  \kappa =1,  \\
	~\\
      \frac{2^{\kappa}}{2}
	\left( 1 + C_{k, \kappa}
     \cdot  \frac{n(2^{d(\kappa~\!\minus~\!1)}-1)}{\ln(n/\delta) +2(2\sqrt{d})^d}
	\right)^{-\frac{\kappa}{d(\kappa-1)}}\!\!\!\!\!\!\!\!\!\!\!\!\!
	\!\!\!\!\!\!\text{~,~}
	~~~~~~~\kappa>1,
    \end{dcases*}
   \]
  where $C_{k,\kappa}=(c_{\kappa} \max_{x\in\X}\norm{x-x^{\star}}^{\kappa-1}/ 8k)^d $.
  \end{theorem}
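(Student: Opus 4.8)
The plan is to exploit the rejection-sampling structure of LIPO together with a volume comparison between the set of potential maximizers and the super-level sets of $f$. First I would record that, by Lemma~\ref{lem:potential} and the resampling mechanism, each newly evaluated point is distributed as $X_{t+1}\sim\mathcal{U}(\X_{k,t})$; writing $m_t:=\max_{i\le t}f(X_i)$, the active set admits the explicit form
\[
\X_{k,t}=\X\setminus\bigcup_{i\le t}B\!\left(X_i,\tfrac{(m_t-f(X_i))^{+}}{k}\right),
\]
and, because $f\in\F_{k,t}$ forces the upper bound $UB_{k,t}$ to dominate $f$, it always contains the super-level set $\{x\in\X:f(x)\ge m_t\}\ni x^{\star}$. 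Everything will then be phrased in terms of the gap $\Delta_t:=\max_{\X}f-m_t$ and how fast it shrinks as uniform points are thrown into the shrinking sets $\X_{k,t}$.

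The heart of the argument is to lower-bound, at each step, the probability $p_t$ that a single evaluation at least halves the gap. Since the set $S_{\varepsilon}:=\{f\ge \max_{\X}f-\varepsilon\}$ satisfies $S_{\Delta_t/2}\subseteq\{f\ge m_t\}\subseteq\X_{k,t}$, this probability is exactly $\mu(S_{\Delta_t/2})/\mu(\X_{k,t})$. I would then sandwich both sets between balls centred at $x^{\star}$: the $k$-Lipschitzness of $f$ gives $B(x^{\star},\varepsilon/k)\cap\X\subseteq S_{\varepsilon}$, whereas Condition~\ref{cond} gives $S_{\varepsilon}\subseteq B(x^{\star},(\varepsilon/c_{\kappa})^{1/\kappa})$. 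To control the denominator I would show that once the evaluated points form an $\eta$-cover of $\X$ one has $\X_{k,t}\subseteq S_{\Delta_t+2k\eta}$, the inclusion following because every $x\in\X_{k,t}$ has a nearby evaluated point whose value is forced to be close to both $f(x)$ and $m_t$. Taking volumes, $p_t$ is then bounded below by a constant multiple of
\[
\frac{(\Delta_t/2k)^{d}}{((\Delta_t+2k\eta)/c_{\kappa})^{d/\kappa}}\ \propto\ \Delta_t^{\,d(\kappa-1)/\kappa},
\]
which is precisely where the dichotomy appears: for $\kappa=1$ the ratio is bounded below by a constant recognisable as $C_{k,1}=(c_{\kappa}/8k)^{d}$, while for $\kappa>1$ it degrades polynomially like $\Delta_t^{\,d(\kappa-1)/\kappa}$. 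This volume ratio is the origin of the constant $C_{k,\kappa}=(c_{\kappa}\max_{x}\norm{x-x^{\star}}^{\kappa-1}/8k)^{d}$.

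I would then convert these per-step probabilities into the stated high-probability bound. For $\kappa=1$, each evaluation (once the cover is fine enough) halves the gap with probability at least a constant $p\approx C_{k,1}$, so the number of halvings among $n$ evaluations stochastically dominates a Binomial$(n,p)$; a Chernoff bound yields at least $\gtrsim C_{k,1}\,n/(\ln(n/\delta)+2(2\sqrt d)^{d})$ halvings with probability $1-\delta$, hence the exponential bound $\Delta_n\le k\,\diam{\X}\,2^{-C_{k,1}n/(\ln(n/\delta)+2(2\sqrt d)^{d})}$. For $\kappa>1$, the vanishing halving probability $p_t\propto\Delta_t^{\,d(\kappa-1)/\kappa}$ turns the evolution of $\Delta_t$ into a stochastic recursion whose deterministic envelope solves $\dot\Delta\propto-\Delta^{\,1+d(\kappa-1)/\kappa}$, integrating to the polynomial decay $\Delta_n\lesssim(1+\mathrm{const}\cdot n)^{-\kappa/(d(\kappa-1))}$; matching the constants reproduces the stated bound.

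The main obstacle is the control of the denominator $\mu(\X_{k,t})$, i.e.\ proving that the active set is eventually confined to a small ball around $x^{\star}$. Unlike the super-level set, $\X_{k,t}$ keeps containing far, still-unexplored regions until they receive an evaluation, so the inclusion $\X_{k,t}\subseteq B(x^{\star},\cdots)$ only holds after the samples cover $\X$ at a scale $\eta\lesssim\Delta_t/k$. Quantifying how many uniform draws into the shrinking active region are needed to secure this cover — which is exactly where the covering-number constant $2(2\sqrt d)^{d}$ and the high-probability term $\ln(n/\delta)$ enter the denominator of the exponent — and coupling it cleanly with the halving recursion (so that the covering budget and the improvement budget can be charged against the same $n$ evaluations) is the delicate part; the boundary effects near $\partial\X$, handled through the inner radius $\rad{\X}$ and the convexity of $\X$, add a further layer of bookkeeping.
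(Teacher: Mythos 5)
Your reduction to a per-step improvement probability is sound as far as it goes: the inclusions $S_{\Delta_t/2}\subseteq\{x\in\X : f(x)\geq m_t\}\subseteq\X_{k,t}$ and, given an $\eta$-cover by evaluated points, $\X_{k,t}\subseteq S_{\Delta_t+2k\eta}$ are both correct, and the resulting volume ratio does produce the right dichotomy between $\kappa=1$ and $\kappa>1$. The genuine gap is exactly the step you defer as ``the delicate part'': your upper bound on $\mu(\X_{k,t})$ is conditional on the evaluated points covering $\X$ at scale $\eta\leq\Delta_t/k$ (up to constants), and you never show this can be afforded. Taken literally it cannot: any $\eta$-cover of $\X$ requires at least of order $(\rad{\X}/\eta)^d$ points, so a cover at scale $\Delta_t/k$ is only available once $t\geq c\,(k\,\rad{\X}/\Delta_t)^d$, i.e.\ the scheme can never certify a gap smaller than order $k\,\rad{\X}\,t^{-1/d}$. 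In other words, the argument as written proves at best the minimax rate already given by Corollary~\ref{prop:upperlipopt}, and neither the exponential rate for $\kappa=1$ nor the $n^{-\kappa/(d(\kappa-1))}$ rate for $\kappa>1$ can come out of it: the halving probability you compute is unusable precisely in the regime where it is needed, because maintaining its hypothesis consumes the entire evaluation budget.

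The repair is to demand the cover only where the active set can still live --- an annulus around $x^{\star}$ of radius comparable to $(\Delta_t/c_{\kappa})^{1/\kappa}$, covered at a scale proportional to that radius, so that only a constant number of cells per halving is needed --- but this is circular (confinement of $\X_{k,t}$ requires the cover, a cheap cover requires confinement), and breaking the circularity by induction over halving phases is exactly the content of the paper's proof (the events $E_m$ and stopping times $\tau_m$). Moreover, even granting that bootstrap, your route would require every covering cell to actually be hit by a uniform sample (a coupon-collector requirement, costing an extra logarithmic factor per phase), whereas the paper's argument avoids this with a one-sided mechanism you are missing: Lemma~\ref{lem:covering} shows that once $E_m$ holds, each hypercube of the annulus cover can absorb \emph{at most one} future evaluation, since a single evaluation inside it removes the whole cube from $\X_{k,t}$ forever. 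This turns the covering issue into a deterministic bound $N_m$ on the number of wasted evaluations per phase, which, combined with the faster-than-PRS comparison on the inner ball (Lemma~\ref{lem:bonus2} and Corollary~\ref{eq:coro}), yields the stated constants. That elimination idea --- bounding wasted points rather than requiring achieved coverage --- is the key ingredient absent from your proposal.
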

    We point out that the polynomial bound can be slightly improved
    and shown to be of order 
    $O_{\P}^{*}(n^{-\frac{\kappa \times \kappa}{d(\kappa -1)}})$
    in the case where the function is 
    locally equivalent to $\norm{x^{\star}-x}_2^{\kappa}$
    ({\it i.e.,} when $\exists c_{\kappa},c_2>0$, 
    $c_{\kappa} \norm{x^{\star}-x}_2^{\kappa} 
    \leq f(x^{\star}) -f(x) \leq c_2 \norm{x^{\star}-x}_2^{\kappa}$).
    The last result we provide states an exponentially decreasing lower bound.
  \begin{theorem}
  \label{th:lower_bound}
   {\sc (Lower Bound)}
   For any $f\in\Lip(k)$ 
   satisfying Condition \ref{cond} for some $\kappa \geq 1, c_{\kappa} > 0$ and
   any $n\in\mathbb{N}^{\star}$ and $\delta \in (0,1)$, we have 
   with probability at least $1-\delta$,
   \begin{align*}
    c_{\kappa} \cdot 
    \rad{\X}^{\kappa} \cdot 
    e^{ -\frac{\kappa}{d} \left( n + \sqrt{2n\ln(1/\delta)}  + \ln(1/\delta) \right)}
    \leq \max_{x \in \X}f(x) - \max_{i=1\dots n}f(X_i).
   \end{align*}
  \end{theorem}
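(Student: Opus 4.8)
The plan is to reduce the statement to a lower bound on how close the evaluation points can get to the optimizer $x^{\star}$. Writing $\Delta_n := \max_{x\in\X}f(x)-\max_{i=1\dots n}f(X_i)$ and using that $x^{\star}$ is the unique maximizer, Condition \ref{cond} gives for every $i$ that $f(x^{\star})-f(X_i)\ge c_{\kappa}\norm{X_i-x^{\star}}_2^{\kappa}$. Taking the minimum over $i$ and noting $\Delta_n=\min_{i}\,(f(x^{\star})-f(X_i))$ yields $\Delta_n \ge c_{\kappa}\bigl(\min_{i=1\dots n}\norm{X_i-x^{\star}}_2\bigr)^{\kappa}$. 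Hence it suffices to prove that, with probability at least $1-\delta$, the closest approach $\rho_n:=\min_{i=1\dots n}\norm{X_i-x^{\star}}_2$ satisfies $\rho_n \ge \rad{\X}\cdot e^{-\frac1d\left(n+\sqrt{2n\ln(1/\delta)}+\ln(1/\delta)\right)}$; raising this to the power $\kappa$ and multiplying by $c_{\kappa}$ recovers the claim.

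\textbf{Per‑step mechanism.}
To control $\rho_n$ I would exploit the exact law of the evaluation points. By Lemma \ref{lem:potential} together with the rejection step of LIPO, conditionally on the history the next evaluated point $X_{t+1}$ is uniformly distributed on the current set of potential maximizers $\X_{k,t}$. The key geometric observation is that $\X_{k,t}$ always contains a ball around $x^{\star}$: since the upper bound satisfies $UB_{k,t}(x^{\star})\ge f(x^{\star})\ge\max_{i\le t}f(X_i)$ and $UB_{k,t}$ is itself $k$-Lipschitz, a one‑line computation gives $\X_{k,t}\supseteq B(x^{\star},\Delta_t/k)\cap\X$, where $\Delta_t$ is the error at step $t$. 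Consequently, for $v_d:=\mu(B(0,1))$, the probability that $X_{t+1}$ falls within distance $s$ of $x^{\star}$ is $\mu(B(x^{\star},s)\cap\X_{k,t})/\mu(\X_{k,t})\le v_d\,s^d/\mu\bigl(B(x^{\star},\Delta_t/k)\cap\X\bigr)$. Comparing these volumes, and using convexity of $\X$ to guarantee that $B(x^{\star},\rho)\cap\X$ occupies a fixed fraction of the full ball even when $x^{\star}$ lies near the boundary, produces a per‑step stochastic lower bound of the multiplicative form $\min(\rho_t,\norm{X_{t+1}-x^{\star}}_2)\ge \rho_t\cdot U_t^{1/d}$, where $U_t$ is (stochastically) uniform on $[0,1]$ and independent across steps.

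\textbf{Telescoping and the exponential tail.}
Telescoping this recursion from the inscribed‑ball radius gives $\rho_n \ge \rad{\X}\prod_{t=1}^{n}U_t^{1/d}$, i.e. $\ln(\rad{\X}/\rho_n)\le \tfrac1d\sum_{t=1}^{n}\ln(1/U_t)$. Since $-\ln U_t\sim\mathrm{Exp}(1)$, the sum $S_n=\sum_{t=1}^{n}(-\ln U_t)$ has a $\mathrm{Gamma}(n,1)$ law, and each summand is sub‑gamma with variance and scale parameters equal to one; the standard sub‑gamma upper tail then yields $\P\bigl(S_n\ge n+\sqrt{2n\ln(1/\delta)}+\ln(1/\delta)\bigr)\le\delta$. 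On the complementary event $\rho_n\ge \rad{\X}\,e^{-\frac1d\left(n+\sqrt{2n\ln(1/\delta)}+\ln(1/\delta)\right)}$, and combining with the reduction above gives exactly the stated bound. It is worth noting that this sub‑gamma tail is precisely what produces the otherwise unusual $n+\sqrt{2n\ln(1/\delta)}+\ln(1/\delta)$ term in the exponent.

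\textbf{Main obstacle.}
The hard part is making the per‑step multiplicative recursion rigorous while keeping the distance bound free of $\kappa$. The clean factor $U_t^{1/d}$ requires two things: first, a convexity/cone estimate lower bounding $\mu(B(x^{\star},\rho)\cap\X)$ uniformly in the location of $x^{\star}$, so that the shrinking denominator $\mu(\X_{k,t})$ does not corrupt the uniform‑factor coupling; and second, relating the inscribed radius $\Delta_{t+1}/k$ of the updated potential set back to the distance actually achieved by $X_{t+1}$, which is exactly where Condition \ref{cond} governs how fast the potential region may collapse and where the exponent $\kappa$ ultimately enters. Once the recursion is established as a genuine product of independent uniform factors, the concentration step is routine.
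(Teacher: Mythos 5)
Your two endpoints are sound and match the paper's: the reduction $\Delta_n \ge c_{\kappa}\,\rho_n^{\kappa}$ via Condition \ref{cond}, and the Gamma$(n,1)$ upper tail that produces the term $n+\sqrt{2n\ln(1/\delta)}+\ln(1/\delta)$. The correct observations that the evaluated point $X_{t+1}$ is conditionally uniform on $\X_{k,t}$ and that $B(x^{\star},\Delta_t/k)\cap\X\subseteq\X_{k,t}$ are also part of the paper's machinery. But the middle step --- the per-step recursion $\min(\rho_t,\norm{X_{t+1}-x^{\star}}_2)\ge\rho_t\cdot U_t^{1/d}$ --- is a genuine gap, and it cannot hold in this distance-based form. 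The ball you can certify inside $\X_{k,t}$ has radius $\Delta_t/k$, which by Condition \ref{cond} is only guaranteed to be of order $(c_{\kappa}/k)\,\rho_t^{\kappa}$; for $\kappa>1$ this is of strictly smaller order than $\rho_t$, so your conditional bound reads $\P(\norm{X_{t+1}-x^{\star}}_2\le s\rho_t \mid \text{history}) \le s^d$ times a factor of order $\rho_t^{d(1-\kappa)}$, which diverges exactly in the regime $\rho_t\to 0$ where you need the coupling. Even in the most favorable case $\kappa=1$, $c_{\kappa}=k$, convexity gives $\mu(B(x^{\star},sr)\cap\X)\ge s^d\,\mu(B(x^{\star},r)\cap\X)$, i.e.\ the small-ball intersection is proportionally \emph{larger} than the large-ball one; the inequality you need goes the other way and picks up a location-dependent constant strictly greater than $1$ per step (e.g.\ close to $2^d$ when $x^{\star}$ sits near a corner and $\rho_t$ reaches past the boundary while $s\rho_t$ does not). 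A per-step constant $C>1$ telescopes into an extra factor $C^{-n}$ on $\rho_n$, so at best you would prove the theorem with $n(1+d\ln C)$ in place of $n$ in the exponent --- not the stated bound, whose coefficient on $n$ is exactly $\kappa/d$.

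The paper's proof avoids this by running the same stochastic-dominance idea on super-level-set \emph{volumes} rather than on distances to $x^{\star}$. Let $V_t=\mu(\{x\in\X: f(x)\ge\max_{i\le t}f(X_i)\})$. Since $f\in\Lip(k)$, every point $x$ of this super-level set satisfies $\min_{i\le t}f(X_i)+k\norm{x-X_i}_2\ge f(x)\ge\max_{i\le t}f(X_i)$, so the whole set is contained in $\X_{k,t}$; hence, conditionally on the history, the probability that $X_{t+1}$ lands in any subset of volume $uV_t$ is at most $uV_t/\mu(\X_{k,t})\le u$. This yields that $V_n/\mu(\X)$ stochastically dominates $\prod_{i=1}^n U_i$ \emph{exactly}, with no geometric constants and no $\kappa$ (this is the pure-adaptive-search argument of Theorem 17 in \cite{malherbe2016ranking}, in the spirit of \cite{zabinsky1992pure}). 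Condition \ref{cond} and the inscribed ball are then invoked only once, at the very end: $\X_{\epsilon}\subseteq B(x^{\star},(\epsilon/c_{\kappa})^{1/\kappa})\cap\X$ gives $\mu(\X_{\epsilon})/\mu(\X)\le\left((\epsilon/c_{\kappa})^{1/\kappa}/\rad{\X}\right)^d$, and your Gamma concentration step concludes. So the missing idea is: track volumes, not distances. The ``main obstacle'' you flag is not a technicality to be patched later; it is precisely the point where the distance formulation is fatally lossy, and the volume formulation is what makes the constants disappear.
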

  The next section provides 
  an explicit derivation of the fast rate on some toy examples
  and a discussion on these results can be found in Section \ref{sec:compareason_lipo}
  where LIPO is compared with similar algorithms.

  \subsection{Examples}
  \label{sec:examples}
  The next examples consider that the optimization is performed
  over the hypercube $\X =[-R,R]^d$ for some fixed $R>0$ and $d\geq 1$.\\

    \noindent {\bf Sphere function.} The sphere function $f(x) = 1 - \norm{x}_2$ 
    is the canonical example of Lipschitz function. 
    For this function, 
    the Lipschitz continuity is a direct consequence of the triangle inequaliy: 
    $\forall (x,y) \in \X^2$,  we have that
    $
     \abs{f(x) - f(y)} = \norm{x}_2 - \norm{y}_2= \norm{y + x- y }_2  - \norm{y}_2
     \leq  \norm{x-y}_2
    $
    by assuming w.l.og.~that $\norm{x}_2\geq \norm{y}_2$.
    Observing  now that  $x^{\star} = \vec{0}$ and
    $
     f(x^{\star})  - f(x) =   \norm{x^{\star} -x }_2
    $
    for all $x \in \X$, it is easy to see that
    Condition \ref{cond} is satisfied with $\kappa = 1$ and $c_{\kappa} = 1$.
    Hence, running LIPO tuned with any $k\geq1$ would provide 
    an exponenitally decreasing rate of order $O^{*}_{\P}(e^{- n/2(16k\sqrt{d})^d })$.\\

    \noindent {\bf Linear slope.}
    The second class of functions we consider are the linear functions
    of the form $f(x) = 1-\inner{w,x}$ with weight vectors $w \in \R^d$.
    Applying Cauchy-Schwartz inequality directly gives the Lipschitz continuity: 
    $\forall (x,y)\in \X^2$,
     $
     \abs{f(x) - f(y)} =\abs{  \inner{w, x-y} }
     \leq \norm{w}_2 \cdot \norm{x-y}_2.
    $
    In the case of non-zero weights,
    $x^{\star} = -R\times(\sgn{w_1},\dots, \sgn{w_d})$ and 
    we have for all $x \in \X$,
    $
     f(x^{\star} ) - f(x)  = \inner{ w, x- x^{\star}}
     \geq \min_{i=1\dots d}|w_i| \cdot  \norm{x^{\star} - x}_2.
    $
    Therefore, Condition \ref{cond} is satisfied with $\kappa=1$ and 
   $c_{\kappa} = \min_{i=1\dots d} |w_i|$ and we deduce that
   running LIPO with any $k\geq \norm{w}_2$ would provide 
   a decay of order $O^{*}_{\P}(e^{- n/2(16k\sqrt{d} 
   \min|w_i|)^d })$.\\

    \noindent {\bf Largest coordinate.}
    The last function we consider is the largest coordinate function
    $f(x) =  1 - \max\{|x_1|,\dots, |x_d|\}$. 
    Taking any $(x,y) \in \X^2$
    and denoting by $i(x)$ and $i(y)$ the indexes of (one of)
    their largest absolute coordinate we obtain
    that
    $
     \abs{f(x) -f(y) } = |x_{i(x)}| - |y_{i(y)}| \leq 
     |x_{i(x)}| -|y_{i(x)}|
     \leq |x_{i(x)} -y_{i(x)}|
     \leq \norm{x-y}_2
    $
    by assuming w.l.o.g.~that $x_{i(x)}\geq y_{i(y)}$.
    Noticing that $x^{\star}=\vec{0}$
    and that
    $
     f(x^{\star}) - f(x) = \max_{i=1\dots d}|x_i| 
     \geq \norm{x^{\star} - x}_2/\sqrt{d}
    $
    for all $x \in \X$,
    we deduce that Condition \ref{cond} is satisfied with 
    $\kappa =1$ and $c_{\kappa} =1/\sqrt{d}$.
    Thus, running LIPO with any $k\geq 1$
    would provide an exponential decay of order $O^{*}_{\P}(e^{- n/2(16kd)^d })$.
  
  \noindent Note that the polynomial bounds could also be derived from the previous examples
  by slightly adapting the functions. For instance, 
  it is easy to see that the function $f(x) =1- \norm{x}_2^{\kappa}$ 
  satifies Condition \ref{cond} with $\kappa\geq0$ and $c_{\kappa}=1$.

  \subsection{Comparison with existing works}
  \label{sec:compareason_lipo}
  
    {\bf Algorithms.} The Piyavskii algorithm  (\cite{piyavskii1972algorithm})
    is a sequential algorithm which requires the knowledge of the
    Lipschitz constant $k\geq 0$ and iteratively evaluates the function over a point
    $X_{t+1} \in \arg\max_{x \in \X} UB_{k,t}(x)$
    which maximizes the upper bound on possible values
    $UB_{k,t}(x) :=\min_{i=1\dots t} f(X_i) + k\cdot \norm{x-X_i}$.
    \cite{munosmono} also proposed a similar algorithm (DOO)
    that requires the knowledge of a semi-metric 
    $\ell: \X\times\X \to \R^+$ for which the function is at least locally smooth
    ({\it i.e.,~}$\forall x \in \X$, $f(x^{\star}) -f(x) \leq \ell(x, x^{\star})$)
    and a hierarchical partitioning of the space $\X$
    in order to sequentially expand and evaluate the function over the center of a partition
    which has recorded the highest upper bound computed according to the semi-metric $\ell$.
    With regards to those works, the LIPO algorithm
    aims at optimizing globally Lipschitz functions and
    combines space-filling and exploitation rather than pure exploitation.
    Recall indeed that LIPO evaluates the function over
    any point which might improve the function values (see Lemma \ref{lem:potential})
    while DOO and the Piyavskii algorithm sequentially select among a restricted set of points
    the next evaluation point which have recorded the highest upper bound
    on possible values.\\

    \noindent {\bf Results.}
    To the best of our knowledge, only the consistency of the Piyavskii algorithm
    was proven in \cite{mladineo1986algorithm} and \cite{munosmono} derived 
    finite-time upper bounds for DOO 
    with the use of weaker local smoothness assumptions.
    To cast their results into our framework, we thus considered
    DOO  tuned with the semi-metric  $\ell(x,x')= k\cdot \norm{x-x'}_2$ over 
    the domain $\X=[0,1]^d$ partitioned into a $2^d$-ary tree of hypercubes
    and with  $f$ belonging to the sets of globally smooth functions:
   (a) $\Lip(k)$, (b) $\mathcal{F}_{\kappa}
  \!\!=\! \{ f \in \Lip(k) \text{~satisfying Condition \ref{cond} with~}
  c_{\kappa}, \kappa \geq 1\}$
  and (c) $\mathcal{F}'_{\kappa} = \{f \in \mathcal{F}_{\kappa}: 
  \exists c_2>0,~f(x^{\star}) -f(x) \leq c_2 \norm{x - x^{\star}}_2^{\kappa}\}$.
  The results of the comparison can be found in Table \ref{tab:comparisonlipopt}.
  In addition to the novel lower bounds and the rate over $\Lip(k)$, 
  we were able to obtain similar upper bounds as DOO over $\mathcal{F}_{\kappa}$,
  uniformly better rates for the functions in $\F'_{\kappa}$
  locally equivalent to $\norm{x^{\star}-x}_2^{\kappa}$ with $\kappa>1$
  and up to a constant factor a similar exponenital rate when $\kappa=1$ .
  Hence, when $f$ is only known to be $k$-Lipschitz,
  one thus should expect the algorithm exploiting the global smoothness (LIPO)
  to perform asymptotically better or at least similarly
  to the one using the local smoothness (DOO) or no information (PRS).
  However, keeping in mind that the constants are not necessarily optimal,
  it is also interesting to note that the term $(k\sqrt{d}/c_{\kappa})^d$ 
  appearing in both the fast rates of LIPO and DOO tends to suggest that if $f$ is 
  also known to be locally smooth for some $k_{\ell}\! \ll\! k$,
  then one should expect the algorithm exploiting the local smoothness $k_{\ell}$ 
  to be asymptotically faster than
  the one using the global smoothness $k$ in the case where $\kappa=1$.
  

     \begin{table}[!h]
     \vspace{0.5em}
  \centering
  {\small
  \begin{tabular}{@{}lccccc@{}}
  \toprule
  \textbf{Algorithm} & {\textbf{DOO}} & \textbf{LIPO} & 
  \textbf{Piyavskii} & \textbf{PRS} \\ 
  \midrule
  $f\in\Lip(k)$ &       &      &   &    \\
  Consistency &    \checkmark       &        \checkmark         &      \checkmark
  &      \checkmark       \\[-.5ex]
  Upper Bound &  -   &   $O_{\P}(n^{-\frac{1}{d}})$  &   - &       $O_{\P}(n^{-\frac{1}{d}})$       \\
  &&&& 
  \\[-3ex]
  \\\cmidrule[0.5\lightrulewidth]{2-5}
  $f \in \F_{\kappa}$, $\kappa>1$  &&&& \\[-0.5ex]
  Upper bound & $O(n^{-\frac{ \kappa}{d(\kappa~\!\minus~\!1)}})$ & 
  $O^{*}_{\P}(n^{-\frac{ \kappa}{d(\kappa~\!\minus~\!1)}})$& - & $O_{\P}(n^{-\frac{1}{d}})$\\ 
  Lower bound & - &  $\Omega^{*}_{\P}(e^{- \frac{\kappa}{d}n})$ & - & $\Omega_{\P}(n^{-\frac{\kappa}{d}})$ \\ 
  &&&& 
  \\[-3ex]
  \\\cmidrule[0.5\lightrulewidth]{2-5}
  $f \in \F'_{\kappa}$, $\kappa>1$  &&&& \\[-0.5ex]
  Upper bound 
  & $O(n^{-\frac{ \kappa}{d(\kappa~\!\minus~\!1)}})$   
  & $O^{*}_{\P}(n^{-\frac{ \kappa \times \kappa}{d(\kappa~\!\minus~\!1)}})$
  & - 
  & $O_{\P}(n^{-\frac{\kappa}{d}})$\\ 
  Lower bound &  -   & $\Omega^{*}_{\P}(e^{- \frac{\kappa}{d}n})$ & - & $\Omega_{\P}(n^{-\frac{\kappa}{d}})$\\ 
    \\[-3ex]
  \\\cmidrule[0.5\lightrulewidth]{2-5}
  $f \in \F'_{\kappa}$, $\kappa=1$ &&&& \\[-0.5ex]
  Upper bound &   
  $O(e^{ -\frac{n \ln(2)}{(2k\sqrt{d}/c_{\kappa})^d} })$ 
  &  
  $O_{\P}^{*}(e^{ -\frac{n \ln(2)}{2(16k\sqrt{d}/c_{\kappa})^d} })$ 
  & - & $O_{\P}(n^{-\frac{1}{d}})$\\ 
  Lower bound &  -   & $\Omega^{*}_{\P}(e^{- \frac{n}{d}})$ & - & $\Omega_{\P}(n^{-\frac{1}{d}})$\\ 
  \bottomrule
  \end{tabular}
  \vspace{-0.2em}
  \caption{Comparison of the results reported over the difference
  $\max_{x \in \X}f(x) - \max_{i=1\dots n}f(X_i)$ 
  in global optimization literature.
  Dash symbols are used when no results could be found.}\label{tab:comparisonlipopt}
  }
  \end{table}

  
\section{Optimization with unknown Lipschitz constant}
  \label{sec:adalipopt}
  
  In this section, we consider the problem of optimizing 
  any unknown function $f$ in the class $\bigcup_{k \geq 0} \text{Lip}(k)$.
  
  \subsection{The adaptive algorithm}

  The AdaLIPO algorithm  (displayed in Figure \ref{fig:adalipo})
  is an extension of LIPO which involves an estimate
  of the Lipschitz constant and takes as input a parameter $p \in (0,1)$ 
  and a nondecreasing sequence of Lipschitz constant $k_{i \in \mathbb{Z}}$ defining a meshgrid of $\R^+$
  ({\it i.e.~}such that
  $
  \label{eq:lip_constant}
   \forall x >0,~ \exists i \in \mathbb{Z} \text{~~~with~~} k_i \leq x \leq k_{i+1}
  $.)
  The algorithm is initialized with a Lipschitz constant $\hat{k}_1$ set to $0$ and 
  alternates randomly between two distinct phases: 
  {exploration} and {exploitation}.
  Indeed,  at step $t <n$, a  Bernoulli random variable 
  $B_{t+1}$ of parameter $p$  which drives this trade-off is sampled.
  If $B_{t+1}=1$, then the algorithm explores the space 
  by evaluating the function over a point uniformly sampled over $\X$.
  Otherwise, if $B_{t+1}=0$, the algorithm exploits the previous evaluations
  by making an iteration of the LIPO algorithm
  with the smallest Lipschitz constant of the sequence $\hat{k}_t$
  which is associated with a subset of Lipschitz functions
  that probably contains $f$ (step abbreviated in the algorithm
  by $X_{t+1}\sim \mathcal{U}(\X_{\hat{k}_t,t})$).
  Once an evaluation has been made,  the Lipschitz constant estimate $\hat{k}_t$
  is updated.

\RestyleAlgo{boxed}
\begin{figure}[t!]
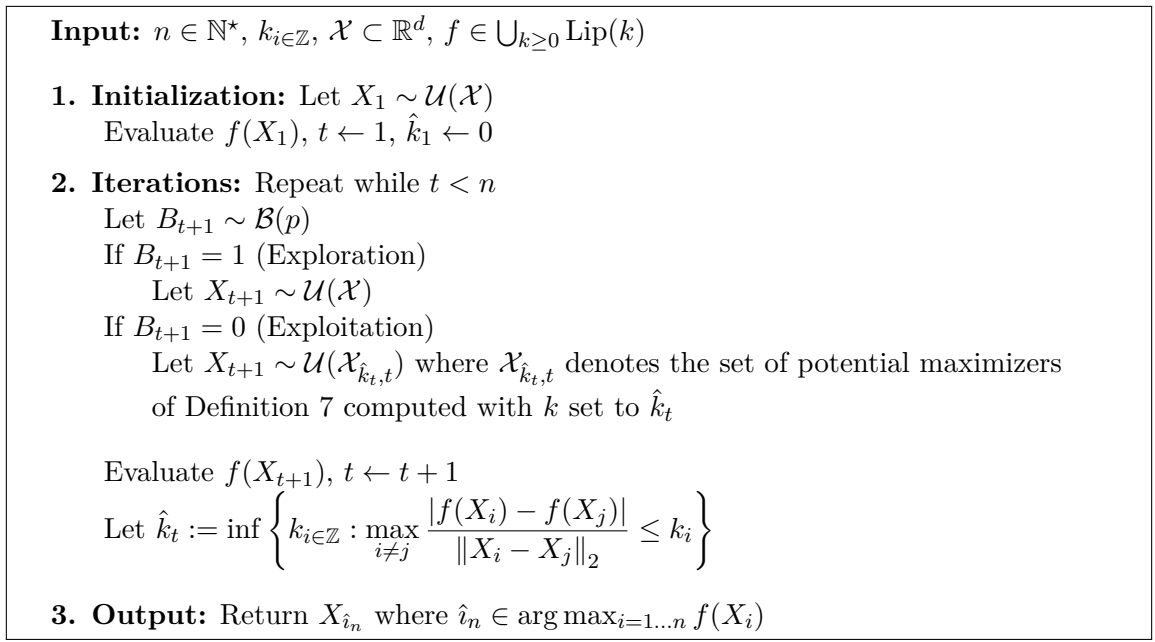

  \begin{algorithm}[H]
  \textbf{Input:} $n \in \mathbb{N}^{\star}$, $k_{i \in \mathbb{Z}}$,
  $\X\subset \R^d$, $f \in \bigcup_{k\geq 0}\Lip(k)$\\%
  \vspace{1em}
 \textbf{1. Initialization:} Let $X_1 \sim \mathcal{U}(\X)$\\
 \textcolor{white}{.....} Evaluate $f(X_1)$, 
 $t \leftarrow 1$, $\hat{k}_1 \leftarrow 0$\\\vspace{-0.7em}
 ~\\
 \textbf{2. Iterations:} Repeat while $t <n$ \\
	\textcolor{white}{.....} Let $B_{t+1} \sim \mathcal{B}(p)$\\
    \textcolor{white}{.....} If  $B_{t+1} = 1$ (Exploration)\\
    \textcolor{white}{...........} Let $X_{t+1}\sim\mathcal{U}(\X)$ \\
    \textcolor{white}{.....} If  $B_{t+1} = 0$ (Exploitation)\\
    \textcolor{white}{...........} Let $X_{t+1}\sim\mathcal{U}(\X_{\hat{k}_t,t})$ 
    where $\X_{\hat{k}_t,t}$ denotes the set of  potential maximizers\\
    \textcolor{white}{...........}  of Definition \ref{def:potential}
    computed with $k$ set to $\hat{k}_t$\\\vspace{1em}
    \textcolor{white}{.....} Evaluate $f(X_{t+1})$, $t \leftarrow t+1$\\
    \textcolor{white}{.....} Let 
    $\displaystyle{\hat{k}_t := \inf\left\{ k_{i \in \mathbb{Z}}
    : \max_{i\neq j} \frac{|f(X_i) -f(X_j)|}{\norm{X_i-X_j}_2}  \leq k_i \right\} }$\\\vspace{1em}
\textbf{3. Output:} Return $X_{\hatin}$ where $\hatin \in \arg\max_{i=1\dots n}f(X_i)$
\end{algorithm}
  \vspace{-0.5em}
  \caption{The AdaLIPO algorithm}
  \label{fig:adalipo}
\end{figure}

    \begin{remark}
   {\sc (Examples of meshgrids)}
   Several sequences of Lipschitz
   constants with various shapes could be considered such as 
   $k_i=$ $|i|^{\sgn{i}}$, $\ln(1 + |i|^{\sgn{i}})$ or $(1+\alpha)^i$ for some $\alpha>0$.
   It should be noticed in particular
   that the computation of the estimate is straightforward with these sequences.
   For instance, when $k_i = (1+\alpha)^i$, we have 
   $\hat{k}_t = (1+\alpha)^{i_t}$ where  $i_t = \ceil{ \ln(\max_{i\neq j}|f(X_j)
  -f(X_l)|/\lVert X_j -X_l \rVert_2)/ \ln(1+\alpha)}$.
  \end{remark}

  \begin{remark}
   {\sc (Alternative Lipschitz constant estimate)} 
   Due to the generecity of the algorithm,
   we point out that any Lipschitz constant estimate such as the one proposed 
   in \cite{wood1996estimation} or \cite{bubeck2011lipschitz}
   could also be considered to implement the algorithm.
   However, as the analysis requires the estimate to be universally consistent
   (see Section below),
   we will only consider the proposed one that presents such a property.
  \end{remark}

  \subsection{Convergence analysis}

  \noindent{\bf Lipschitz constant estimate.} 
  Before starting the analysis of the algorithm,
  we first provide a control on the  Lipschitz constant 
  estimate based on a sample of random evaluations that will be useful to analyse 
  its performance.
  More precisely, the next result illustrates the purpose 
  of using  a discretization of Lipschitz constant instead
  of a raw estimate of the maximum slope 
  by showing that, given this estimate,
  a small subset of functions containing the unknown function can be recovered in a finite-time.

  \begin{proposition}
  \label{prop:lip_estimate}
  Let $f\in\bigcup_{k \geq 0}\Lip(k)$ be any non-constant Lipschitz function.
  Then, if $\hat{k}_t$ denotes the Lipschitz constant estimate
  of Algorithm 2 computed with any increasing sequence $k_{i \in\mathbb{Z}}$ defining a meshgrid of $\R^+$
  over a sample $(X_1, f(X_1)),\dots, (X_t, f(X_t))$ of $t\geq 2$
  evaluations where $X_1,\dots, X_t$ are uniformly and independently distributed over $\X$,
  we have that
  \[
  \P\left( f \in \normalfont{\text{Lip}}(\hat{k}_t) \right) 
  \geq 1 - (1 - \Gamma(f, k_{i^{\star}\minus1}) )^{\lfloor t/2 \rfloor }
  \]
  where the coefficient
  \[
   \Gamma(f, k_{i^{\star}\minus1}) := 
   \P\left( \frac{\abs{f(X_1) - f(X_2)}}{\norm{X_1-X_2}_2} > k_{i^{\star}\minus1}\right)>0
  \]
  with $i^{\star} = \min\{ i\in\mathbb{Z} : f \in \Lip(k_i) \}$,
  is strictly positive.
  \end{proposition}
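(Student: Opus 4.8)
The plan is to reduce the event $\{f \in \Lip(\hat{k}_t)\}$ to a statement about the empirical maximal slope and then bound the complementary event by exploiting the independence of disjoint pairs of sample points.

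First I would establish the characterization of success. Let $L$ denote the (finite, positive) Lipschitz constant of $f$, so that $i^{\star} = \min\{i \in \mathbb{Z} : k_i \geq L\}$ and $f \in \Lip(k_i)$ precisely when $i \geq i^{\star}$. Since $f \in \Lip(k_{i^{\star}})$, every empirical slope $|f(X_i)-f(X_j)|/\norm{X_i-X_j}_2$ is at most $k_{i^{\star}}$, so by definition of $\hat{k}_t$ as the smallest grid value dominating the empirical maximal slope we always have $\hat{k}_t \leq k_{i^{\star}}$. On the other hand $f \in \Lip(\hat{k}_t)$ forces $\hat{k}_t \geq k_{i^{\star}}$, since $\hat{k}_t$ is itself a grid value and $f \in \Lip(k_j)$ holds only for $k_j \geq k_{i^{\star}}$. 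Hence $f \in \Lip(\hat{k}_t)$ holds if and only if $\hat{k}_t = k_{i^{\star}}$, and this in turn is equivalent to
\[
\max_{i \neq j} \frac{|f(X_i)-f(X_j)|}{\norm{X_i-X_j}_2} > k_{i^{\star}\minus1},
\]
because an empirical maximal slope of at most $k_{i^{\star}\minus1}$ would yield $\hat{k}_t \leq k_{i^{\star}\minus1}$.

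Next I would lower bound the probability of this event. Its complement is the event that all empirical slopes are at most $k_{i^{\star}\minus1}$. Rather than treat all (dependent) pairs, I restrict attention to the $\lfloor t/2 \rfloor$ disjoint pairs $(X_1,X_2),(X_3,X_4),\dots$, which are mutually independent because the $X_i$ are i.i.d. Each such pair exceeds the threshold with probability exactly $\Gamma(f,k_{i^{\star}\minus1})$, so the probability that none of them does is $(1-\Gamma(f,k_{i^{\star}\minus1}))^{\lfloor t/2 \rfloor}$. Since the failure event is contained in this last event, combining with the characterization of the first step yields
\[
\P\left(f \in \Lip(\hat{k}_t)\right) \geq 1 - (1-\Gamma(f,k_{i^{\star}\minus1}))^{\lfloor t/2 \rfloor}.
\]

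Finally I would verify that $\Gamma(f,k_{i^{\star}\minus1}) > 0$. By minimality of $i^{\star}$ we have $k_{i^{\star}\minus1} < L$, so $f \notin \Lip(k_{i^{\star}\minus1})$ and there is a pair $(x_0,x_0')$ with slope strictly above $k_{i^{\star}\minus1}$. As $f$ is continuous, the map $(x,x') \mapsto |f(x)-f(x')|/\norm{x-x'}_2$ is continuous away from the diagonal, so the set of pairs exceeding $k_{i^{\star}\minus1}$ is open and nonempty, hence of positive Lebesgue measure in $\X \times \X$; since $X_1,X_2 \sim \mathcal{U}(\X)$ are independent, this gives $\Gamma(f,k_{i^{\star}\minus1}) > 0$. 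I expect the main obstacle to be the characterization in the first step: one must argue carefully that the grid structure pins $\hat{k}_t$ to exactly $k_{i^{\star}}$ on success, whereas the remaining steps are a standard independence-of-disjoint-pairs argument together with a continuity argument.
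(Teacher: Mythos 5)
Your proof is correct and follows essentially the same route as the paper's: reduce the success event to $\{\hat{k}_t = k_{i^{\star}}\}$, i.e.\ to the maximal empirical slope exceeding $k_{i^{\star}-1}$, lower-bound its probability using $\lfloor t/2 \rfloor$ independent disjoint pairs of sample points, and establish $\Gamma(f,k_{i^{\star}-1})>0$ by a continuity-plus-open-neighborhood argument. The only cosmetic differences are your choice of consecutive pairs $(X_1,X_2),(X_3,X_4),\dots$ versus the paper's split-sample pairing $(X_i, X_{\lfloor t/2\rfloor+i})$, and that you spell out the grid-value equivalences that the paper's proof asserts without justification.
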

  The following remarks provide some insights on the quantities
  involved in the bound.

  \begin{remark}
   {\sc (Measure of global smoothness)} 
   The coefficient $\Gamma(f,k_{i^{\star}\minus1})$ which appears in the lower bound 
   of Proposition \ref{prop:lip_estimate} can 
   be seen as a measure of the global smoothness
   of the function $f$ with regards to $k_{i^{\star}\minus1}$.
   Indeed, observing that
   $(1/\floor{t/2})\cdot\sum_{i=1}^{\floor{t/2}}$ $
    \mathbb{I}\{|f(X_i) - f(X_{i+ \floor{t/2}}) | > 
    k_{i^{\star}\minus1} \lVert X_i - X_{\floor{t/2}+i}\rVert_2  \}$ $\xrightarrow{p}
    \Gamma(f,k_{i^{\star}\minus1}\!),
   $
   it is easy to see that this coefficient records the ratio of volume
  the product space $\X\times\X$ where $f$ 
  is witnessed to be at least $k_{i^{\star}\minus1}$-Lipschitz.
  \end{remark}

    \begin{remark}
   {\sc (Density of the sequence of Lipschitz constants)} 
   As a 
   consequence of the previous remark,
   we point out that the density of the sequence of Lipschitz constants
   $k_{i\in\mathbb{Z}}$,
   captured here by $\alpha = \sup_{i \in \mathbb{Z}}(k_{i+1}-k_i)/k_i$,
   has opposite impacts on the maximal deviation of the estimate
   and its convergence rate. Indeed, as
   $\alpha$ is involved in both the following upper bounds on
   the deviation and on the coefficient Gamma: 
   $
    (\lim_{t \to \infty} \hat{k}_t-k^{\star})/k^{\star} \leq \alpha
    $
    and
    $
    \Gamma(f,k_{i^{\star}\minus1}\!)\leq \Gamma(f,k^{\star}/(1+\alpha))
    $
   where $k^{\star} = \sup\{k\geq 0: f \notin \Lip(k) \}$ denotes the optimal 
   Lipschitz constant,
   we deduce that using a dense sequence of Lipschitz constants
   with a small $\alpha$ reduces the bias
   but also the convergence rate through a small coefficient $\Gamma(f, k_{i^{\star}\minus 1})$.
  \end{remark}

     \begin{remark}
     {\sc (Impact of the dimensionality)} 
     Last, we provide a simple result which illustrates the fact that,
     independently of the function,
     the task of estimating the Lipschitz constant becomes harder
     as the dimensionality $d$ grows large.
    Let $f : [0,1]^d \to \R$  be any k-Lipschitz function 
    for some $k\geq 0$ with regards to the Euclidean distance.
    Then, for all $t >0 $,
    \[
     \P\left\{ \frac{ \abs{f(X) -f(X') } }{~\norm{X-X'}_2} > tk\right\} \leq 5	 e^{ - cdt^2 }
    \]
    where $X$ and $X'$ are uniformly distributed over $[0,1]^d$ and
    $c>0$ is some absolute constant.
    Combined with the naive bound 
    $\P(f \in \Lip(\hat{k}_t)) \leq t^2\Gamma(f,k_{i^{\star}-1})$,
    this result shows that one should collect at least 
    $\Omega_{\P}(e^{c d k/k_{i^{\star} \minus 1 } })$
    evaluation points to succefully estimate the constant  $k_{i^{\star}}$.
   \end{remark}
%
  Equipped with this result, we may now turn to the analysis of AdaLIPO.\\
  
  \noindent {\bf Analysis of AdaLIPO.} 
  Given the consistency equivalence of Proposition 
  \ref{prop:consistency_equivalence},
  one can directly obtain the following asymptotic result.
  \begin{proposition}
  \label{prop:consistency_adalipopt}
  {\sc (Consistency)}
    The AdaLIPO algorithm tuned with
    any parameter $p\in (0,1)$
    and any sequence of Lipschitz constant $ k_{i \in \mathbb{Z}}$ 
    defining a meshgrid of $\R^{+}$\! is
    consistent over the set of Lipschitz functions, {\it i.e.,}
    \[
      \forall f \in \textstyle{ \bigcup_{k \geq 0}\Lip(k)}
      ,~~\displaystyle{\max_{i=1\dots n}f(X_i) \xrightarrow{p}  \max_{x \in \X} f(x)}.
    \]
  \end{proposition}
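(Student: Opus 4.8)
The plan is to reduce the claim to the space-filling characterization of Proposition~\ref{prop:consistency_equivalence}. By that result, an algorithm is consistent over the class of Lipschitz functions if and only if its covering radius vanishes, so it suffices to prove that for every $f\in\bigcup_{k\geq0}\Lip(k)$ we have $\sup_{x\in\X}\min_{i=1\dots n}\norm{X_i-x}_2\xrightarrow{p}0$. The key observation is that, since the exploration probability $p$ is strictly positive, AdaLIPO draws a growing subsample of points uniformly and independently over all of $\X$, exactly as Pure Random Search does, and adding the remaining (exploitation) points can only decrease the covering radius. Consequently the AdaLIPO covering radius inherits the convergence of a PRS covering radius, irrespective of how the exploitation points and the estimate $\hat{k}_t$ behave.

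Concretely, I would isolate the exploration indices $E_n:=\{1\}\cup\{t\in\{2,\dots,n\}: B_t=1\}$ and set $N_n:=|E_n|$. Because $X_1\sim\mathcal{U}(\X)$ and each subsequent exploration draw is taken from $\mathcal{U}(\X)$ independently of the past, conditionally on the realization of $E_n$ the family $\{X_i\}_{i\in E_n}$ is i.i.d.\ uniform over $\X$. Moreover $N_n$ has the same law as $1+\mathrm{Binomial}(n-1,p)$, so a Chernoff bound gives $\P(N_n<np/2)\to0$, i.e.\ $N_n\to\infty$ in probability. By monotonicity of the covering radius in the evaluated point set,
\[
\sup_{x\in\X}\min_{i=1\dots n}\norm{X_i-x}_2\leq \sup_{x\in\X}\min_{i\in E_n}\norm{X_i-x}_2,
\]
so it is enough to control the right-hand side, which is precisely a PRS covering radius built from $N_n$ independent uniform points.

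The final step applies the Pure Random Search bound of Example~\ref{ex:PRS} conditionally on $N_n=m$: with probability at least $1-\delta$ the right-hand side above is at most $\diam{\X}\cdot((\ln(m/\delta)+d\ln d)/m)^{1/d}$, which tends to $0$ as $m\to\infty$. Fixing $\epsilon,\eta>0$, I would then choose $m_0$ so large that this bound lies below $\epsilon$ for every $m\geq m_0$ when $\delta=\eta/2$, and afterwards choose $n_0$ so that $\P(N_n<m_0)\leq\eta/2$ for all $n\geq n_0$; splitting on the event $\{N_n\geq m_0\}$ yields $\P(\sup_{x\in\X}\min_{i=1\dots n}\norm{X_i-x}_2>\epsilon)\leq\eta$ for every $n\geq n_0$, which is the asserted convergence in probability. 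The only delicate point — the nearest thing to an obstacle — is the conditional independence argument: one must verify that the uniform exploration samples are genuinely independent of the Bernoulli schedule and of the exploitation history, so that conditioning on $\{N_n=m\}$ preserves the i.i.d.\ uniform structure needed to invoke Example~\ref{ex:PRS}. Everything else is routine, since the exploitation points enter only through the favorable monotonicity inequality and never need to be analyzed directly.
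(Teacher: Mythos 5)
Your proof is correct and takes essentially the same approach as the paper: the paper's own proof is the one-line observation that combining the consistency equivalence of Proposition~\ref{prop:consistency_equivalence} with the covering-rate bound of Example~\ref{ex:PRS} gives the result, and your argument is precisely a careful expansion of that sentence. The details you fill in (the exploration subsample is i.i.d.\ uniform of size growing linearly in $n$ with high probability, independence from the Bernoulli schedule and the exploitation history, and monotonicity of the covering radius in the point set) are exactly what the paper leaves implicit.
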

  The next result provides a first finite-time bound on the 
  difference between the maximum and its approximation.
  \begin{proposition}
  \label{prop:adagenericrate}
   {\sc (Upper Bound)} Consider AdaLIPO
   tuned with any $p\in(0,1)$ and
   any sequence $k_{i\in\mathbb{Z}}$ defining a meshgrid of $\R^{+}$\!.
   Then, for any non-constant $f \in\bigcup_{k\geq 0}\Lip(k)$, 
   any $n\in\mathbb{N}^{\star}$ and
   $\delta \in (0,1)$, we have 
   with probability at least $1-\delta$,
   \begin{align*}
    \max_{x \in \X}f(x) - \max_{i=1\dots n}f(X_i) \leq
    k_{i^{\star}} \times \diam{\X}
    \times \left( \frac{5}{p}+  \frac{2\ln(\delta/3)}{p\ln(1-\Gamma(f,k_{i^{\star}
    \text{-}1}))} \right)^{\frac{1}{d}}
    \times \left(  \frac{\ln(3/\delta)}{n} \right)^{\frac{1}{d}}\!\!
   \end{align*}
   where $\Gamma(f,k_{i^{\star}\minus1})$ and $i^{\star}$ 
   are defined as in Proposition \ref{prop:lip_estimate} and $\ln(0)=-\infty$ 
   by convention.
  \end{proposition}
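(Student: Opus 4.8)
The plan is to reduce the regret to a covering quantity, to observe that the purely random exploration rounds already form a Pure Random Search that controls this quantity, and then to pay two separate high-probability prices: one for having enough exploration rounds and one for the Lipschitz-constant estimate reaching the correct grid value. First, because $i^{\star}=\min\{i:f\in\Lip(k_i)\}$, the target is $k_{i^{\star}}$-Lipschitz, so for every realization of $X_1,\dots,X_n$,
\[
\max_{x\in\X}f(x)-\max_{i=1\dots n}f(X_i)\le k_{i^{\star}}\cdot\sup_{x\in\X}\min_{i=1\dots n}\norm{x-X_i}_2 ,
\]
which is exactly the reduction used in Proposition \ref{prop:consistency_equivalence} and Corollary \ref{prop:upperlipopt}. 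It therefore suffices to bound the covering radius of the sample, and it is enough to bound the covering radius of any subsample.

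The key structural observation I would use is that the exploration rounds are \emph{clean}. Conditionally on the Bernoulli schedule $B_2,\dots,B_n$, the points collected on $\mathcal{E}=\{1\}\cup\{t+1:B_{t+1}=1\}$ are i.i.d.\ $\mathcal{U}(\X)$ and independent of the interleaved, data-dependent exploitation points (which can only decrease the covering radius). Hence $\{X_t:t\in\mathcal{E}\}$ is a genuine Pure Random Search of size $N_e:=|\mathcal{E}|$, and since restricting to a subsample only enlarges the minimal distances,
\[
\sup_{x\in\X}\min_{i=1\dots n}\norm{x-X_i}_2\le\sup_{x\in\X}\min_{t\in\mathcal{E}}\norm{x-X_t}_2 ,
\]
a quantity controlled by Example \ref{ex:PRS} (equivalently by Corollary \ref{prop:upperlipopt} applied to $N_e$ uniform points). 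As $N_e-1\sim\mathrm{Bin}(n-1,p)$, a lower-tail Chernoff bound yields, with probability at least $1-\delta/3$, a lower bound on $N_e$ of order $pn$; this is the origin of the $1/p$ factor and of the numerical constant in the $5/p$ term.

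The factor $\Gamma(f,k_{i^{\star}-1})$ enters through the estimation budget. Applying Proposition \ref{prop:lip_estimate} to the uniform exploration points gives $\P(f\in\Lip(\hat{k}_t))\ge 1-(1-\Gamma(f,k_{i^{\star}-1}))^{\lfloor N_e/2\rfloor}$, so to certify the estimate correct with probability at least $1-\delta/3$ I must secure at least $\approx 2\ln(\delta/3)/\ln(1-\Gamma(f,k_{i^{\star}-1}))$ exploration rounds; on that event the exploitation set $\X_{\hat{k}_t,t}$ of Definition \ref{def:potential} provably contains $x^{\star}$, which is what keeps the exploitation rounds consistent with (rather than detrimental to) the Pure Random Search comparison of Proposition \ref{prop:fasterprs}. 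Reading this additive requirement on exploration rounds back as a requirement on the total budget $n$ and inserting it into the covering bound produces the composite factor $\big(5/p+2\ln(\delta/3)/(p\ln(1-\Gamma(f,k_{i^{\star}-1})))\big)^{1/d}$, while the confidence level $\ln(3/\delta)$ comes from the covering step.

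I would then assemble three failure events — too few exploration rounds, an incorrect estimate, and an atypically large covering radius — each of probability at most $\delta/3$, take a union bound, substitute the resulting lower bound on $N_e$ into the Pure Random Search covering bound, and multiply by $k_{i^{\star}}$ via the first display. The main obstacle is precisely this last combination: one must argue carefully that the exploration points remain an i.i.d.\ Pure Random Search despite the interleaved adaptive exploitation rounds (so that Example \ref{ex:PRS} may be invoked conditionally on the schedule), and then reconcile the estimation budget from Proposition \ref{prop:lip_estimate} with the covering budget so as to land on the stated constants; once these two counts are aligned, the remainder is a routine Chernoff and union-bound computation.
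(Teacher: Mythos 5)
Your overall strategy --- discard the exploitation rounds, observe that conditionally on the Bernoulli schedule the exploration rounds form a genuine Pure Random Search of size $N_e$ with $N_e-1\sim\mathrm{Bin}(n-1,p)$, and combine a lower-tail Chernoff bound on $N_e$ with a PRS guarantee --- is viable and genuinely different from the paper's proof. But the quantitative tool you plug into it fails. Example \ref{ex:PRS} bounds the \emph{covering radius} and carries the factor $(\ln(n/\delta)+d\ln(d))/n$; it is \emph{not} equivalent to Corollary \ref{prop:upperlipopt}, which bounds the optimization gap directly through the level-set argument of Proposition \ref{prop:cvg_prs} (only the ball $B(x^{\star},\epsilon/k)$ around the maximizer matters there, hence no net, no $\ln n$, no $d\ln(d)$). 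Since the bound you must prove contains only $\ln(3/\delta)/n$, your route yields for large $n$ a bound of order $\left(\ln(n)/(pn)\right)^{1/d}$, which eventually exceeds the stated bound of order $\left(\mathrm{const}/n\right)^{1/d}$: the covering-radius detour cannot establish the proposition. The repair is to drop the covering radius altogether, use $\max_{i=1\dots n}f(X_i)\geq\max_{t\in\mathcal{E}}f(X_t)$, and apply Proposition \ref{prop:cvg_prs} (conditionally on the schedule) to the exploration subsample itself.

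The role you assign to $\Gamma(f,k_{i^{\star}-1})$ is also incorrect. In your decomposition the exploitation points are simply thrown away, and a maximum over a superset of points can only be larger; they cannot be ``detrimental'' whether or not $\hat{k}_t$ is correct, so Proposition \ref{prop:lip_estimate} plays no role in your argument, and ``reading the estimation budget back as a requirement on the total budget'' is not a derivation --- it is reverse-engineering the stated constant. The legitimate way to finish your route is to prove a $\Gamma$-free bound of the form $k_{i^{\star}}\diam{\X}\left(C\ln(3/\delta)/(pn)\right)^{1/d}$ with $C\leq 5$ (multiplicative Chernoff giving $N_e\gtrsim pn/5$ once $n>5\ln(3/\delta)/p$, the smaller-$n$ regime being trivial since the regret never exceeds $k_{i^{\star}}\diam{\X}$), and then observe that the stated inequality follows because the $\Gamma$ term it contains is nonnegative. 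For contrast, in the paper's proof the estimate genuinely matters: there one sacrifices the first $N_2\approx N_1/p$ iterations to guarantee $\hat{k}_t=k_{i^{\star}}$, and then counts \emph{all} remaining $n-N_2$ points --- exploitation included --- as at least as good as uniform draws via the dominance argument of Proposition \ref{prop:fasterprs}; that is precisely where the $\Gamma$ term in the statement comes from, not from any need to protect the comparison against harmful exploitation rounds.
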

  This result might be misleading since it advocates
  that doing pure exploration gives the best rate ({\it i.e.}, when $p \to 1$).
  As Proposition \ref{prop:lip_estimate} provides us with 
  the guarantee that $f \in \Lip(\hat{k}_t)$ within a finite number of iterations
  where $\hat{k}_t$ denotes the Lipschitz constant estimate,
  one can however recover faster convergence rates similar to 
  the one reported for LIPO where the constant $k$ is assumed to be known.

    \begin{theorem}
    \label{th:fastada}
   {\sc (Fast Rates)} 
   Consider the same assumptions as in Proposition \ref{prop:adagenericrate}
   and assume in addition that the function $f$ satisfies Condition \ref{cond}
   for some $\kappa\geq 1$, $c_{\kappa}\geq 0$.
   Then, for any $n\in\mathbb{N}^{\star}$ and $\delta \in (0,1)$, 
   we have with probability at least $1-\delta$,
   \begin{flushleft}
   $\displaystyle{\max_{x \in \X}f(x) -\max_{i=1\dots n} f(X_i) 
   \leq k_{i^{\star}} \times \diam{\X}}\times$
   \end{flushleft}
  \begin{flushright}
   $
   \exp\left( \frac{2\ln(\delta/4)}{p\ln(1-\Gamma(f,k_{i^{\star}\minus 1}))} 
    + \frac{7\ln(4/\delta)}{p(1-p)^2}
    \right) \times 
   \begin{dcases*}
     \exp\left\{- C_{k_{i^{\star}}, \kappa}\cdot
	\frac{n ~ (1-p)\ln(2)}{2\ln(n/\delta) + 4(2\sqrt{d})^d }
	\right\}   \,  ,~~~~~~~~~~\kappa =1  \\
	~\\
      2^{\kappa}
	\left( 
	1 + C_{k_{i^{\star}}, \kappa}\cdot \frac{n(1-p)  (2^{d(\kappa-1)}-1)}{2\ln(n/\delta) +4(2\sqrt{d})^d}
	\right)^{-\frac{\kappa}{d(\kappa-1)}}\!\!\!\!\!\!\!\!\!\!\!\!\!\!\!\!\!\!\!\!\!\!
~,~~~~~~~~~ \kappa>1
    \end{dcases*}
   $
  \end{flushright}
  where $C_{k_{i^{\star}} \kappa} = 
  (c_{\kappa} \max_{x \in \X} \norm{x-x^{\star}}_2^{\kappa-1} /8 k_{i^{\star}})^d$.
  \end{theorem}
  This bound shows the precise impact of the parameters $p$ and $k_{i \in \mathbb{Z}}$
  on the convergence of the algorithm. 
  It illustrates the complexity of the exploration/exploitation
  trade-off through a constant term and a convergence rate
  which are inversely correlated to the exploration parameter
  and the density of the sequence of Lipschitz constants.
  Recall also that as these bounds are of the same order as when $k$ is known,
  the examples of Section \ref{sec:examples} still remain valid.
  We may now compare our results with existing works.
  
  \subsection{Comparison with previous works}
  \label{sec:comparison_ada}
  
  {\bf Algorithms.} The DIRECT algorithm (\cite{jones1993lipschitzian}) is a 
  Lipschitz optimization algorithm where the Lipschitz constant is unknown.
  It uses a deterministic splitting technique of
  the search space in order to sequentially divide and evaluate the function 
  over a subdivision of the space that have recorded the highest 
  upper bound among all subdivisions of similar size
  for at least a possible value of $k$.
  \cite{munosmono} generalized DIRECT in a broader setting by 
  extending the 
  the DOO algorithm to any
  unknown and arbitrary local semi-metric under the name SOO.
  With regards to these works, 
  we proposed an alternative stochastic strategy which directly relies 
  on the estimation of the Lipschitz constant and thus only presents guarantees
  for globally Lipschitz functions.\\
  
  \noindent {\bf Results.} Up to our knowledge, only the consistency property of DIRECT
  was shown in \cite{finkel2004convergence} and \cite{munosmono}
  derived convergence rates for SOO using weaker local smoothness assumptions.
  To compare our results,  we considered SOO tuned with the depth 
  function $h_{\max}(t) = \sqrt{t}$ providing the best rate,
  over the domain $\X=[0,1]^d$ partitioned into a $2^d$-ary tree 
  of hypercubes and with $f$ belonging to the following sets of globally Lipschitz 
  functions: (a) $\bigcup_{k\geq 0}\Lip(k)$,
  (b) $\mathcal{F}_{\kappa}= \{ f \in \bigcup_{k\geq 0}\Lip(k)$ satisfying Condition \ref{cond}
  with $c_{\kappa}, \kappa \geq 1\}$
  and (c) $\mathcal{F}'_{\kappa} = \{ f \in \mathcal{F}_{\kappa}:
  f(x^{\star}) -f(x) \leq c_2 \norm{x-x^{\star}}^{\kappa}_2\}$.
  The result of the comparison can be found in Table \ref{tab:comparisonlipopt}.
  In addition to the novel rate over the class of Lipschitz functions,
  we were also able to obtain a faster polynomial rate than 
  SOO over the set $\F_{\kappa}$.
  However, SOO achieves its best rate of order
  $O(e^{-c\sqrt{n}})$  over the whole set $\{\F_{\kappa}', \kappa\geq 1\}$
  by adapting similarly to any function locally equivalent to $\norm{x^{\star}-x}_2^{\kappa}$
  while AdaLIPO achieves a slower polynomial rate in the case where $\kappa>1$ 
  but an even faster exponential rate of order $O_{\P}^{\star}(e^{-cn})$ when $\kappa=1$.
  Hence, by exploiting the global smoothness of the function (AdaLIPO),
  we were  able to derive a faster convergence rate than the best one reported for the algorithm 
  exploiting the local smoothness (SOO)
  which however remains valid over a larger subset of functions.

      \begin{table}[!h]
     \vspace{0.5em}
  \centering
  {\small
  \begin{tabular}{@{}lccccc@{}}
  \toprule
  \textbf{Algorithm} 
  & {\textbf{AdaLIPO}} 
  & \textbf{DIRECT} 
  & \textbf{PRS} 
  & \textbf{SOO} \\ \midrule
  $f\in\bigcup_{k\geq 0}\Lip(k)$
  &
  &
  &
  & \\
  Consistency 
  & \checkmark
  & \checkmark
  & \checkmark
  & \checkmark       \\[-.5ex]
  Upper Bound 
  & $O^{*}_{\P}(n^{-\frac{1}{d}})$ 
  & - 
  & $O_{\P}(n^{-\frac{1}{d}})$ & - \\
  &
  &
  &
  &
  \\[-3ex]
  \\\cmidrule[0.5\lightrulewidth]{2-5}
  $f \in \F_{\kappa}$, $\kappa>1$
  &
  &
  &
  &
  \\[-0.5ex]
  Upper bound 
  & $O^{*}_{\P}(n^{-\frac{ \kappa}{d(\kappa~\!\minus~\!1)}})$
  & -
  & $O_{\P}(n^{-\frac{1}{d}})$
  & $O(n^{-\frac{ \kappa}{2d(\kappa~\!\minus~\!1)}})$\\
  Lower bound
  & -
  & -
  & $\Omega_{\P}(n^{-\frac{\kappa}{d}})$
  & - \\
  &
  &
  &
  &
  \\[-3ex]
  \\\cmidrule[0.5\lightrulewidth]{2-5}
  $f \in \F'_{\kappa}$, $\kappa>1$  
  &
  &
  &
  & \\[-0.5ex]
  Upper bound 
  & $O^{*}_{\P}(n^{-\frac{ \kappa \times \kappa}{d(\kappa~\!\minus~\!1 )}})$ 
  & -
  & $O_{\P}(n^{-\frac{\kappa}{d}})$ 
  &  $O(e^{-\frac{\sqrt{n}\ln(2) }{ (\sqrt{2d}(c_2/c_{\kappa})^{1/\kappa})^d } })$ \\ 
  Lower bound 
  & -
  & -
  & $\Omega_{\P}(n^{-\frac{\kappa}{d}})$
  & - \\
  \\[-3ex]
  \\\cmidrule[0.5\lightrulewidth]{2-5}
  $f \in \F'_{\kappa}$, $\kappa=1$ 
  &
  &
  &
  & \\[-0.5ex]
  Upper bound
  & $O_{\P}^{*}(e^{ -\frac{n(1-p) \ln(2)}{4(16k_{i^{\star}}\sqrt{d}/c_{\kappa})^d} })$ 
  & -
  & $O_{\P}(n^{-\frac{1}{d}})$
  & $O(e^{-\frac{\sqrt{n}\ln(2) }{ (c_2\sqrt{2d}/c_{\kappa})^d } })$ \\ 
  Lower bound 
  & -   
  & -
  & $\Omega_{\P}(n^{- \frac{1}{d}})$  
  & -\\ 
  \bottomrule
  \end{tabular}
  \vspace{-0.2em}
  \caption{Comparison of the results reported over the difference
  $\max_{x \in \X}f(x) - \max_{i=1\dots n}f(X_i)$ in global optimization literature.
  Dash symbols are used when no results could be found.}
  \label{tab:comparisonlipopt}}
  \end{table}

  \section{Numerical experiments} 
\label{sec:experiments}

  In this section, we compare the empirical performance of 
  AdaLIPO to existing state-of-the-art global optimization methods
  on real and synthetic optimization problems.\\

  \noindent {\bf Algorithms.}\footnote{In Python 2.7 from the libraries: $^{*}$BayesOpt \cite{martinez2014bayesopt}, 
$^{\ddag}$CMA 1.1.06 \cite{CMAES_implementation} and
$^{\dag}$NLOpt \cite{johnson2014nlopt}.}
  Six different types of algorithms developed from various approaches of global optimization
  were considered in addition to AdaLIPO:
  \begin{itemize}
    \item[-] {\textbf{BayesOpt}}$^{*}$ (\cite{martinez2014bayesopt})
    is a Bayesian optimization algorithm.
    It uses a distribution over functions to build a surrogate model of the unknown function.
    The parameters of the distribution are estimated during the optimization process.
    \item[-] \textbf{CMA-ES}$^{\ddag}$ (\cite{hansen2006cma})
    is an evolutionary algorithm. It samples
    the next evaluation points according to a multivariate
    normal distribution with mean vector and covariance matrix
    computed from the previous evaluations.
    \item[-] \textbf{CRS}$^{\dagger}$ (\cite{kaelo2006some})
    is a variant of {\sc PRS} which includes local mutations. It
    starts with a random population and evolves these points by an heuristic rule.
    \item[-] \textbf{\textsc{DIRECT}} (\cite{jones1993lipschitzian}) 
    is the Lipschitz optimization
    algorithm with unknown Lipschitz introduced in Section \ref{sec:comparison_ada}.
    \item[-] \textbf{MLSL}$^{\dagger}$ (\cite{kan1987stochastic})
    is a multistart algorithm. It
    performs a series of local optimizations starting from points randomly chosen 
    by a clustering heuristic that helps to avoid repeated searches of the same local optima.
    \item[-] \textbf{PRS}
    is the standard random covering method described in Section \ref{sec:global_opt}
    (see Example \ref{ex:PRS}).
  \end{itemize}
    For a fair comparison, the tuning parameters were all set to default 
    and AdaLIPO was constantly used
    with a parameter $p$ set to $0.1$ and a sequence $k_i= (1+0.01/d)^i$
    fixed by an arbitrary rule of thumb.

~\\
  \noindent {\bf Data sets.} Following the steps of (\cite{malherbe2016ranking}),
  we considered a series of global optimization problems involving reals data sets
  and a series of synthetic problems:
  \begin{itemize}
   \item[I)]    First, we studied the task of estimating the regularization parameter $\lambda$
    and the bandwidth $\sigma$ of a gaussian kernel ridge regression
    minimizing the empirical mean squared error of the predictions
    over a 10-fold cross validation with real data sets. 
    The optimization was performed over $(\ln(\lambda), \ln(\sigma)) \in [-3,5] \times [-2,2]$
    with five data sets from the UCI Machine Learning Repository \cite{Lichman:2013}: 
    {\it Auto-MPG}, {\it Breast Cancer Wisconsin (Prognostic)},
    {\it Concrete slump test}, {\it Housing} and {\it Yacht Hydrodynamics}.
    \item[II)]  Second, we compared the algorithms on a series of five synthetic problems 
    commonly met in standard optimization benchmark
    taken from (\cite{jamil2013literature, simulationlib}):
    {\it HolderTable}, {\it Rosenbrock}, {\it Sphere}, {\it LinearSlope} and {\it Deb N.1}.
    This series includes multimodal  
    and non-linear functions as well as ill-conditioned and well-shaped functions
    with a dimensionality ranging from 2 to 5.
  \end{itemize}
    A complete description of the test functions of the benchmark
    can be found in Table \ref{tab:func}.

\begin{table}[!h]
\begin{center}
{ \footnotesize \tt
\begin{tabular}{@{}llcc@{}}
\toprule
 {\bf Problem} & {\bf Objective function}  & {\bf Domain}  & {\bf Local max.} \\ \midrule
  & & & \\
   {\bf Auto MPG} 
 &  
 &[-2,4]$\times$[-5,5]
 & - \\
 &\multirow{-2}{*}{-$\displaystyle \tt \frac{1}{10}\sum_{k=1}^{10} \sum_{i \in D_k}$($\tt \hat{f}_k(X_i) -Y_i $)$^{\tt 2}$  } 
 &  &  \\
  &  &  &  \\
 {\bf Breast Cancer~~} 
 & \multicolumn{1}{l}{where:} 
 &~\![-2,4]$\times$[-5,5]
 & - \\
 & \multicolumn{1}{l}{- $\tt \hat{f}_k \in$ $\tt\underset{f\in \mathcal{H}_{\sigma}}{argmin}$
 $\tt \frac{1}{n- |D_k|} \sum_{i \notin \mathcal{D}_k } (f(X_i)- Y_i)^2
 + \lambda \norm{f}_{\mathcal{H}_{\sigma}}$}
 &  &  \\
 &  
 \multicolumn{1}{l}{- the data set \{($\tt X_i, Y_i$)\}$_{\tt i=1}^n$ is split}
 &  &  \\
 {\bf Concrete} 
 & \multicolumn{1}{l}{~~into 10 folds $\tt D_1\dots D_{10}$}
 & [-2,4]$\times$[-5,5]
 & - \\
 & \multicolumn{1}{l}{- $\mathcal{H}_{\sigma}$ denotes the gaussian RKHS of}
 &
 &  \\
 &  
 \multicolumn{1}{l}{~~bandwidth $\sigma$}
 &  &  \\
  {\bf Yacht} 
 & \multicolumn{1}{l}{- $\tt \norm{f}_{\mathcal{H}_{\sigma}}$ is the corresponding norm}
 & [-2,4]$\times$[-5,5]
 &  - \\
 &
 &
 &  \\
  & 
  \multicolumn{1}{l}{- $\tt\sigma =10^{x_1}$}
  &  &  \\
 {\bf Housing} 
 & 
 & [-2,4]$\times$[-5,5]
 & -
 \\
 & \multicolumn{1}{l}{- $\tt\lambda = 10^{x_2}$}
 &  &  \\
 \cmidrule[0.2pt]{1-4}\\
  {\bf Holder Table} 
 & |sin($\tt x_1$)|$\times$|cos($\tt x_2$)| 
 & [-10,10]$^{\tt 2}$
 &  36 \\
 & $ \times$exp(|1-($\tt x_1^2\text{~\!+~\!}x_2^2\text{)}^{1/2}\text{/}\pi$|) 
 &
 &  \\
  &  &  &  \\
 {\bf Rosenbrock} 
&  -~\!$\tt \sum_{i=1}^2 [100(x_{i+1}-x_i^2)^2+(x_i-1)^2]$
 &[-2.048,2.048]$\tt^3$
 & - \\
 &
 &  &  \\
  &  &  &  \\
 {\bf Sphere} & -$\tt \left(\sum_{i=1}^4(x_i- \pi/16)^2 \right)^{1/2}$
 & [0,1]$\tt^4$
 & 1 \\
 & 
 &
 &  \\
 &  &  &  \\
  {\bf Linear Slope }
 &  $\tt\sum_{i=1}^4 10^{\text{(}i-1\text{)/}4 } \text{(}x_i$-~\!\!5)
 &  [-5,5]$^{\tt4}$
 & 1 \\
 &  &  &  \\
 &  &  &  \\
  {\bf Deb N.1} 
 &  $\tt \frac{1}{5} \sum_{i=1}^5$sin$^{\tt6}$(5$\pi x_i$)
 & [-5,5]$^{\tt5}$
 & 36 \\
 &  &  &  \\
 \bottomrule
\end{tabular}
\vspace{-2.5em}
}
\end{center}
\caption{Description of the test functions of the benchmark.
Dash symbols are used when a value could not be calculated.}
\label{tab:func}
\end{table}

~\\
    \noindent {\bf Protocol and performance metrics.} For each problem and each algorithm,
    we performed $K\!=\!\!100$ distinct runs with a budget of $n\!=\!\!1000$ 
    function evaluations.
    For each target parameter $t=$ 90\%, 95\% and 99\%, we have collected
    the stopping times corresponding to the number of evaluations 
    required by each method to reach the specified target 
     $$
      \tau_{k} := \min\{i=1,\dots, n:~
      f(X^{(k)}_{i}) \geq f_{\text{target}}(t)\} 
    $$
    where $\min\{ \emptyset \} = 1000$ by convention, $\{ f(X^{(k)}_{i})\}_{i=1}^n$ denotes 
    the evaluations made by a given method on the $k$-th run with
    $k \leq K$ and the target value is set to 
    $$
      f_{\text{target}}(t) := \max_{x \in \X}f(x)  - 
      \left(\max_{x \in \X}f(x) -\int_{x \in \X} f(x)~\text{d}x/\mu(\X)
      \right) \times (1 - t).
    $$
    The normalization of the target to the average value
    prevents the performance measures from being dependent of any 
    constant term in the unknown function. 
    In practice, the average was estimated from a Monte Carlo sampling 
    of $10^6$ evaluations and the maximum 
    by taking the best value observed over all the sets of experiments. 
    Based on these stopping times, 
    we computed  the average and standard deviation of the number
    of evaluations required to reach the target, {\it i.e.~}
    $$
    \bar{\tau}_K = \frac{1}{K}\sum_{k=1}^K\tau_k \text{~~and~~} \hat{\sigma}_{\tau} = 
    \sqrt{\frac{1}{K}\sum_{k=1}^K (\tau_k - \bar{\tau}_K)^2}.
    $$

  ~\\
  \noindent {\bf Results.}  Results are collected in Figure \ref{tab:results}. 
  Our main observations are the following.
  First, we point out that the proposed method displays very competitive results 
  over most of the problems of the benchmark
  (exepct on the non-smooth  {\it DebN.1} where most methods fail).
  In particular, AdaLIPO obtains several times the best performance for the target $90$\% and
  $95$\% (see, {\it e.g.}, {\it BreastCancer, HolderTable, Sphere})
  and experiments {\it Linear Slope} and {\it Sphere}
  also suggest that, in the case of smooth functions,
  it can be robust against the
  dimensionality of the input space.
    However, in some cases, the algorithm can be witnessed  to 
  reach  the 95\% target with very few evaluations while getting more slowly to the 99\% target
  (see, {\it e.g.}, {\it Concrete, Housing}).
  This problem is due to the instability of the Lipschitz constant estimate
  around the maxima but could certainly be solved 
  with the addition  of a noise parameter that would
  allow the algorithm be more robust against local perturbations.
  Additionally, investigating better values for $p$ and $k_{i}$
  as well as alternative covering methods such as LHS \cite{stein1987large}
  could also be promising approaches to improve its performance.
  However, an empirical analysis of the algorithm
  with these extensions is beyond the scope of the paper
  and will be carried out in a future work.

  \begin{figure*}[!t]
    \centering
  {\footnotesize
\centering
   \begin{tabular}{lccccccccccp{0.00025em}}
\vspace{-2.1em}
\\\cmidrule[0.12em](r{\dimexpr5.74cm+\tabcolsep-6.2cm\relax}){1-6}
{\bf \!\!\!\!Problem} 
& {\bf Auto-MPG} &
{\bf BreastCancer} 
& {\bf Concrete}
& {\bf Housing} 
& {\bf Yacht}
& \!\!\!\!\!\!\!\! 
\\\cmidrule[\lightrulewidth](r{\dimexpr5.74cm+\tabcolsep-6.2cm\relax}){1-6}
{\bf \!\!\!\!AdaLIPO\!\!\!\!\!\!\!\!} 
& 14.6 ($\pm$09) & {\bf05.4} ($\pm$03)& {\bf04.9} ($\pm$02) 
& {\bf05.4} ($\pm$04) & 25.2 ($\pm$21)\\
{\bf \!\!\!\!BayesOpt} & {\bf 10.8} ($\pm$03) & 06.8 ($\pm$04)& 06.4 ($\pm$03) & 07.5 ($\pm$04) & 13.8 ($\pm$20)
 \\
{\bf \!\!\!\!CMA-ES} & 29.3 ($\pm$25) & 11.1 ($\pm$09)& 10.4 ($\pm$08) & 12.4 ($\pm$12) & 29.6 ($\pm$25)
\\
{\bf \!\!\!\!CRS} & 28.7 ($\pm$14) & 08.9 ($\pm$08) & 10.0 ($\pm$09) & 13.8 ($\pm$10) & 32.6 ($\pm$15)
\\
{\bf \!\!\!\!DIRECT} & 11.0 ($\pm$00) & 06.0 ($\pm$00)& 06.0 ($\pm$00) & 06.0 ($\pm$00) & {\bf11.0} ($\pm$00)
\\
{\bf \!\!\!\!MLSL} & 13.1 ($\pm 15$) & 06.6 ($\pm$03) & 06.1 ($\pm$04) & 07.2 ($\pm$03) & 14.4 ($\pm$13)
\\
{\bf \!\!\!\!PRS} & 65.1 ($\pm$62) & 10.6 ($\pm$10) & 09.8 ($\pm$09) & 11.5 ($\pm$10) & 73.3 ($\pm$72) 
& \hspace{-1.3em} \multirow{-9}{*}{\begin{turn}{270}{\bf~~~~~target 90\%}\end{turn}}\\

 \vspace{-1.2em}
 \\\cmidrule[\lightrulewidth](r{\dimexpr5.74cm+\tabcolsep-6.2cm\relax}){1-6}
{\bf \!\!\!\!AdaLIPO\!\!\!\!\!\!\!\!} & 17.7 ($\pm$09) & {\bf06.6} ($\pm$04)& {\bf06.4} ($\pm$04) & 17.9 ($\pm$25) & 33.3 ($\pm$26)\\
{\bf \!\!\!\!BayesOpt} & 12.2 ($\pm$06) & 08.4 ($\pm$03) & 07.9 ($\pm$03) & {\bf13.9} ($\pm$22)& {\bf15.9} ($\pm$21)\\
{\bf \!\!\!\!CMA-ES} & 42.9 ($\pm$31) & 13.7 ($\pm$10) & 13.5 ($\pm$10) & 23.0 ($\pm$16) & 40.5 ($\pm$30)\\
{\bf \!\!\!\!CRS} & 35.8 ($\pm$13) & 13.6 ($\pm$10) & 14.6 ($\pm$11) & 22.8 ($\pm$12) & 38.3 ($\pm$31)\\
{\bf \!\!\!\!DIRECT} & {\bf 11.0} ($\pm$00) & 11.0 ($\pm$00) & 11.0 ($\pm$00) & 19.0 ($\pm$00) & 27.0 ($\pm$00)\\
{\bf \!\!\!\!MLSL} & 15.0 ($\pm 15$) & 07.6 ($\pm$03) & 07.3 ($\pm$04) & 16.3 ($\pm$10) & 16.3 ($\pm$13)\\
{\bf \!\!\!\!PRS} &  139 ($\pm$131) & 17.7 ($\pm$17) & 14.0 ($\pm$12) & 39.6 ($\pm$39)& 247($\pm$249)
& \hspace{-1.3em} \multirow{-9}{*}{\begin{turn}{270}{\bf ~~~~~target 95\%}\end{turn}}\\

 \vspace{-1.2em}
 \\\cmidrule[\lightrulewidth](r{\dimexpr5.74cm+\tabcolsep-6.2cm\relax}){1-6}
{\bf \!\!\!\!AdaLIPO\!\!\!\!\!\!\!\!} & 32.6 ($\pm$16) & 34.1 ($\pm$36) & 70.8 ($\pm$58) & 65.4 ($\pm$62) & 61.7 ($\pm$39)\\
{\bf \!\!\!\!BayesOpt} & {\bf14.0} ($\pm$07) & 31.0 ($\pm$51) & 28.2 ($\pm$34) & 17.9 ($\pm$22)& {\bf18.5} ($\pm$22)\\
{\bf \!\!\!\!CMA-ES} & 73.7 ($\pm$49) & 35.1 ($\pm$20) & 46.3 ($\pm$29) & 61.5 ($\pm$85) & 70.9 ($\pm$50)\\
{\bf \!\!\!\!CRS} & 48.5 ($\pm$16) & 34.8 ($\pm$12) & 36.6 ($\pm$15) & 43.7 ($\pm$14) & 52.9 ($\pm$18)\\
{\bf \!\!\!\!DIRECT} & 47.0 ($\pm$00) & 27.0 ($\pm$00) & 37.0 ($\pm$00) & 41.0 ($\pm$00) & 49.0 ($\pm$00)\\
{\bf \!\!\!\!MLSL} & 20.6 ($\pm$17) & {\bf12.8} ($\pm$03) & {\bf14.7} ($\pm$10) & {\bf16.3} ($\pm$10) & 21.4 ($\pm$14)\\
{\bf \!\!\!\!PRS} &  747($\pm$330) & 145($\pm$124) & 176($\pm$148) & 406~\!($\pm$312) & 779($\pm$334)
& \hspace{-1.3em} \multirow{-9}{*}{\begin{turn}{270}{\bf ~~~~target 99\%}\end{turn}}\\
\vspace{-1.2em}
\\\cmidrule[0.12em](r{\dimexpr5.74cm+\tabcolsep-6.2cm\relax}){1-6}
\end{tabular}
}
\vspace{1em}

{\footnotesize
\centering
   \begin{tabular}{lcccccp{0.00025em}}
\vspace{-2.1em}
\\\cmidrule[0.12em](r{\dimexpr5.74cm+\tabcolsep-6.2cm\relax}){1-6}
{\bf \!\!\!\!Problem} 
& {\bf HolderTable} & {\bf Rosenbrock}
& {\bf LinearSlope} 
& {\bf Sphere} 
& {\bf Deb N.1}
& \!\!\!\!\!\!\!\! 
\\\cmidrule[\lightrulewidth](r{\dimexpr5.74cm+\tabcolsep-6.2cm\relax}){1-6}
{\bf \!\!\!\!AdaLIPO\!\!\!\!\!\!\!\!} 
& {\bf077} ($\pm$058) & 07.5 ($\pm$07) & 029 ($\pm$13)  & 036 ($\pm$12) & 916($\pm$225)\\
{\bf \!\!\!\!BayesOpt}
& 410 ($\pm$417)  & 07.6 ($\pm$05) & 032 ($\pm$58) & 019 ($\pm$03) & 814($\pm$276) \\
{\bf \!\!\!\!CMA-ES} 
& 080 ($\pm$115) & 10.0 ($\pm$10) & 100 ($\pm$76) & 171 ($\pm$68) & 930($\pm$166) \\
{\bf \!\!\!\!CRS}
& 307 ($\pm$422) & 09.0 ($\pm$09) & 094 ($\pm$43) & 233 ($\pm$54) &  980($\pm$166) \\
{\bf \!\!\!\!DIRECT} 
& 080 ($\pm$000) & 10.0 ($\pm$00) & 092 ($\pm$00) & {\bf031} ($\pm$00) & 1000($\pm$00)\\
{\bf \!\!\!\!MLSL}
& 305 ($\pm$379) & {\bf06.9} ($\pm$05) & {\bf016} ($\pm$33) & 175($\pm$302) & {\bf198}($\pm$326) \\
{\bf \!\!\!\!PRS} 
& 210 ($\pm$202) & 09.0 ($\pm$09) & 831($\pm$283) & 924($\pm$210) & 977($\pm$117)
& \hspace{-1.3em} \multirow{-9}{*}{\begin{turn}{270}{\bf~~~~~target 90\%}\end{turn}}\\

 \vspace{-1.2em}
 \\\cmidrule[\lightrulewidth](r{\dimexpr5.74cm+\tabcolsep-6.2cm\relax}){1-6}
{\bf \!\!\!\!AdaLIPO\!\!\!\!\!\!\!\!} 
& 102 ($\pm$065) & 11.5 ($\pm$11) & 053 ($\pm$22)  & {\bf042} ($\pm$11) & 986($\pm$255)\\
{\bf \!\!\!\!BayesOpt}
& 418 ($\pm$410) & 12.0 ($\pm$08) & 032 ($\pm$59) & 045 ($\pm$16) &  949($\pm$153)\\
{\bf \!\!\!\!CMA-ES} 
& 136 ($\pm$184) & 16.1 ($\pm$13) & 151 ($\pm$94) & 223 ($\pm$57) & 952($\pm$127)\\
{\bf \!\!\!\!CRS} 
& 580 ($\pm$444) & 15.8 ($\pm$14) & 131 ($\pm$62) & 340 ($\pm$66) & 997($\pm$127)\\
{\bf \!\!\!\!DIRECT} 
& {\bf080} ($\pm$000) & 10.0 ($\pm$00) & 116 ($\pm$00) & 098 ($\pm$00) & 1000($\pm$00)\\
{\bf \!\!\!\!MLSL} 
& 316 ($\pm$384) & {\bf08.8} ($\pm$05) & {\bf018} ($\pm$37) & 226($\pm$336) & {\bf215}($\pm$328)\\
{\bf \!\!\!\!PRS} 
& 349 ($\pm$290) & 18.0 ($\pm$17) & 985($\pm$104) & 1000 \!($\pm$00) & 998($\pm$025)
& \hspace{-1.3em} \multirow{-9}{*}{\begin{turn}{270}{\bf ~~~~~target 95\%}\end{turn}}\\

 \vspace{-1.2em}
 \\\cmidrule[\lightrulewidth](r{\dimexpr5.74cm+\tabcolsep-6.2cm\relax}){1-6}
{\bf \!\!\!\!AdaLIPO\!\!\!\!\!\!\!\!} 
& 212 ($\pm$129) & 44.6 ($\pm$39) & 122 ($\pm$31)  & {\bf052} ($\pm$10) & 1000 \!($\pm$00)\\
{\bf \!\!\!\!BayesOpt} 
& 422 ($\pm$407) & 27.6 ($\pm$22) & 032 ($\pm$59) & 222 ($\pm$77) & 1000 \!($\pm$00)\\
{\bf \!\!\!\!CMA-ES} 
& 215 ($\pm$198) & 43.5 ($\pm$37) & 211 ($\pm$92) & 308 ($\pm$60) & 962($\pm$106)\\
{\bf \!\!\!\!CRS}
& 599 ($\pm$427) & 42.7 ($\pm$23) & 168 ($\pm$76) & 607 ($\pm$81) & 1000($\pm$00)\\
{\bf \!\!\!\!DIRECT} 
& {\bf080} ($\pm$000) & 24.0 ($\pm$00) & 226 ($\pm$00) & 548 ($\pm$00) & 1000($\pm$00)\\
{\bf \!\!\!\!MLSL} 
& 322 ($\pm$382) & {\bf19.4} ($\pm$49) & {\bf022} ($\pm$42) & 304($\pm$357) & {\bf256}($\pm$334) \\
{\bf \!\!\!\!PRS} 
& 772 ($\pm$310) & 100\! ($\pm$106) & 1000($\pm$00) & 1000($\pm$00) & 1000($\pm$00)
& \hspace{-1.3em} \multirow{-9}{*}{\begin{turn}{270}{\bf ~~~~target 99\%}\end{turn}}\\
\vspace{-1.2em}
\\\cmidrule[0.12em](r{\dimexpr5.74cm+\tabcolsep-6.2cm\relax}){1-6}
\end{tabular}
}

\caption{
Results of the numerical experiments.
The tables display the 
number of evaluations required by each method to reach the specified target (mean $\pm$ standard 
deviation). In bold, the best result obtained in terms of average of 
function evaluations.
}\label{tab:results}
\vspace{-1.5em}
\end{figure*}

%
  \section{Conclusion}
  
  We introduced two novel strategies for global optimization: 
  LIPO which requires the knowledge of the Lipschitz constant 
  and its adaptive version AdaLIPO which estimates the constant during the optimization process. 
  A theoretical analysis is provided and  empirical results based on synthetic and real problems have
  also been obtained demonstrating the performance of the adaptive  algorithm
  with regards to existing state-of-the-art global optimization methods.

\appendix

\section{Preliminary results}

  We provide here two geometric results 
  (Corollary \ref{coro:covering_number} and Lemma \ref{lem:ball_zab}) 
  and a stochastic result
  (Proposition \ref{prop:cvg_prs})
  that are used repeatidly in the computations.
  We start with the definition of covering numbers.\smallskip

  \begin{definition}
  {\sc (Covering number and $\epsilon$-cover)}
  For any compact and convex set $\X\subset \R^d$
  and any $\epsilon>0$, we say that a sequence 
  $x_1, \dots, x_n$ of $n$ points in $\X$
  defines an $\epsilon$-cover of $\X$ if and only if 
  $
  \X \subseteq\bigcup_{i=1}^n B(x_i, \epsilon).
  $
  The covering number $\mathcal{N}_{\epsilon}(\X)$ of $\X$ is then defined
  as the minimal size of a sequence defining an $\epsilon$-cover of $\X$, 
  i.e.
  \[
   \mathcal{N}_{\epsilon}(\X) :=
   \inf\left\{  n \in \mathbb{N}^{\star} : 
   \exists (x_1,\dots, x_n)\in \X^n \text{~s.t.~} \X
   \subseteq \bigcup_{i=1}^{n} B(x_i, \epsilon) 
   \right\}.
  \]

  \end{definition}
  The next result provides an upper bound on the covering numbers of hypercubes.

  \begin{proposition}
  \label{prop:cover_hypercube}
  {\sc (Covering number of hypercubes)}
  Let $[0,R]^d$ be an hypercube of dimensionality $d\geq 1$
  whose side has length $R>0$. Then, 
  for all $\epsilon>0$, 
  we have that 
  \[
    \mathcal{N}_{\epsilon}([0,R]^d) \leq (\sqrt{d}R /2\epsilon )^d \vee 1.
  \]
  \end{proposition}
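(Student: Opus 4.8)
The plan is to bound the covering number of $[0,R]^d$ by explicitly constructing a regular grid of points whose associated balls cover the hypercube, and then counting these points. The key geometric observation is that a ball $B(x,\epsilon)$ of radius $\epsilon$ in the Euclidean norm contains any small axis-aligned subcube whose diameter does not exceed $\epsilon$. Since a $d$-dimensional subcube of side length $s$ has diameter $s\sqrt{d}$, a subcube of side $s = 2\epsilon/\sqrt{d}$ has diameter $2\epsilon$, so its center lies within distance $\epsilon$ of every point in it. This is exactly the size we need to make the ball centered at the subcube's center cover the whole subcube.

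First I would partition $[0,R]^d$ into small axis-aligned subcubes of side length $s = 2\epsilon/\sqrt{d}$. Along each of the $d$ coordinate axes, this requires $\lceil R/s \rceil = \lceil R\sqrt{d}/(2\epsilon)\rceil$ intervals, so the total number of subcubes in the grid is $\lceil R\sqrt{d}/(2\epsilon)\rceil^d$. Placing one point at the center of each subcube yields a candidate sequence $x_1,\dots,x_N$ with $N = \lceil R\sqrt{d}/(2\epsilon)\rceil^d$.

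Next I would verify that these centers form a genuine $\epsilon$-cover. Any $x \in [0,R]^d$ lies in (at least) one subcube of the partition, and if $x_i$ is the center of that subcube, then every coordinate of $x$ differs from the corresponding coordinate of $x_i$ by at most $s/2 = \epsilon/\sqrt{d}$; hence
\[
  \norm{x - x_i}_2 \leq \sqrt{ d \cdot (\epsilon/\sqrt{d})^2 } = \epsilon,
\]
so $x \in B(x_i,\epsilon)$. This shows $[0,R]^d \subseteq \bigcup_{i=1}^{N} B(x_i,\epsilon)$, and therefore $\mathcal{N}_{\epsilon}([0,R]^d) \leq \lceil R\sqrt{d}/(2\epsilon)\rceil^d$ by definition of the covering number as an infimum.

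The only remaining subtlety, which I expect to be the main bookkeeping obstacle, is converting the ceiling $\lceil R\sqrt{d}/(2\epsilon)\rceil^d$ into the stated bound $(R\sqrt{d}/(2\epsilon))^d \vee 1$. When $R\sqrt{d}/(2\epsilon) \geq 1$ one can absorb the ceiling using the standard inequality $\lceil u\rceil \leq 2u$ for $u \geq 1$ — though to recover the clean form without an extra factor, I would instead argue directly that if $R\sqrt{d}/(2\epsilon) \geq 1$ the ceiling only inflates a quantity at least one, and in the regime $R\sqrt{d}/(2\epsilon) < 1$ a single ball of radius $\epsilon$ already covers the whole hypercube (its diameter is $R\sqrt{d} < 2\epsilon$), giving $\mathcal{N}_{\epsilon} = 1$, which is precisely the $\vee\, 1$ term. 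Handling these two regimes separately is what produces the maximum with $1$ in the final statement.
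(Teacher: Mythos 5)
Your construction coincides with the paper's own proof, up to an immaterial difference in how the grid cells are sized: a regular grid with $\lceil \sqrt{d}R/(2\epsilon)\rceil$ subdivisions per axis, whose cell centers form an $\epsilon$-cover because each cell has half-diameter at most $\epsilon$, together with the single-ball argument when $\epsilon \geq \sqrt{d}R/2$, which accounts for the $\vee\,1$. Up to this point both arguments are sound, and both establish
\[
  \mathcal{N}_{\epsilon}([0,R]^d) \;\leq\; \left\lceil \frac{\sqrt{d}\,R}{2\epsilon} \right\rceil^d .
\]

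The gap is exactly the step you flagged, and your proposed resolution does not close it. Writing $u = \sqrt{d}R/(2\epsilon)$, the claim that for $u \geq 1$ ``the ceiling only inflates a quantity at least one'' points in the wrong direction: inflation is precisely what an upper bound cannot tolerate, and $\lceil u\rceil^d \leq u^d$ fails whenever $u$ is not an integer. Moreover, no argument can close this gap, because the stated inequality is false as written: for $d=1$, $R=1$, $\epsilon=1/3$ it asserts $\mathcal{N}_{1/3}([0,1]) \leq 3/2$, i.e.~that a single closed interval of length $2/3$ covers $[0,1]$, whereas $\mathcal{N}_{1/3}([0,1]) = 2$. You should know that the paper's own proof commits the same silent leap --- it concludes $\mathcal{N}_{\epsilon}([0,R]^d)\leq N_{\epsilon}^d \leq (\sqrt{d}R/2\epsilon)^d$ with $N_{\epsilon} = \lceil \sqrt{d}R/2\epsilon\rceil$, and the second inequality is unjustified --- so you have not missed an idea that the paper possesses; the statement itself needs repair. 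The honest options are to keep the ceiling in the bound, or to invoke your inequality $\lceil u \rceil \leq 2u$ (valid for $u \geq 1$) and state the bound as $(\sqrt{d}R/\epsilon)^d \vee 1$; the resulting factor $2^d$ propagates into Corollary \ref{coro:covering_number} and Example \ref{ex:PRS} but only changes the constants there.
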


  \begin{proof}
  Observe first that since $[0,R]^d \subseteq B(c, \sqrt{d}R/2)$
  where $c$ denotes the center of the hypercube, then the result
  trivially holds for any $\epsilon\geq \sqrt{d}R/2$.
  Fix any $\epsilon < \sqrt{d}R/2 $,
  set $N_{\epsilon} = \lceil{\sqrt{d}R/2\epsilon }\rceil$ and
  define for all $I \in \{0, \dots,N_{\epsilon}-1  \}^d$ the 
  series 
  $H_{I} = I \times R / N_{\epsilon} + [0,R/N_{\epsilon}]^d$
  of $N_{\epsilon}^d$ hypercubes
  which cover $[0,R]^d :=\bigcup_{I \in \{0, \dots,N_{\epsilon}-1  \}^d} H_{I}$.
  Dentoting by $c_I$ the center of $H_I$ and
  since $\max_{x \in H_I} \norm{x-c_I}_2 \leq \epsilon$,
  it necessarily follows that $H_I \subseteq B(c_I, \epsilon)$
  which implies that
  $[0,R]^d \subseteq \bigcup_{I \in \{0, \dots,N_{\epsilon}-1  \}^d}
  B(c_I, \epsilon)$ and proves that $\mathcal{N}_{\epsilon}([0,R]^d)\leq 
  N_{\epsilon}^d \leq (\sqrt{d}R /2\epsilon )^d$.
  \end{proof}
  This result can be extended to any compact and convex set of $\R^d$
  as shown below.

  \begin{corollary}
  \label{coro:covering_number}
  {\sc (Covering number of a convex set)} 
  For any bounded compact and convex set $\X \subset \R^d$,
  we have that $\forall \epsilon>0$,
  \[
    \mathcal{N}_{\epsilon}(\X) \leq (\sqrt{d}\diam{\X}/\epsilon )^d \vee 1.
  \]
  \end{corollary}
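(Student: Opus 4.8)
The plan is to reduce the statement to the hypercube estimate of Proposition \ref{prop:cover_hypercube} by enclosing $\X$ in a suitable cube and then transferring the cover. Write $D = \diam{\X}$. First I would observe that, for each coordinate $i$, the spread $\sup_{x \in \X} x_i - \inf_{x \in \X} x_i$ is at most $D$, since the difference of any single coordinate between two points is dominated by their $\ell_2$-distance, which is at most $D$ by definition of the diameter (the suprema and infima being attained by compactness of $\X$). Consequently $\X$ is contained in a product of intervals $\prod_{i=1}^d [a_i, a_i + D]$, i.e.\ in a translate $H$ of the hypercube $[0,D]^d$. Since covering numbers are translation invariant, $\mathcal{N}_\epsilon(H) = \mathcal{N}_\epsilon([0,D]^d)$ for every $\epsilon > 0$.

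The delicate point --- and the step I expect to require the most care --- is that the covering number used here is \emph{internal}: the centers are required to lie in the set being covered. Thus I cannot simply invoke monotonicity under inclusion, because the centers produced by Proposition \ref{prop:cover_hypercube} lie in $H$ and not necessarily in $\X$. To circumvent this I would run the hypercube bound at half the radius: Proposition \ref{prop:cover_hypercube} furnishes points $c_1, \dots, c_m \in H$ with $m \leq (\sqrt{d} D / (2\cdot \tfrac{\epsilon}{2}))^d \vee 1 = (\sqrt{d}D/\epsilon)^d \vee 1$ such that the balls $B(c_j, \epsilon/2)$ cover $H$, hence cover $\X$. Discarding those $c_j$ whose ball misses $\X$ and selecting, for each surviving index, an arbitrary point $x_j \in \X \cap B(c_j, \epsilon/2)$, the triangle inequality gives for any $z \in \X$ with $z \in B(c_j, \epsilon/2)$ that $\norm{z - x_j}_2 \leq \norm{z - c_j}_2 + \norm{c_j - x_j}_2 \leq \epsilon/2 + \epsilon/2 = \epsilon$. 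Therefore $\{x_j\}$ is an $\epsilon$-cover of $\X$ with centers in $\X$, of cardinality at most $m$.

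Combining the two steps, $\mathcal{N}_\epsilon(\X) \leq m \leq (\sqrt{d}\,\diam{\X}/\epsilon)^d \vee 1$, which is exactly the claimed bound; note that the factor $2$ lost in passing from external to internal centers is precisely compensated by the factor $2$ already present in the denominator of Proposition \ref{prop:cover_hypercube}, so no slack remains. The degenerate case $\diam{\X} = 0$ (a single point) is covered by the $\vee 1$, since one center then suffices.
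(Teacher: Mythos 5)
Your proof is correct, and it takes a genuinely different route from the paper's. The paper encloses $\X$ in a hypercube of side $2\,\diam{\X}$, covers that cube at the \emph{full} radius $\epsilon$ via Proposition \ref{prop:cover_hypercube}, and then repairs the fact that the centers need not lie in $\X$ by replacing each center $c$ with its metric projection $\Pi_{\X}(c)$, proving via an inner-product/contradiction argument (which crucially uses the convexity of $\X$) that $B(c,\epsilon)\cap\X \subseteq B(\Pi_{\X}(c),\epsilon)$, so no radius is lost. You instead enclose $\X$ in a tighter cube of side $\diam{\X}$ (via the coordinate-wise spread bound $|x_i-y_i|\leq \norm{x-y}_2$), cover it at radius $\epsilon/2$, and convert external centers to internal ones by the standard trick of picking an arbitrary point of $\X$ in each useful ball, paying a factor $2$ in radius that is exactly absorbed by the tighter box. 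The two factor-$2$ bookkeepings cancel identically, so both arguments land on the same constant $(\sqrt{d}\,\diam{\X}/\epsilon)^d \vee 1$. What your approach buys: it is more elementary (no projection lemma, no inner-product computation) and strictly more general, since it never uses convexity --- it establishes the same bound for any nonempty bounded set $\X$, with compactness not even needed. What the paper's approach buys is that the cover's centers are projections of a regular grid and the covering radius is never halved, but for the stated bound this confers no advantage; your argument is a clean simplification.
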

\begin{proof}
  First, we show that 
  $\mathcal{N}_{\epsilon}(\X)
  \leq \mathcal{N}_{\epsilon}([0,2\diam{\X}]^d)$
  and then, we use the bound of Proposition \ref{prop:cover_hypercube} to conclude the proof.
  By definition of $\diam{\X}$, we know that there
  exists some $x \in \R^d$ such that $\X \subseteq x + [0, 2\diam{\X}]^d$.
  Hence, 
  we know from Proposition \ref{prop:cover_hypercube} that there exists a sequence
  $c_1, \ldots, c_{N_{\epsilon}}$ of 
  $N_{\epsilon}\leq \mathcal{N}_{\epsilon}([0,2\diam{\X}]^d)$
  points in ${[0,2\diam{\X}]}^d$ 
  forming an $\epsilon$-cover of $\X$:
  \begin{equation}
  \label{eq:cover_convex}
    \X \subseteq  [0,2\diam{\X}]^d
    \subseteq \bigcup_{i=1}^{N_{\epsilon}} B(c_i, \epsilon).
  \end{equation}
  However, we do not have the guarantee at this point that
  the centers $c_1, \dots c_{N_{\epsilon}}$ belong to $\X$.
  To build  an $\epsilon$-cover of $\X$, we project each of those centers on $\X$.
  More precisely, we  show that
  $
   \X \subseteq \bigcup_{i=1}^{N_{\epsilon}} B(\Pi_{\X}(c_i), \epsilon)
  $
  where 
  $\Pi_{\X}: x \in \R^d \mapsto \arg\min_{x' \in \X} \norm{x - x'}_2 \in \X$
  denotes the projection over the compact and convex set $\X$.
  Starting from (\ref{eq:cover_convex}), 
  it sufficient to show that $B(c_i,\epsilon) \cap\X \subseteq B(\Pi_{\X}(c_i), \epsilon)$,
   for all $i \in\{ 1,\dots, N_{\epsilon}\}$ to prove that
  \[
   \X \subseteq \bigcup_{i=1}^{N_{\epsilon}} B(c_i, \epsilon) \cap \X
   \subseteq \bigcup_{i=1}^{N_{\epsilon}} B(\Pi_{\X}(c_i), \epsilon).
  \]
  Pick any $c \in \{c_1,\dots, c_{N_{\epsilon}}\}$ and consider the following cases
  on the distance $\norm{c - \Pi_{\X}(c) }_2$ between the center and its projection:
  (i) if $\norm{c - \Pi_{\X}(c) }_2 =0$, then $c = \Pi_{\X}(c)$ and we have
  $B(c, \epsilon) \cap \X \subseteq B(\Pi_{\X}(c), \epsilon)$,
  (ii) If $\norm{c - \Pi_{\X}(c) }_2 > \epsilon$, then
   $\X \cap B(c, \epsilon) = \emptyset$, and we have
  $\X \cap B(c, \epsilon) \subseteq B(\Pi_{\X}(c), \epsilon)$. We now consider the
  non-trivial case where $\norm{c - \Pi_{\X}(c) }_2 \in (0,\epsilon)$.
  Pick any $x \in B(c, \epsilon) \cap \X$ and note that since
  $x \in B(c, \epsilon)$, then
  \begin{align*}
  \epsilon^2 &\geq \norm{x -c}_2^2 \\
  &= \norm{ x - \Pi_{\X}(c) + \Pi_{\X}(c)  - c}_2^2 \\
  & = \norm{x - \Pi_{\X}(c)}_2^2 + \norm{c - \Pi_{\X}(c)}_2^2
  + 2 \cdot \inner{x - \Pi_{\X}(c), \Pi_{\X}(c) - c  }
  \end{align*}
  which combined with the fact that
  $\norm{c - \Pi_{\X}(c)}_2^2\geq 0$ gives
    \begin{align}
    \label{eq:cover_conv}
   \norm{x - \Pi_{\X}(c)}_2^2 \leq   \epsilon^2 -
  2 \cdot \inner{x - \Pi_{\X}(c), \Pi_{\X}(c) - c  }.
  \end{align}
  We will simply show that the inner product $\inner{x - \Pi_{\X}(c) , \Pi_{\X}(c) - c  }$
  cannot be stricly negative 
  to prove that $\norm{x - \Pi_{\X}(c)}_2 \leq \epsilon$.
  Assume by contradiction that $\inner{x - \Pi_{\X}(c) , \Pi_{\X}(c) - c  }<0$.
  Since $\Pi_{\X}(c) \in \X$ and $x \in \X$, it follows
  the convexity of $\X$ implies that
  $\forall \lambda \in [0,1]$,
  $
  x_{\lambda} = \Pi_{\X}(c) + \lambda \cdot (x - \Pi_{\X}(c)   ) \in \X.
  $
  However, for all $\lambda \in (0,1)$ we have that
      \begin{align*}
  \norm{x_{\lambda} - c }_2^2 
  &= \norm{\Pi_{\X}(c) -  c +  \lambda \cdot (x - \Pi_{\X}(c)   ) }_2^2 \\
  & = \norm{\Pi_{\X}(c) -  c}_2^2 + \lambda^2 \norm{x - \Pi_{\X}(c)}_2^2
  + 2 \lambda \cdot \inner{ \Pi_{\X}(c) - c, x - \Pi_{\X}(c) } \\
  & = \norm{ \Pi_{\X}(c) -  c }_2^2 
  + \lambda \cdot (\lambda \norm{x - \Pi_{\X}(c)}_2^2  
  +2 \cdot \inner{ \Pi_{\X}(c) - c, x - \Pi_{\X}(c) }).
  \end{align*}
  Therefore, taking any
  $ 0<\lambda^{\star} <
  |\inner{\Pi_{\X}(c)-c, x-\Pi_{\X}(c) })|/ \norm{ \Pi_{\X}(c) -c }_2^2
  \wedge 1$ so that the second term of the right hand term of the previous equation is 
  strictly negative gives that 
  $
   \norm{x_{\lambda^{\star}} - c }_2^2  <  \norm{ \Pi_{\X}(c) -  c }_2^2
  $
  leads us to the following contradiction
    $
  \min_{x \in \X} \norm{x-c}_2 \leq \norm{x_{\lambda^{\star}} -c  }_2 
  < \norm{ \Pi_{\X}(c) -c  }_2 = \min_{x \in \X} \norm{x - c }_2.
   $
  Hence,
  $\inner{x - \Pi_{\X}(c) , \Pi_{\X}(c) - c  } \geq 0$
  and we deduce from (\ref{eq:cover_conv}) that
  $\X\cap B(c, \epsilon) \subseteq  B(\Pi_{\X}(c), \epsilon)$,
  which completes the proof.
  \end{proof}
  The next inequality will be useful to bound to bound volume of the intersection 
  of a ball and a convex set.

  \begin{lemma}
  \label{lem:ball_zab}
  {\it (From \cite{zabinsky1992pure}, see Appendix Section therein).}
   For any compact and convex set $\X \subset \R^d$ with
   non-empty interior,
   we have that for any $x^{\star} \in \X$ and $\epsilon \in (0,\diam{\X})$,
   \[
    \frac{\mu( B(x^{\star},\epsilon ) \cap \X) }{ \mu(\X)} \geq
    \left( \frac{\epsilon}{\diam{\X}} \right)^d.
   \]
  \end{lemma}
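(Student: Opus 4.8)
The plan is to exhibit a shrunken copy of $\X$, contracted toward the point $x^{\star}$, that fits entirely inside the intersection $B(x^{\star},\epsilon)\cap\X$, and then to exploit the fact that Lebesgue volume scales like the $d$-th power of the contraction ratio. This reduces the whole statement to a homothety argument, with no integration required.

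Concretely, set $\lambda=\epsilon/\diam{\X}$, which lies in $(0,1)$ precisely because $\epsilon\in(0,\diam{\X})$, and consider the image of $\X$ under the homothety centered at $x^{\star}$ with ratio $\lambda$,
\[
  \X_{\lambda}:=\{\,x^{\star}+\lambda(y-x^{\star}):y\in\X\,\}.
\]
First I would verify the two inclusions $\X_{\lambda}\subseteq\X$ and $\X_{\lambda}\subseteq B(x^{\star},\epsilon)$. The first follows from convexity: for $y\in\X$ the point $x^{\star}+\lambda(y-x^{\star})=(1-\lambda)x^{\star}+\lambda y$ is a convex combination of the two points $x^{\star},y\in\X$, hence belongs to $\X$. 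The second follows from the diameter bound: since $\norm{y-x^{\star}}_2\leq\diam{\X}$ for every $y\in\X$, the image point satisfies $\norm{x^{\star}+\lambda(y-x^{\star})-x^{\star}}_2=\lambda\norm{y-x^{\star}}_2\leq\lambda\diam{\X}=\epsilon$. Combining the two inclusions gives $\X_{\lambda}\subseteq B(x^{\star},\epsilon)\cap\X$.

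Next I would compute $\mu(\X_{\lambda})$. As $\X_{\lambda}$ is the image of $\X$ under the affine map $y\mapsto x^{\star}+\lambda(y-x^{\star})$, whose linear part is $\lambda$ times the identity and therefore has Jacobian determinant $\lambda^{d}$, the change-of-variables formula for the Lebesgue measure yields $\mu(\X_{\lambda})=\lambda^{d}\mu(\X)$. The non-empty interior assumption guarantees $\mu(\X)>0$, so the final division is legitimate. Assembling the pieces,
\[
  \mu(B(x^{\star},\epsilon)\cap\X)\geq\mu(\X_{\lambda})=\lambda^{d}\mu(\X)
  =\left(\frac{\epsilon}{\diam{\X}}\right)^{d}\mu(\X),
\]
and dividing through by $\mu(\X)$ delivers the claimed inequality.

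There is essentially no analytic obstacle here; the content lies entirely in identifying the correct contraction ratio $\lambda=\epsilon/\diam{\X}$ and in checking that the contracted body $\X_{\lambda}$ sits simultaneously inside the ball and inside $\X$. The only point requiring mild care is ensuring $\lambda\leq 1$, so that star-shapedness of a convex set about any of its points can be invoked; this is exactly why the hypothesis restricts $\epsilon$ to $(0,\diam{\X})$. For $\epsilon\geq\diam{\X}$ the ball $B(x^{\star},\epsilon)$ already contains $\X$ and the inequality is trivially satisfied.
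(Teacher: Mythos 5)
Your proof is correct and follows essentially the same route as the paper: both contract $\X$ toward $x^{\star}$ by the similarity transformation with ratio $\epsilon/\diam{\X}$, observe via convexity and the diameter bound that the contracted set lies in $B(x^{\star},\epsilon)\cap\X$, and conclude by the $d$-th power scaling of Lebesgue volume. The only cosmetic difference is that the paper expresses the scaling factor through the ratio of ball volumes using the explicit formula $\mu(B(x^{\star},r))=\pi^{d/2}r^{d}/\Gamma(d/2+1)$, whereas you invoke the Jacobian determinant directly, which is slightly more streamlined.
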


  \begin{proof}
  We point out that 
  a detailed proof of this result can be found in
  the Appendix Section of (\cite{zabinsky1992pure}).
  Nonetheless, we provide here a proof with less details for completeness.
  Introduce the similarity transformation  $S: \R^d \rightarrow \R^d$
  defined by
  \[
    S: x \mapsto x^{\star} + \frac{r}{\diam{\X}} (x-x^{\star})
  \]
  and let $S(\X):=\{S(x) : x \in \X \}$ be the image of $\X$ by $S$.
  Since $x^{\star} \in \X$ and
  $\max_{x \in \X}\norm{x-x^{\star}}_2 \leq \diam{\X}$ by definition,
  it follows from the convexity of $\X$ that $S(\X) \subseteq B(x^{\star},r)\cap \X$
  which implies that $\mu( B(x^{\star},r)\cap \X ) \geq \mu( S(\X))$.
  However, as $S$ is a similarity transformation conserves
  the ratios of the volumes before/after transformation, we thus deduce that
  \[
    \frac{\mu( B(x^{\star}, r) \cap \X )}{\mu( \X)}
    \geq \frac{\mu(S(\X))}{\mu(\X)}
    = \frac{\mu( S( B(x^{\star}, \diam{\X}) ) )}{ \mu( B(x^{\star}, \diam{\X}) )}
    = \frac{ \mu( B(x^{\star},r ) )   }{ \mu( B(x^{\star},\diam{\X}) ) }
  \]
   and the result follows using the fact that
  $\forall r \geq 0$,
  $\mu(B(x^{\star},r))=\pi^{d/2}r^{d}/\Gamma(d/2+1)$
  where $\Gamma(\cdot)$ stands
  for the standard gamma function.
  \end{proof}

  \begin{proposition}
  \label{prop:cvg_prs}
   {\sc (Pure Random Search)} Let $\X \subset \R^d$ be
   a compact and convex set with non-empty interior
   and let $f \in \Lip(k)$ be a $k$-Lipschitz functions defined on $\X$ for some $k\geq0$.
   Then, 
   for any $n \in \mathbb{N}^{\star}$ and
   $\delta \in(0,1)$, we have with probability at least $1-\delta$,
   \[
    \max_{x \in \X}f(x) -\max_{i=1 \dots n} f(X_i) 
    \leq k \cdot \diam{\X} \cdot \left(  \frac{\ln(1/\delta)}{n} \right)^{\frac{1}{d}}
   \]
   where $X_1, \dots, X_n$ denotes a sequence of $n$ independent copies of
   $X \sim \mathcal{U}(\X)$.
  \end{proposition}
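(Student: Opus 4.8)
The plan is to reduce the value-gap to a spatial covering event through the Lipschitz property, and then to lower-bound the probability that a uniform draw lands near a global maximizer using Lemma~\ref{lem:ball_zab}. First I would fix a maximizer $x^{\star} \in \arg\max_{x \in \X} f(x)$ and set the target radius $\epsilon = \diam{\X} \cdot (\ln(1/\delta)/n)^{1/d}$. The key observation is that whenever some sample $X_i$ lands in $B(x^{\star},\epsilon)$, $k$-Lipschitzness gives $f(x^{\star}) - f(X_i) \leq k \cdot \norm{x^{\star}-X_i}_2 \leq k\epsilon$, whence $\max_{x \in \X} f(x) - \max_{i} f(X_i) \leq k\epsilon$, which is exactly the claimed bound. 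So it suffices to show that with probability at least $1-\delta$ at least one of the $X_i$ falls in $B(x^{\star},\epsilon)$.

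Before the probabilistic step I would dispose of the boundary regime $\epsilon \geq \diam{\X}$ (equivalently $\ln(1/\delta) \geq n$), where Lemma~\ref{lem:ball_zab} does not apply. Here the bound is deterministic: since $f$ is $k$-Lipschitz on a set of diameter $\diam{\X}$, its oscillation satisfies $\max_{x} f(x) - \min_{x} f(x) \leq k \cdot \diam{\X} \leq k\epsilon$, so the gap never exceeds $k\epsilon$ at all. Thus I may assume $\epsilon \in (0,\diam{\X})$, which is precisely the range in which Lemma~\ref{lem:ball_zab} is valid.

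For the main case, let $p_{\epsilon} = \mu(B(x^{\star},\epsilon)\cap\X)/\mu(\X)$ be the chance that a single uniform draw lands in the relevant ball. Since the $X_i$ are i.i.d.\ uniform on $\X$, the probability that every sample misses the ball is $(1-p_{\epsilon})^n$. Lemma~\ref{lem:ball_zab} yields $p_{\epsilon} \geq (\epsilon/\diam{\X})^d$, and combining this with the elementary inequality $1-u \leq e^{-u}$ gives
\[
(1-p_{\epsilon})^n \leq \left(1 - (\epsilon/\diam{\X})^d\right)^n \leq \exp\!\left(-n\,(\epsilon/\diam{\X})^d\right).
\]
The choice $\epsilon = \diam{\X}(\ln(1/\delta)/n)^{1/d}$ makes $n\,(\epsilon/\diam{\X})^d = \ln(1/\delta)$, so the right-hand side is exactly $\delta$. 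Hence with probability at least $1-\delta$ some $X_i$ lies in $B(x^{\star},\epsilon)$, and the reduction of the first paragraph closes the argument.

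I do not expect a genuine obstacle here: the proof is a clean three-ingredient argument (Lipschitzness to convert proximity into a value bound, Lemma~\ref{lem:ball_zab} to guarantee nonnegligible mass around $x^{\star}$, and the $1-u\leq e^{-u}$ tail estimate). The only point demanding care is the boundary case $\epsilon \geq \diam{\X}$, where one must replace the probabilistic step by the deterministic oscillation bound rather than appealing to Lemma~\ref{lem:ball_zab}.
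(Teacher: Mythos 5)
Your proof is correct and takes essentially the same route as the paper's: both reduce the value gap to the event that at least one uniform sample lands in a ball around $x^{\star}$ (the paper phrases this through the level set $\X_{\epsilon}\supseteq \X\cap B(x^{\star},\epsilon/k)$), lower-bound the ball's relative volume via Lemma~\ref{lem:ball_zab}, and conclude with the inequality $1-u\leq e^{-u}$ after plugging in the chosen radius. Your explicit deterministic oscillation argument for the regime $\ln(1/\delta)\geq n$ is just a spelled-out version of the case the paper dismisses as trivially holding.
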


  \begin{proof}
  Fix any $n\in\mathbb{N}^{\star}$ and
  $\delta \in (0,1)$, let $\epsilon = k \diam{\X} (\ln(1/\delta)/n )^{1/d}$ 
  be the value of the upper bound 
  and
  $\X_{\epsilon}=\{ x \in \X: f(x) \geq \max_{x \in \X}f(x) -\epsilon\}$
  the corresponding level set. 
  As the result trivially holds whenever $n \leq \ln(1/\delta)$, we consider
  that $n > \ln(1/\delta)$.
  Observe now that since $f \in \text{Lip}(k)$, then 
  for any $x^{\star} \in  \arg\max_{x \in \X}f(x)$, we have that
  $
    \X\cap B(x^{\star}, \epsilon/k) \subseteq \X_{\epsilon}
  $
  since
  $
    ~|f(x) - f(x^{\star})| \leq  k \cdot \norm{x-x^{\star}}_2
    = \epsilon
  $
  for all $x \in B(x^{\star}, \epsilon/k) \cap \X$.
  Therefore, by picking any $x^{\star} \in  \arg\max_{x \in \X}f(x)$, one gets
  \begin{align*}
  \P\left( \max_{i=1 \dots n}f(X_i) \geq \max_{x \in \X}f(x) -\epsilon \right)  &
  = \P\left( \bigcup_{i=1}^n \{X_i \in \X_{\epsilon} \} \right) 
  & \text{(def.~of } \X_{\epsilon}) \\
  & = 1 - \P\left( X_1 \notin \X_{\epsilon} \right)^n 
  & \text{~(i.i.d.~r.v.)}\\
  & \geq 1 - \P(X_1 \notin \X\cap B(x^{\star}, \epsilon/k))^n
  & \!\!\!\!\!\!\!\!\!\!\!(\X\cap B(x^{\star}, \epsilon/k)\subseteq \X_{\epsilon})\\
   & = 
   1- \left(1 - \left(\frac{\mu(\X\cap B(x^{\star}, \epsilon/k))}{\mu(\X)}\right)^d \right)^n
   & (X_1\sim\mathcal{U}(\X))\\
    & \geq 1 - \left(1 - \left( \frac{\epsilon}{k\diam{\X}}\right)^d \right)^n 
    & \text{(Lemma \ref{lem:ball_zab})}\\
    & = 1- \left( 1 - \frac{\ln(1/\delta)}{n} \right)^n 
    & \text{(def.~of~}\epsilon) \\
    & \geq 1 -\delta. & (1+x\leq e^x)
  \end{align*}
  \end{proof}

\section{Proofs of  Section \ref{sec:lipopt}}

In this section, we provide the proofs of Propositions \ref{prop:consistency_equivalence},
\ref{prop:unbounded_error}, \ref{prop:minimax} and
Example \ref{ex:PRS}.\\

\sloppy    
\noindent \textbf{Proof of proposition \ref{prop:consistency_equivalence}.}
  ($\Leftarrow$) Let $A$ be any global optimization 
  algorithm such that
  $\forall f \in \bigcup_{k\geq0}\Lip(k)$,
    $
    \sup_{x \in \X} \min_{i=1\dots n} \norm{X_i - x}_2 \xrightarrow{p}0.
  $
  Pick any $\epsilon>0$, any $f \in \bigcup_{k\geq0}\Lip(k)$
  and let  $\X_{\epsilon} = \{x \in \X: f(x) \geq \max_{x \in \X}f(x) -\epsilon  \}$ 
  be the corresponding level set.
  As $\X_{\epsilon}$ is non-empty, 
  there necessarily exists some
  $ x_{\epsilon} \in \X$ and $r_{\epsilon}>0$ such that
  $B(x_{\epsilon}, r_{\epsilon})\cap\X \subseteq \X_{\epsilon}$.
  Thus, if 
  $X_1,\dots, X_n$ denotes a sequence a sequence of $n$ evaluation points generated 
  by $A$ over $f$,
  we directly obtain 
  from the convergence in probability of the mesh grid that
  \begin{align*}
  \P\left(  \max_{x \in \X}f(x) -\max_{i=1\dots n}f(X_i) >\epsilon \right)  
   &= \P\left(\bigcap_{i=1}^n \left\{ X_i \notin \X_{\epsilon}  \right\} \right) \\
   & \leq \P\left(\bigcap_{i=1}^n\left\{ X_i \notin B(x_{\epsilon}, r_{\epsilon}) \right\}\right)\\
   &= \P\left( \min_{i=1\dots n} \norm{X_i - x_{\epsilon}}_2 > r_{\epsilon} \right)\\
   & \leq \P\left(\sup_{x \in \X}\min_{i=1\dots n} \norm{X_i - x}_2 > r_{\epsilon}
    \right)
    \xrightarrow[n \to \infty]{} 0.
  \end{align*} 
  
  \noindent ($\Rightarrow$) Let $A$ be any global optimization algorithm 
  consistent over the set of Lipschitz functions and assume by contradiction that
  there exists some $f^{\star}\in\bigcup_{k\geq0}\Lip(k)$ such that
  $
   \sup_{x \in _X} \min_{i=1 \dots n} \norm{x - X_i}_2  \overset{p}{\nrightarrow} 0.
   $
   The implication is proved in two steps: 
  first, we show that there exists a ball $B(c^{\star},\epsilon)$
  for some $c^{\star}\in\X$ which is 
  almost never hit by the algorithm and second, we build a Lipschitz function which admits
  its maximum over this ball.\\
  
  \noindent {\it First step.}
  Let $\{X_i\}_{i \in \mathbb{N}^{\star}}$ be a sequence of evaluation points
  generated by $A$ over $f^{\star}$.
  Observe first that since for all $\epsilon >0$, the series $n \in \mathbb{N}^{\star} \mapsto 
  \P( \sup_{x \in _X} \min_{i=1 \dots n} \norm{x - X_i}_2 > \epsilon)$
  is non-increasing, then the contradiction assumption necessarily implies that
  \begin{equation}
  \label{prop:cover_eq2}
  \exists \epsilon_1, \epsilon_2>0 \text{~such that~} \forall n \in \mathbb{N}^{\star},
  ~\P\left(\sup_{x \in \X} \min_{i=1\dots n} \norm{x - X_i}_2
  > \epsilon_1 \right) > \epsilon_2.
  \end{equation}
  Consider now any sequence $c_1, \dots, c_{N_{1}} $ 
  of ${N_{1}} =\mathcal{N}_{\epsilon_1}(\X)$  points in $\X$ defining 
  an $\epsilon_1$-cover of $\X$
  and suppose by contradiction that 
  \begin{equation*}
   \forall c \in \{c_1, \dots, c_{N_1} \},~
   \exists n_{c} \in \mathbb{N}^{\star}\text{~such that~}
   \P\left( \bigcap_{i=1}^{n_c}  \{X_i \notin B(c,\epsilon_1)\cap \X \} \right)
   \leq \frac{\epsilon_2}{2  N_1}
  \end{equation*}
  which gives by setting $N_2 =\max_{c \in \{c_1, \cdots, c_{N_{1}} \} } n_c $ that
  \[
   \forall c \in \{c_1, \dots, c_{N_{1}} \},~
   \P\left( \bigcap_{i=1}^{N_2}  \{X_i \notin B(c,\epsilon)\cap \X \} \right)
   \leq \frac{\epsilon_2}{2 N_1}.
  \]
  However, as $c_1,\dots, c_{N_1}$ form an $\epsilon_1$-cover  of $\X$, it follows that
  \begin{align*}
   \P\left( \sup_{x \in \X} \min_{i=1 \dots N_2} \norm{x-X_i}_2 \leq \epsilon_1
   \right) & \geq 
   \P\left(  \bigcap_{j=1}^{N_{1}} \bigcup_{i=1}^{N_2} 
   \{ X_i \in B(c_j,\epsilon_1) \cap \X \} \right)  \\
   & = 1 - \P\left( \bigcup_{j=1}^{N_1} \bigcap_{i=1}^{N_2} \{X_i \notin B(c_j,\epsilon_1) \cap \X \}   \right) \\
   & \geq 1 - \sum_{j=1}^{N_1} \P\left(  \bigcap_{i=1}^{N_2} 
   \{ X_i \notin B(c_j,\epsilon) \cap \X \}  \right) \\
   & \geq 1 - N_1
   \times \frac{\epsilon_2}{2 N_1} \\
   & = 1 - \frac{\epsilon_2}{2}
  \end{align*}
  which contradicts (\ref{prop:cover_eq2}). Hence, we deduce that  
  \[
   \exists c^{\star} \in \{c_1, \dots, c_{N_{\epsilon}} \}
   \text{~such that~}
   \forall n \in \mathbb{N}^{\star},~ 
   \P\left( \bigcap_{i=1}^{n}  \{X_i \notin B(c^{\star},\epsilon_1)\cap \X \}\right) \geq 
   \frac{\epsilon_2}{2N_1}.
  \]
  {\it Second Step.} 
  Based on this center $c^{\star}\in\X$, one can  introduce the function 
  $\tilde{f}: \X \mapsto \R$ defined for all $x \in \X$ by
  \[
   \tilde{f}(x) =
   \begin{cases}
   f^{\star}(x) + 
    3\left(1 - \frac{\norm{c^{\star} - x}_2}{\epsilon_1} \right)
    \times (\max_{x \in \X}f^{\star}(x)  - \min_{x \in \X}f^{\star}(x))
    & \text{~if~}x \in B(c^{\star},\epsilon_1)\\
    f^{\star}(x) & \text{~otherwise}
   \end{cases}
  \]
  which is maximized over $B(c^{\star}, \epsilon_1)$ and 
  Lipschitz continuous as both $f^{\star}$ 
  and $x \mapsto \norm{c^{\star} - x}_2$ are Lipschitz.
  However, since $\tilde{f}$ and $f^{\star}$ can not be distinguished over
  $\X/B(c,\epsilon_1)$,
  we have that $\forall n \in \mathbb{N}^{\star}$,
  \begin{align*}
   \P\left(\max_{x \in \X}\tilde{f}(x) - \max_{i=1\dots n}\tilde{f}(X'_i) > \max_{x \in \X}f(x)  \right)
    & \geq \P\left( \bigcap_{i=1}^n \{X'_i \notin B(c,\epsilon_2) \cap \X \} \right)\\
    &= \P\left( \bigcap_{i=1}^n \{X_i \notin B(c,\epsilon_2)\cap\X \} \right) \\
    &\geq \epsilon_2/(2N_1)\\
    & >0
  \end{align*}
  where $X'_1, \dots,  X_n'$ denotes a sequence of evaluation points
  generated by $A$ over $\tilde{f}$, and we deduce
  that there exists $\tilde{f} \in \bigcup_{k\geq0}\Lip(k)$
  such that 
  $
  \max_{i=1\dots n}\tilde{f}(X'_i)  \overset{p}{\nrightarrow} \max_{x \in \X}\tilde{f}(x).
  $
  Hence, it contradicts the fact that $A$ is consistent over $\bigcup_{k\geq 0}\Lip(k)$
  and we deduce
  that, necessarily, $\sup_{x \in \X}\min_{i=1\dots n} \norm{X_i-x} \overset{p}{\rightarrow} 0$
  for all $f \in \bigcup_{k\geq0}\Lip(k)$.
  \hfill\(\Box\)

  ~\\
   {\bf Proof of Example \ref{ex:PRS}.} 
   Fix any $n \in \mathbb{N}^{\star}$, set $\delta \in (0,1)$,  define 
   $\epsilon = 
   \diam{\X}\cdot \left(  (\ln(n/\delta ) +d \ln(d))/n \right)^{1/d}$
   and let $X_1, \dots, X_n$ be a sequence of $n$ independent copies of $X \sim \mathcal{U}(\X)$.
   Since the result trivially holds whenever $(\ln(n/\delta ) +d \ln(d))/n\geq 1$, 
   we consider the case where $(\ln(n/\delta ) +d \ln(d))/n<1$.
   From Proposition \ref{coro:covering_number}, we know
   that there exists a sequence $x_1, \dots, x_{N_{\epsilon}} $ of 
   $N_{\epsilon} = \mathcal{N}_{\epsilon}(\X)$ points in $\X$ 
   such that $\X \subseteq \bigcup_{j=1}^{N_{\epsilon}} B(x_j, \epsilon)$.
   Therefore, using the bound on the covering number 
   $\mathcal{N}_{\epsilon}(\X)$ of Corollary \ref{coro:covering_number}, 
   we obtain that
   \begin{align*}
    \P\left( \sup_{x \in \X} \min_{i=1 \dots n} \norm{x-X_i}_2 \leq \epsilon
     \right) 
     & \geq \P\left(  \bigcap_{j=1}^{N_{\epsilon}}
     \bigcup_{i=1}^n \{X_i \in B(x_j, \epsilon)\cap \X \} \right) \\
     & = 1 - \P\left(
     \bigcup_{j=1}^{N_{\epsilon}}
     \bigcap_{i=1}^n \{X_i \notin B(x_j, \epsilon)\cap \X \} \right) \\
     & \geq 1 - \sum_{j=1}^{N_{\epsilon}}
     \P\left( \bigcap_{i=1}^n \{X_i \notin B(x_j, \epsilon)\cap \X \}  \right) \\
     & \geq 1 - 
     N_{\epsilon} \times
     \max_{j=1\dots N_{\epsilon}} \P(X_1 \notin B(x_j, \epsilon)\cap \X)^n \\
     & = 1 - N_{\epsilon}
     \times \max_{j=1\dots N_{\epsilon}} 
     \left( 1-  \frac{ \mu(\X \cap B(x_j, \epsilon)) }{\mu(\X)} \right)^n \\
     & \geq 1 -N_{\epsilon} \times
     \left(1 - \left( \frac{\epsilon}{\diam{\X}} \right)^d \right)^n \\
     & \geq  1 - \left( \frac{\sqrt{d}\diam{\X}}{\epsilon} \right)^d
     \times
     \left(1 - \left( \frac{\epsilon}{\diam{\X}} \right)^d \right)^n \\
     & \geq 1 -\delta
   \end{align*}
   and the proof is complete.\hfill\(\Box\) \\

  \noindent {\bf Proof of Proposition \ref{prop:unbounded_error}.}
  The proof heavily builds upon the arguments used in the proof of the Theorem 1
  in (\cite{bull2011convergence}). 
  Pick any algorithm $A\in\A$ and any constant $C>0$.
  Fix any $n\in\mathbb{N}^{\star}$ and $\delta \in (0,1)$
  and set $N_{\delta} = \lceil (n/\delta)^{1/d}\rceil$. 
  By definition of $\rad{\X}$, we know
  there exists some $x \in \X$ such that $x +[0,2\rad{\X}/\sqrt{d}]^d \subseteq \X$.
  One can then define
  for all $I \in \{1, \dots, N_{\delta}\}^d$, the centers $c_I$ of the hypercubes $H_I$
  whose side are equal to $D = 2\rad{\X}/ (\sqrt{d}N_{\delta})$
  and cover $\X$, {\it i.e.,} $\bigcup_I H_I = x +[0,2\rad{\X}/\sqrt{d}]^d \subseteq \X$.
  Now, let $X_1,\dots,X_n$ be a sequence of $n$ evaluation points generated
  by the algorithm $A$ over the constant function $f_0: x \in \X \mapsto 0$
  and define for all $I \in \{1, \dots, N_{\delta}\}^d $ the event
  \[
   E_I = \bigcap_{i=1}^n \left\{X_i \notin \text{Int}(H_I)\right\}.
  \]
  As the interiors of the $N_{\delta}^d$ hypercubes are disjoint
  and we have $n$ points, 
  it necessarily follows that
  \begin{align*}
   N_{\delta}^d  \times \max_{I} ~\!\P(E_I) 
   \geq \sum_{I} \P(E_I)
    = \esp{\sum_I \indic{E_I} }
    \geq N_{\delta}^d - n.
  \end{align*}
  Hence, there exsits some fixed $I^{\star}$ only depending on $\A$
  which maximizes the above probability and thus satisfies
  \[
   \P(E_{I^{\star}}) \geq  \frac{N_{\delta}^d - n}{N_{\delta}^d}
   =  1 - \frac{n}{\lceil(n/\delta)^{1/d}\rceil^d}
   \geq 1 -\delta.
  \]
  Now, using the center $c_{I^{\star}}$ of the hypercube $H_{I^{\star}}$,
  one can then introduce the function $\tilde{f} \in \bigcup_{k\geq0}\Lip(k)$ 
  defined for all $x \in \X$ by
  \[
   \tilde{f}(x) =
   \begin{cases}
    C \times ( 1- 2 \norm{c_{I^{\star}} -x }_2/D) & \text{if~}
   \norm{c_{I^{\star}} -x }_2  \leq D/2\\
    0 & \text{otherwise.}
   \end{cases}
  \]
  However, since the functions $\tilde{f}$ and $f_0$ can not be distinguished over
  $\X/ H_{I^{\star}}$,
  we have that
  \begin{align*}
   \P\left(\max_{x \in X}\tilde{f}(x) - \max_{i=1\dots n}\tilde{f}(X'_i) \geq C \right)
    \geq \P\left( \bigcap_{i=1}^n \{X'_i \notin \text{Int}(H_{I^{\star}}) \} \right) 
    = \P(E_{I^{\star}})
    \geq 1- \delta
  \end{align*}
  where $X'_1,\dots,X_n'$ denotes a sequence 
  of evaluation points generated by $\A$ over $\tilde{f}$,
  which proves the result.\hfill\(\Box\)

  ~\\
  {\bf Proof of Proposition \ref{prop:minimax}.}
  {\bf (Lower bound).}
  Pick any $n \in \mathbb{N}^{\star}$ and set 
  $D= 2\rad{\X}/(\sqrt{d}\lceil(2n)^{1/d} \rceil )$.
  It can easily be shown by reproducing the same steps as in the
  proof of Proposition \ref{prop:unbounded_error} with $\delta$ set to $1/2$,
  that for any global optimization algorithm $A$,
  there exists a function $\tilde{f}_A\in \text{Lip}(k)$
  defined by
    \[
   \tilde{f}_A(x) =
   \begin{cases}
    kD/2 -k\cdot  \norm{c_{A} -x }_2 & \text{if~}
   \norm{c_{A} -x }_2  \leq D/2\\
    0 & \text{otherwise,}
   \end{cases}
  \]
  for some center $c_A \in \X$ only depending on $A$, for which we have
  $
   \P(\max_{x \in X}\tilde{f}_A(x) 
   - \max_{i=1\dots n}\tilde{f}_A(X_i) \geq k\cdot D/2)
   \geq 1/2
  $
  where 
  $X_1,\dots,X_n$ is a sequence of $n$ evaluation points generated
  by $A$ over $\tilde{f}_A$.
  Therefore, using the definition of the supremum and Markov's inequality
  gives that $\forall A \in \A$:
  \begin{align*}
  \sup_{f \in \Lip(k)}\esp{\max_{x \in \X}f(x) - \max_{i=1\dots n}f(X_i)   }  
  & \geq \esp{\max_{x \in \X}\tilde{f}_A(x) - \max_{i=1\dots n}\tilde{f}_A(X_i)   } \\
  & \geq  \frac{kD}{2} \times
   \P\left(\max_{x \in X}\tilde{f}_A(x) - \max_{i=1\dots n}\tilde{f}_A(X_i) 
   \geq k\cdot \frac{D}{2} \right) \\
  & \geq k \cdot \frac{\rad{\X}}{8\sqrt{d}} \cdot n^{-\frac{1}{d}}.
  \end{align*}
  As the previous inequaliy holds true for any algorithm $A$,
  the proof is complete.\\
  
  \noindent {\bf (Upper bound).} 
  Sequentially  using the fact that (i)
  the infinimum minimax loss taken over all algorithms
  is necessarily upper bounded by the loss suffered by a Pure Random Search,
  (ii) for any positive random variable,
  $\esp{X} = \int_{t=0 }^{\infty} \P( X \geq t)  \text{d}t$,
  (iii) Proposition \ref{prop:cvg_prs}
  and (iv) the change of variable $u=n(t/\diam{\X})^{1/d}$,
  we obtain that
  \begin{align*}
      \inf_{A \in \A} \sup_{f \in \text{Lip}(k)}
   \esp{ \max_{x \in \X}f(x) -\max_{i=1 \dots n}f(X_i) } 
    & \leq \sup_{f \in \text{Lip}(k)}
   \esp{ \max_{x \in \X}f(x) -\max_{i=1 \dots n}f(X'_i) }  \\
    &\leq \int_{0}^{\infty} \exp\left\{ -n ( t/k\cdot \diam{\X})^{1/d} \right\}
    \text{d}t \\
     & =  k\cdot \diam{\X} \cdot n^{-d} \cdot d \cdot 
     \int_{0}^{\infty} u^{d-1} e^{-u}  \text{d}u\\
     & = k\cdot \diam{\X} \cdot n^{-d} \cdot d \cdot \Gamma(d)
  \end{align*}
    where $X'_1, \dots, X'_n$ denotes a sequence of $n$ independent copies of $X' \sim 
  \mathcal{U}(\X)$ and
  $\Gamma(\cdot)$ the Euler's Gamma function. 
  Recalling that $\Gamma(d)= (d-1)!$ 
  for all $d \in \mathbb{N}^{\star}$ completes the proof.\hfill\(\Box\)

  \section{Proofs of Section \ref{sec:lipopt}}
  
  In this section, 
  we provide the proofs for Lemma \ref{lem:potential}, Proposition 
  \ref{prop:consistency_lipopt}, Proposition \ref{prop:fasterprs},
  Corollary \ref{prop:upperlipopt}, Proposition \ref{prop:limit_lipopt},
  Theorem \ref{th:fast_rates} and Theorem \ref{th:lower_bound}.\\

  \noindent {\bf Proof of Lemma \ref{lem:potential}.} 
  The first implication ($\Rightarrow $) is a direct 
  consequence of the definition of $\X_{k,t}$.
  Noticing that the function
  $\hat{f}:x \mapsto \min( \max_{i=1\dots t}f(X_i) ,\min_{i=1\dots t}f(X_i) +k \norm{x-X_i}_2)$
  belongs to $\F_{k,t}$ and that
  $\arg\max_{x \in \X}\hat{f}(x)= \{x \in \X: 
  \min_{i=1\dots t } f(X_i) + k\norm{x-X_i}_2\geq \max_{i=1\dots t}f(X_i) \}$
  proves the second implication.\hfill\(\Box\)
  
  ~\\
  \noindent {\bf Proof of Proposition \ref{prop:consistency_lipopt}.} 
  Fix any $f \in \Lip(k)$, pick any $n\in \mathbb{N}^{\star}$, set
  $\epsilon>0$ and let
  $\X_{\epsilon} = \{ x\in \X: f(x) \geq \max_{x \in \X} f(x) -\epsilon\}$ be
  the corresponding level set.
  Denoting by $X_1', \dots, X_n'$ a sequence of $n$ random 
  variable uniformly distributed over $\X$ and 
  observing that $\mu(\X_{\epsilon})>0$,
  we directly obtain from Proposition \ref{prop:fasterprs} that
  \begin{align*}
   \P\left(\max_{x \in \X}f(x) - \max_{i=1\dots n}f(X_i) > \epsilon \right)
   &\leq 
   \P\left(\max_{x \in \X}f(x) - \max_{i=1\dots n}f(X'_i) > \epsilon \right)\\
   &=     \P\left(   
    \bigcap_{i=1}^n \{ X'_i \notin \X_{\epsilon}\}
    \right) \\
   &\leq 
    \left( 1 - \frac{\mu(\X_{\epsilon})}{\mu(\X)}   \right)^n
    \xrightarrow[n \to \infty]{} 0.
  \end{align*}
  \hfill\(\Box\)
  
  ~\\
  {\bf Proof of Proposition \ref{prop:fasterprs}.} 
  The proof is similar to the one of Proposition 12
  in (\cite{malherbe2016ranking}).\hfill\(\Box\)
  
  ~\\
  {\bf Proof of Corollary \ref{prop:upperlipopt}.} 
  Combining Proposition \ref{prop:fasterprs} and
  Proposition \ref{prop:cvg_prs} stated at the begining of the Appendix Section
  gives the result.\hfill\(\Box\)
  
  ~\\
  {\bf Proof of Proposition \ref{prop:limit_lipopt}.}
  Fix any $\delta \in (0,1)$, set $n \in \mathbb{N}^{\star}$ 
  and let $r_{\delta,n } = \rad{\X} ( \delta/n)^{\frac{1}{d}}$ be the value
  of the lower bound divided by $k$.
  As $\rad{\X}>0$, there necessarily
  exists some point $x^{\star}\in \X$ such that $B(x^{\star}, \rad{\X}) \subseteq \X$.
  Based on this point,
  one can then introduce the function $\tilde{f}\in\text{Lip}(k)$ defined 
  for all $x \in \X$ by
    \begin{equation*}
      \tilde{f}(x) =
      \begin{cases}
	\   k\cdot r_{\delta, n}
	- k\cdot \norm{x - x^{\star}}_2 &
	\text{~if~} x \in B(x^{\star}, r_{\delta, n}) \ \   \\
	\  0 & \text{~otherwise.} \
      \end{cases}  
    \end{equation*} 
   Denoting now by $X_1, \dots, X_n$ a sequence of $n$ evaluation points
   generated by LIPO tuned with a parameter $k$ over $\tilde{f}$
   and observing that (i) $X_1$ is uniformly distributed over $\X$
   and (ii) $X_{i+1}$ is also uniformly distributed over $\X$ 
   for $i\geq 1$ as soon as only constant evaluations have been recorded
   ({\it i.e.}~$\X_{k,i+1} = \X$ on the event 
   $\bigcap_{t\leq i}\{X_t \notin  B(x^{\star},r_{\delta, n} ) \}$),
   we have that
     \begin{align*}
   \P\left( \max_{x \in \X}\tilde{f}(x) 
   - \max_{i=1\dots n}\tilde{f}(X_i) \geq k\cdot r_{\delta,n} \right)
    & \geq  \P\left(\bigcap_{i=1}^n \{ X_i \notin B(x^{\star},r_{\delta, n} )\} \right)\\
    & = \Bigg[ \P(X_1 \notin B(x^{\star},r_{\delta, n} ) )\times \\
    & ~~~\prod_{i=1}^{n-1} 
    \P\!\left(X_{i+1} \notin B(x^{\star},r_{\delta, n} ) \mid
     \bigcap_{t=1}^i\{X_t \notin B(x^{\star},r_{\delta, n} ) \} \right)\!
     \Bigg] \\ 
     &= \left( 1- \frac{\mu(B(x^{\star}, r_{\delta, n}) \cap \X) }{\mu(\X)}  \right)^n  \\
     & \geq \left(1 - \left(\frac{r_{\delta, n}}{\rad{\X}}\right)^d \right)^n \\
    & =\left( 1 - \frac{\delta}{n}\right)^n \\
    & \geq 1-\delta.
  \end{align*}
   \hfill\(\Box\)
   
%

  ~\\
  \noindent {\bf Proof of Theorem \ref{th:fast_rates}.}
  Pick any $n \in \mathbb{N}^{\star}$, fix any $\delta \in (0,1)$ and 
  let $X_1, \dots, X_n$ be a sequence of $n$ evaluation points
  generated by the LIPO algorithm over $f$ after $n$ iterations.
  To clarify the proof, we set some specific notations:
  let $D=\max_{x \in \X}\norm{x-x^{\star}}_2$, set
  \begin{equation*}
   M = \begin{cases}
     \left\lfloor 
     \left( \frac{c_{\kappa}}{8k} \right)^d \cdot 
     \frac{n}{ \ln(n/\delta) + 2(2\sqrt{d})^d  }
     \right\rfloor
     &  \text{~~if~~} \kappa=1 \\
     ~&\\
    \left\lfloor \frac{1}{\ln(2)d(\kappa-1)}
    \ln\left( 1+
    \left( \frac{c_{\kappa}D^{\kappa}}{8kD} \right)^d
    \frac{n(2^{d(\kappa-1)}-1)}{\ln(n/\delta) + 2(2\sqrt{d})^d}   \right) \right\rfloor
    &  \text{~~otherwise,~~}
   \end{cases}
  \end{equation*}
  define for all  $m \in \{1\dots M \}$  the series of integers:
  \[
      N_m := \left\lceil \sqrt{d} \cdot \left( 
    \frac{8 k D }{c_{\kappa} D^{\kappa}}  \right)
    \cdot 2^{m(\kappa-1)} \right\rceil^d
  \text{~~~and~~~}
     N'_m := \left\lceil \ln(M/\delta) 
    \cdot \left( \frac{  8 k D}{c_{\kappa} D^{\kappa}}\right)^d 
    \cdot 2^{md(\kappa-1)}
     \right\rceil
  \]
  and let $ \tau_0, \dots, \tau_M$ be the series of stopping times
  initialized by  $\tau_0= 0$ and defined for all $m\geq 1$ by
  \begin{equation*}
    \tau_{m}  := \inf\left\{t \geq \tau_{m-1} \mid 
    \sum_{i=\tau_{m-1}+1}^t \indic{X_i 
    \in B(x^{\star}, 2\cdot D\cdot2^{-m})} = N'_{m} \right\}.
  \end{equation*}
  The stopping time $\tau_m$ correspond to the time after $\tau_{m-1}$
  where we have recorded at least $N'_m$ random evaluation points
  inside the ball $B(x^{\star},2 \cdot D \cdot 2^{-m})$.
  To prove the result, we  show that each of the following events:
    \begin{align*}
      E_m := 
      \left\{  \max_{i=1 \dots \tau_m} f(X_i)
   \geq \max_{x \in \X}f(x)  - \frac{c_{\kappa}}{2} 
   \cdot \left( \frac{D}{ 2^{m} } \right)^{\kappa} \right\}
      \cap \left\{ \tau_m \leq  N'_1  
      + \sum_{l=1}^{m-1} \left( N'_{l+1}  + N_l \right) \right\}.
    \end{align*}
  holds true with probability at least $1-\delta/M$
  on the event $\bigcap_{l=1}^{m-1}E_l$ for all $m\in \{2, \dots, M \}$
  so that:
  \begin{equation}
  \label{eq:final}
   \P\left( E_M \right) \geq 
   \P(E_1) \times \prod_{m=1}^{M-1} \P\left( E_{m+1} | \bigcap_{l=1}^mE_l\right)
   \geq \left(1 - \frac{\delta}{M} \right)^M \geq 1- \delta
  \end{equation}
  that will leads us to the result by analyzing $E_M$.\\

 
  \noindent {\bf Analysis of $\P(E_1)$.} 
  Observe first that since $\X \subseteq B(x^{\star},D)$, then $\tau_1=N_1'$.
  Using now the fact 
  that (i) the algorithm is faster than a Pure Random Search 
  (Proposition \ref{prop:fasterprs})
  and (ii) the bound of Proposition \ref{prop:cvg_prs},
  we directly get that with probability at least $1-\delta/ M$,
  \begin{align*}
   \max_{x \in \X}f(x) - \max_{i=1\dots \tau_1 } f(X_i)
   & \leq k
  \cdot 2D \cdot \left( \frac{\ln(M/\delta)}{N'_1} \right)^{\frac{1}{d}} \\
   &\leq k\cdot 2D \cdot 
  \left( 
  \frac{\ln(M/\delta)}{\ln(M/\delta) 2^{d(\kappa-1)}}  
  \left( \frac{c_{\kappa}D^{\kappa} }{8kD} \right)^d
   \right)^{\frac{1}{d}}  \\
   &= \frac{c_{\kappa}}{2} \cdot \left(\frac{D}{2}\right)^{\kappa}
  \end{align*}
  which proves that $\P(E_1) \geq 1-\delta/M$.\\

  \noindent {\bf Analysis of $\P(E_{m+1}|\cap_{l=1}^{m} E_l )$.}
  To bound this term, we use (i) a deterministic covering argument
  to control the stopping time $\tau_{m+1}$ 
  (Lemma \ref{lem:covering} and Corollary \ref{coro:stopping})
  and 
  (ii) a stochastic argument to bound the maximum $\max_{i=1\dots \tau_{m+1}}f(X_i)$
  (Lemma \ref{lem:bonus2} and Corollary \ref{eq:coro}).
  The following lemma states that after $\tau_m$ and 
  on the event $E_m$  there will be at most 
  $N_m$ evaluation points that will fall inside the area 
  $B(x^{\star}, 2D\cdot 2^{-m}) /B(x^{\star}, D\cdot2^{-m})$.\\
  
  \begin{lemma}
  \label{lem:covering}
    For all $ m \in \{ 1,\dots, M-1\}$, 
    we have on the event $E_m$,
  \[
    \sum_{t=\tau_m+1}^{n} \indic{ X_t 
    \in B(x^{\star}, 2D\cdot 2^{-m}) /B(x^{\star}, D\cdot2^{-m})  }
    \leq N_m.
  \]
  \end{lemma}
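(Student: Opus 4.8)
The plan is to exploit the fact that LIPO only accepts a point when it passes the decision rule of Lemma~\ref{lem:potential}, so every evaluated point $X_t$ with $t\geq 2$ satisfies $\min_{i<t}\big(f(X_i)+k\norm{X_t-X_i}_2\big)\geq\max_{i<t}f(X_i)$. First I would reduce the statement to a separation property of the accepted points lying in the annulus $B(x^{\star},2D\cdot 2^{-m})\setminus B(x^{\star},D\cdot 2^{-m})$. Take two such points $X_s$ and $X_t$ accepted after $\tau_m$, say $\tau_m<s<t$. Instantiating the decision rule at step $t$ at the specific earlier index $s$ gives $f(X_s)+k\norm{X_t-X_s}_2\geq\max_{i<t}f(X_i)\geq\max_{i\leq\tau_m}f(X_i)$, and on the value component of the event $E_m$ the right-hand side is at least $\max_{x\in\X}f(x)-\tfrac{c_\kappa}{2}(D\cdot 2^{-m})^\kappa$.

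Next I would upper bound $f(X_s)$ through Condition~\ref{cond}: since $X_s$ lies outside $B(x^{\star},D\cdot 2^{-m})$ we have $\norm{X_s-x^{\star}}_2\geq D\cdot 2^{-m}$, hence $f(x^{\star})-f(X_s)\geq c_\kappa(D\cdot 2^{-m})^\kappa$. Combining the two bounds and cancelling $f(x^{\star})=\max_{x\in\X}f(x)$ yields
\[
k\norm{X_t-X_s}_2\ \geq\ c_\kappa(D\cdot 2^{-m})^\kappa-\tfrac{c_\kappa}{2}(D\cdot 2^{-m})^\kappa\ =\ \tfrac{c_\kappa}{2}(D\cdot 2^{-m})^\kappa,
\]
so any two accepted annulus points are separated by at least $\rho_m:=c_\kappa(D\cdot 2^{-m})^\kappa/(2k)$. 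Equivalently, once a point is accepted in the annulus it forbids every later candidate within distance $\rho_m$ from passing the decision rule, so the accepted annulus points form a $\rho_m$-packing.

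Finally I would convert this packing property into the count $N_m$ by a covering argument. The annulus is contained in the ball $B(x^{\star},2D\cdot 2^{-m})$, whose diameter is $4D\cdot 2^{-m}$; applying Corollary~\ref{coro:covering_number} to this ball at a scale proportional to $\rho_m$ and comparing packing with covering (at most one $\rho_m$-separated point per covering cell) bounds the number of accepted annulus points by a quantity of the form $\big(\sqrt{d}\,(8kD/c_\kappa D^{\kappa})\,2^{m(\kappa-1)}\big)^d$. Substituting $\rho_m$ and simplifying the powers of $2^{m(\kappa-1)}$ against $\diam{B(x^{\star},2D\cdot 2^{-m})}$ recovers exactly $N_m=\big\lceil \sqrt{d}\,(8kD/c_\kappa D^{\kappa})\,2^{m(\kappa-1)}\big\rceil^d$. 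The main obstacle is the separation step rather than the geometry: one must be careful that the decision rule is a minimum over all previous indices, so it must be read off at the earlier annulus point $X_s$, and that the lower bound on $\max_{i\leq\tau_m}f(X_i)$ supplied by $E_m$ is precisely what leaves the positive residual $\tfrac{c_\kappa}{2}(D\cdot 2^{-m})^\kappa$ after subtracting the Condition~\ref{cond} bound. Tracking the constants so that the covering scale lines up with the stated $N_m$ is then routine.
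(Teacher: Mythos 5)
Your proof is correct, and it reorganizes the paper's argument in a genuinely different (and arguably cleaner) way, even though the core inequality is identical. The paper covers the annulus with the grid of $N_m$ hypercubes of diameter at most $\rho_m := c_{\kappa}(D\cdot 2^{-m})^{\kappa}/(2k)$ and shows, via the chain (Lipschitz bound from the accepted point) $+$ (Condition \ref{cond}) $+$ ($E_m$), that once an accepted point lands in a cube the entire cube drops out of the set of potential maximizers $\X_{k,t}$ (through Lemma \ref{lem:potential}); it then counts cubes by contradiction, with explicit bookkeeping of the surviving index set $I_t$. You instead read the same chain at a \emph{pair} of accepted points: the decision rule at the later point $X_t$, instantiated at the earlier annulus point $X_s$, combined with Condition \ref{cond} at $X_s$ and the value half of $E_m$, forces $\norm{X_t-X_s}_2 > \rho_m$, after which the count follows from a packing bound in $B(x^{\star}, 2D\cdot 2^{-m})$. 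This dispenses with Lemma \ref{lem:potential}, the contradiction, and the index bookkeeping, at the cost of two points of care. First, to land exactly on $N_m$ you must take the packing cells to be the grid of subcubes of side $4D\cdot 2^{-m}/N$ with $N=\lceil \sqrt{d}\,8kD\,2^{m(\kappa-1)}/(c_{\kappa}D^{\kappa})\rceil$ (as in the proof of Proposition \ref{prop:cover_hypercube}), whose diameter is $\sqrt{d}\cdot 4D\cdot 2^{-m}/N \leq \rho_m$; invoking Corollary \ref{coro:covering_number} literally, as you suggest, covers by balls of radius $\epsilon$ (cells of diameter $2\epsilon$), which loses a factor $2^d$ and yields a constant $16$ in place of $8$, overshooting the stated $N_m$. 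Second, the separation must be strict for cells of diameter exactly $\rho_m$ to contain at most one point; this does hold, because $X_s \notin B(x^{\star}, D\cdot 2^{-m})$ gives the strict inequality $f(X_s) < \max_{x\in\X}f(x) - c_{\kappa}(D\cdot 2^{-m})^{\kappa}$, but your write-up states the separation as ``at least $\rho_m$,'' so you should carry the strictness through explicitly.
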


  \begin{proof}
  Fix $m \in \{1, \dots, M -1\}$ and assume that 
  $E_m=\{ \max_{i=1\dots \tau_m}f(X_i) \geq 
  \max_{x \in \X}f(x) - c_k/2 \cdot (D/ 2^{m})^{\kappa}\}$ holds true.
  Setting $N = \lceil \sqrt{d} 8k D  2^{m(\kappa-1)} /(c_kD^{\kappa}) \rceil $
  and observing that $B(x^{\star}, 2D\cdot 2^{-m})
  \subseteq x^{\star} +2D\cdot 2^{-m} \cdot [-1,+1]^d$,
  one can then introduce the sequence $H_I$, with $I \in \{1, \dots, N \}^d$,
  of the $N^d = N_m$ hypercubes  
  whose side have length $4D\cdot 2^{-m}/N$ and
  cover $x^{\star} +2D\cdot2^{-m} \times [-1,+1]^d$, so that
  \[
  B(x^{\star}, 2D\cdot 2^{-m}) /B(x^{\star}, D\cdot 2^{-m})
  \subseteq  x^{\star} +2D\cdot2^{-m} \cdot [-1,+1]^d = \bigcup_I H_I.
  \]
  Based on these hypercubes, one can define the set 
    \[
      I_{t} = \{ I \in \{1, \dots, N \}^d: 
      H_I \cap 
      B(x^{\star}, 2D\cdot 2^{-m})/B(x^{\star}, 2D\cdot 2^{-m})
      \cap \X_{k,t}
      \neq \emptyset  \}
  \]
  which contains the indexes of the hypercubes that still intersect
  the set of potential maximizers $\X_{k,t}$ at time $t$ and the target area
  $B(x^{\star}, 2D\cdot 2^{-m})/B(x^{\star}, 2D\cdot 2^{-m})$.
  We show by contradiction that there cannot be more than $N^{d}=N_m$
  evaluation points falling inside this area, otherwise it would be empty.
  Suppose that, after $\tau_m$, there exists a sequence
  \[
    \tau_m < t_1 < t_2 < \dots < t_{N^{d}+1} \leq n
  \]
  of $N^{d}+1$ strictly increasing indexes for which the evaluation points 
  $X_{t_j}$, $j\geq 1$,
  belong to the target  area, {\it i.e,}
  $$
  \forall j \in \{ 1, \dots, N^d+1\},~
  X_{t_j} \in  B(x^{\star}, 2D\cdot 2^{-m}) /B(x^{\star}, D\cdot 2^{-m}).
  $$
  Fix any $j\geq 1$ and observe that since $X_{t_j}\notin B(x^{\star},D\cdot 2^{-m} )$,
  then we have from Condition \ref{cond} that
  $
  \text{(i)} ~~f(X_{t_j}) < \max_{x\in\X}f(x) - c_k \cdot (D\cdot 2^{-m})^{\kappa}.
  $
  Moreover, as
  $X_{t_j} \in \X_{k,t_j-1} \cap B(x^{\star}, 2D\cdot 2^{-m}) /B(x^{\star}, D\cdot 2^{-m})$,
  it necessarily follows from the definition of the algorithm that
  $
  \text{(ii) there  exists an index~} I^{\star} \in I_{t_j-1}
  \text{~such that~} X_{t_j} \in H_{I^{\star}}.
  $
  Therefore, combining (i) and (ii) with $E_m$, gives that
  $\forall x \in H_{I^{\star}}$:
  \begin{align*}
    f(x) &\leq f(X_{t_j}) + k \cdot \norm{X_{t_j} -x }_2 & (f \in \text{Lip}(k))\\
            & \leq f(X_{t_j})
    + k \cdot \max_{(x,x') \in H_I^2} \norm{x-x'}_2 &  ((X_{t_j},x) \in H_I^2) \\
        & = f(X_{t_j})
    + k \cdot \sqrt{d}\cdot 4D \cdot 2^{-m}/N &  (\text{def.~of~}H_I)\\
        & \leq f(X_{t_j})
    + \frac{c_k}{2} \cdot (D\cdot2^{-m})^{\kappa} &  (\text{def.~of~}N)\\
    & < \max_{x\in\X}f(x) - c_{\kappa} \cdot (D\cdot 2^{-m})^{\kappa} 
    + \frac{c_k}{2} \cdot (D\cdot2^{-m})^{\kappa}
    & \text{(i)}\\
    & \leq \max_{i=1\dots \tau_m}f(X_i) & (E_1) \\
    & \leq \max_{i=1\dots t_j}f(X_i).&  (t_j>\tau_m) 
  \end{align*}
  It has been shown that if $X_{t_j}$ belongs to the target area 
  then  $f(x) < \max_{i=1\dots t_j}f(X_i)$ for all $x \in H_{I^{\star}}$,
  which combined with the definition of 
  the set of potential maximizers $\X_{k,t_j}$ at time $t_j$ implies that 
  $
    H_{I^{\star}} \notin \X_{\tau_j}.
  $ 
  Hence, once an evaluation has been made 
  in $H_{I^{\star}}$, there will not 
  be any future evaluation point falling inside this cube.
  We thus deduce that $|I_{t_j}| \leq |I_{t_j-1}|-1$ for all $j \geq 1$ 
  which leads us to the following contradiction:
  \[
      0 \leq  |I_{t_{N^{d}+1}}| 
      = |I_{\tau_m}| + \sum_{j=\tau_m+1}^{t_{N^{d}+1}} |I_{t_j}| - |I_{t_{j-1}}| \leq
      |I_{\tau_m}| -(N^d+1) \leq N^d- (N^d+1) <0
  \]
  and proves the statement.
\end{proof}
  Based on this lemma, one might then derive a bound on 
  the stopping time $\tau_{m+1}$.
  \begin{corollary}
  \label{coro:stopping}
   For all $m \in \{1,\dots, M-1 \}$, 
   we have on the event $\bigcap_{l=1}^m E_l$ that
   \[
     \tau_{m+1} \leq N'_{1} +
     \sum_{l=1}^m \left( N'_{l+1} + N_l \right).
   \]
  \end{corollary}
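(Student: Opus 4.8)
The plan is to bound $\tau_{m+1}$ by counting the evaluation points window by window, where the $j$-th window is $W_j := \{t : \tau_{j-1} < t \leq \tau_j\}$, and then summing. Since $\tau_{m+1} = \sum_{j=1}^{m+1} |W_j|$, it suffices to control how many points each window contributes. I would classify every evaluation point according to the nested balls $B(x^\star, 2D\cdot 2^{-j})$: note that the target ball of the stopping time $\tau_j$ has radius $2D\cdot 2^{-j} = D\cdot 2^{-(j-1)}$, so these are exactly the ``big balls'' of consecutive levels, and the annuli $B(x^\star, 2D\cdot 2^{-l})/B(x^\star, D\cdot 2^{-l})$ of Lemma \ref{lem:covering} tile the region $B(x^\star,D)\setminus\{x^\star\}$.

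The key step is the per-window decomposition. By the very definition of $\tau_j$, the window $W_j$ contains exactly $N'_j$ points falling inside the target ball $B(x^\star, 2D\cdot 2^{-j})$. Every other point of $W_j$ lies outside $B(x^\star, 2D\cdot 2^{-j}) = B(x^\star, D\cdot 2^{-(j-1)})$, hence inside one of the annuli $B(x^\star, 2D\cdot 2^{-l})/B(x^\star, D\cdot 2^{-l})$ for some strictly coarser level $l \in \{1,\dots,j-1\}$ (for $j=1$ there are none, since $\X \subseteq B(x^\star, D)$, consistently with $\tau_1 = N'_1$). Thus $|W_j| = N'_j + \#\{t \in W_j : X_t \text{ lies in an annulus of level } l \le j-1\}$.

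Then I would sum over $j$ and swap the order of summation. For a fixed level $l$, the annulus-$l$ points collected across the windows $W_{l+1}, \dots, W_{m+1}$ all lie in the single time interval $(\tau_l, \tau_{m+1}] \subseteq (\tau_l, n]$. Since we are on $\bigcap_{l=1}^m E_l$ and $l \le m \le M-1$, Lemma \ref{lem:covering} applies at level $l$ and bounds the number of annulus-$l$ evaluations after $\tau_l$ by $N_l$. Summing the window identities therefore gives $\tau_{m+1} = \sum_{j=1}^{m+1} N'_j + \sum_{l=1}^m \#\{\text{annulus-}l\text{ points in } (\tau_l,\tau_{m+1}]\} \le \sum_{j=1}^{m+1} N'_j + \sum_{l=1}^m N_l$, and rewriting $\sum_{j=1}^{m+1} N'_j = N'_1 + \sum_{l=1}^m N'_{l+1}$ yields exactly $\tau_{m+1} \le N'_1 + \sum_{l=1}^m (N'_{l+1} + N_l)$.

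I expect the main obstacle to be resisting the tempting shortcut that, once the running maximum is within $\frac{c_\kappa}{2}(D\cdot 2^{-m})^\kappa$ of the optimum (which holds on $E_m$), all subsequent evaluations fall in the current big ball $B(x^\star, 2D\cdot 2^{-m})$: this is false, since a point far from $x^\star$ and from all previous evaluations can still have a large upper bound $UB_{k,t}$ and thus remain a potential maximizer. The correct accounting must therefore charge each ``far'' evaluation of a late window to the annulus budget of the coarser level where it actually sits, which is precisely what the swap of summation and the per-level application of Lemma \ref{lem:covering} accomplish. A secondary point to address is the well-definedness of the stopping times $\tau_j$, which is guaranteed on $\bigcap_{l=1}^m E_l$ together with the counting bound itself, as the same argument shows the required $N'_{m+1}$ inner points are accumulated within the stated number of steps.
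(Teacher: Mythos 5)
Your proof is correct and is essentially the paper's own argument: the paper's inductive proof, built on the telescopic representation $\tau_{m+1}=\tau_1+\sum_{l=1}^m(\tau_{l+1}-\tau_l)$, is precisely your window-by-window count, and it relies on the same two facts you use (each window $(\tau_l,\tau_{l+1}]$ contains exactly $N'_{l+1}$ points of the level-$l$ ball by definition of the stopping times, and Lemma~\ref{lem:covering} bounds each per-level annulus count over the time range obtained after swapping the order of summation). The only difference is presentational: you sum directly and exchange summations, where the paper packages the identical decomposition as an induction on $m$.
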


  \begin{proof}
  The result is proved by induction. We start with the case where $m = 1$.
  Assuming that $E_1$ holds true and 
  observing that (i) $\tau_1 =N'_1$ and (ii)
  $\X \subseteq B(x^{\star},D) =
  B(x^{\star},D/2) \cup B(x^{\star},D)/ B(x^{\star},D/2)$,
  one can then write:
  \begin{align*}
   \tau_2 & = \tau_1 + \sum_{i=\tau_1 +1}^{\tau_2} \indic{X_i \in B(x^{\star}, D)  
    } \\
    & = N'_1 + \sum_{i=\tau_1 +1}^{\tau_2} \indic{X_i \in B(x^{\star}, D/2) }  
    + \sum_{i=\tau_1 +1}^{\tau_2} \indic{X_i \in B(x^{\star}, D) /
    B(x^{\star}, D/2)}.
  \end{align*}
  However, since (i)
  $
   \sum_{i=\tau_1 +1}^{\tau_2} \indic{X_i \in B(x^{\star}, D/2) } =N'_2
  $
  by definition of $\tau_2$ and (ii)
  $
   \sum_{i=\tau_1 +1}^{\tau_2} \indic{X_i \in B(x^{\star}, D) /
    B(x^{\star}, D/2)} \leq N_1
  $
  by Lemma \ref{lem:covering}, the result holds true for $m=1$.
  Consider now any $m\geq2$ and assume  that the statement holds true for all $l < m$.
  Again, observing that 
  $\X \subseteq 
  B(x^{\star}, D\cdot 2^{-m}) \cup 
  \bigcup_{l=1}^{m} B(x^{\star}, D\cdot 2^{-(l-1)} ) /B(x^{\star}, D\cdot 2^{-l} )$
  and keeping in mind that the stopping times are bounded by the
  induction assumption,
  one can write
    \begin{flushleft}
   $\displaystyle
   \tau_{m+1} = \tau_m 
  + \sum_{i=\tau_m+1}^{\tau_{m+1}} \indic{X_i \in B(x^{\star},D\cdot 2 ^{-m} )} $
    \end{flushleft}
    \begin{flushright}
    $\displaystyle
    + \sum_{i=\tau_m+1}^{\tau_{m+1}} \sum_{l=1}^{m}
  \indic{X_i \in B(x^{\star},D\cdot 2^{-(l-1)} ) / B(x^{\star}, D \cdot 2^{-l})}.$
  \end{flushright}
  Now, combining the telescopic representation
  $\tau_{m+1} = \tau_1 + \sum_{l=1}^m (\tau_{l+1} -\tau_l) $
  with the previous decomposition  gives that
  \begin{flushleft}
   $\displaystyle
   \tau_{m+1} = \tau_1 
   + \sum_{l=1}^m \sum_{\tau_l+1}^{\tau_{l+1}} \indic{X_i \in B(x^{\star},D \cdot 2^{-l} )}$
    \end{flushleft}
    \begin{flushright}
    $\displaystyle
    + \sum_{l=1}^{m} \sum^{\tau_{m+1}}_{i= \tau_l+1} 
    \indic{X_i \in B(x^{\star}, D\cdot 2^{-(l-1)})/B(x^{\star}, D\cdot  2^{-l})}.$
  \end{flushright}
  However, since (i)
  $
  \tau_1 =N'_1 \text{,~(ii)~}
  \sum_{\tau_l+1}^{\tau_{l+1}} \indic{X_i \in B(x^{\star},D \cdot 2^{-l} )}=N'_{l+1},$
  for all $l\geq 1$
  by definition of the stopping times 
  and (iii)
  $
  \sum^{\tau_{m+1}}_{i= \tau_l+1} 
    \indic{X_i \in B(x^{\star}, 2^{-(l-1)})/B(x^{\star}, 2^{-l})} \leq N_{l},$
    for all $l \geq 1$
  on the event  $\bigcap_{l=1}^mE_l$, from Lemma \ref{lem:covering}, we finally get that
  \[
  \tau_{m+1} \leq N'_1 + \sum_{l=1}^{m} (N'_{l+1}  + N_{l}).
  \]
\end{proof}
  
  \noindent As Corollary \ref{coro:stopping} gives the desired bound on $\tau_{m+1}$,
  it remains to control the maximum $\max_{i=1 \dots \tau_{m+1}}f(X_i)$.
  The next lemma shows that i.i.d.~results can actually be used to bound this term.\\
  \begin{lemma}
  \label{lem:bonus2}
   For all $m \in \{1, \dots, M-1 \}$, 
   we have that $\forall y \in\text{\normalfont{Im}}(f)$,
   \[
    \P\left( \max_{i = 1\dots \tau_{m+1}}f(X_i) \geq y 
    ~|~ \bigcap_{l=1}^m  E_l\right) 
    \geq 
      \P\left( \max_{i = 1\dots N'_{m+1}}f(X'_i) \geq y \right).
   \]
   where $X'_1\dots X'_{N'_{m+1}}$ denotes a sequence 
   $N'_{m+1}$ 
   i.i.d.~copies of $X' \sim \mathcal{U}(\X\cap B(x^{\star}, D\cdot 2^{-m}))$.
  \end{lemma}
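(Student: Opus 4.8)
The plan is to compare LIPO against its underlying stream of uniform proposals and to exploit the fact that LIPO never discards an improving point. First I would record the elementary but crucial consequence of the decision rule (Lemma \ref{lem:potential}): since $f\in\Lip(k)$ is consistent with its own evaluations, for every $x$ one has $f(x)\le \min_{i\le t} f(X_i)+k\norm{x-X_i}_2 = UB_{k,t}(x)$. Hence a proposal $X_{t+1}$ that is rejected, i.e. $UB_{k,t}(X_{t+1})<\max_{i\le t}f(X_i)$, necessarily satisfies $f(X_{t+1})<\max_{i\le t}f(X_i)$. Consequently the running maximum taken over the accepted (evaluated) points coincides with the running maximum taken over \emph{all} uniform proposals produced so far: the skipped candidates can never beat the current record.

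Second, I would realize LIPO on a single i.i.d. stream $Z_1,Z_2,\dots\sim\mathcal{U}(\X)$, the evaluated points $X_1,X_2,\dots$ being the accepted subsequence. Writing $B_m:=B(x^\star, D\cdot 2^{-m})$ (with $D=\max_{x\in\X}\norm{x-x^\star}_2$), the thinned sub-stream of proposals falling in $B_m$ is i.i.d.\ $\mathcal{U}(\X\cap B_m)$. By the very definition of $\tau_{m+1}$, exactly $N'_{m+1}$ accepted points lie in $B_m$ by time $\tau_{m+1}$; since accepted points are a subset of the proposals, at least $N'_{m+1}$ proposals have fallen in $B_m$ by then, and in particular the first $N'_{m+1}$ such proposals $W_1,\dots,W_{N'_{m+1}}$ all occur before $\tau_{m+1}$. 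By the observation of the first paragraph, each $f(W_l)$ is at most the running maximum at its proposal time, hence at most $\max_{i\le\tau_{m+1}}f(X_i)$, so that $\max_{i\le\tau_{m+1}}f(X_i)\ge \max_{l\le N'_{m+1}}f(W_l)$ holds surely.

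It then remains to transfer this pathwise domination through the conditioning on $\bigcap_{l=1}^m E_l$, which is where the main care is needed. The key structural point is that each $E_l$ with $l\le m$ is measurable with respect to the history up to the stopping time $\tau_m$, whereas the proposals $W_1,\dots,W_{N'_{m+1}}$ counted toward $\tau_{m+1}$ are all produced \emph{after} $\tau_m$. Conditioning on this history and on the event, the continuation stream is a fresh i.i.d.\ sequence, so $W_1,\dots,W_{N'_{m+1}}$ are i.i.d.\ $\mathcal{U}(\X\cap B_m)$ and independent of the conditioning event; their maximum therefore has exactly the law of $\max_{l\le N'_{m+1}}f(X'_l)$. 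Combining with the pathwise bound and integrating out the history yields $\P(\max_{i\le\tau_{m+1}}f(X_i)\ge y \mid \bigcap_{l=1}^m E_l)\ge \P(\max_{l\le N'_{m+1}}f(X'_l)\ge y)$ for every $y$. I expect the main obstacle to be precisely this measurability bookkeeping: one must verify that the $N'_{m+1}$ relevant in-$B_m$ proposals genuinely live in the post-$\tau_m$ continuation and that the thinned sub-stream stays i.i.d.\ uniform after conditioning on a stopping-time event, so that the conditional law of their maximum decouples from $\bigcap_{l=1}^m E_l$.
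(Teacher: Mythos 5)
Your proposal is correct and follows essentially the same route as the paper, which compresses the whole argument into ``reproduce the steps of Proposition~\ref{prop:fasterprs} with the evaluation points falling into $B(x^{\star}, D\cdot 2^{-m})$ after $\tau_m$'': the core of that proposition is precisely your observation that rejected uniform proposals can never beat the running maximum, so LIPO's record dominates that of a pure random search on the sub-ball. Your write-up merely makes explicit the coupling on a single proposal stream, the thinning to $\mathcal{U}(\X\cap B(x^{\star},D\cdot 2^{-m}))$, and the fresh-start property at the stopping time $\tau_m$ that decouples the post-$\tau_m$ proposals from $\bigcap_{l=1}^m E_l$ --- details the paper leaves implicit.
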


  \begin{proof} 
  From Corollary \ref{coro:stopping}, we know that
  on the event $\bigcap_{l=1}^mE_l$
  the stopping time $\tau_{m+1}$ is finite.
  Moreover, as $\sum_{i=\tau_m+1}^{\tau_{m+1}} \indic{ X_i \in B(x^{\star}, 
  D \cdot 2^{-m})}= N_{m+1}'$
  by definition of $\tau_{m+1}$,
  it can then easily be shown by reproducing
  the same steps as in the proof of Proposition \ref{prop:fasterprs} 
  with the evaluations points falling into $B(x^{\star}, D\cdot 2^{-m})$ after $\tau_m$
  that the algorithm is faster than a Pure Random Search performed
  over the subspace $\X \cap B(x^{\star}, D\cdot 2^{-m})$, which 
  proves the result.
  \end{proof}
  As a direct consequence of this lemma, 
  one can get the desired bound on the maxima as shown in the next corollary.\\
  \begin{corollary}
  \label{eq:coro}
   For all $m \in \{1, \dots, M-1 \}$, we have that
   \[
    \P\left( \max_{i = 1\dots \tau_{m+1}}f(X_i) \geq \max_{x \in \X} f(x)
    - \frac{c_k}{2} \cdot \left( \frac{D}{2^{m+1}}  \right)^{\kappa}
    ~|~ \bigcap_{l=1}^m E_l \right) \geq 1 - \delta/M.
   \]
  \end{corollary}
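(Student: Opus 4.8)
The plan is to read this corollary off directly from Lemma \ref{lem:bonus2} and the Pure Random Search bound of Proposition \ref{prop:cvg_prs}; all the stochastic content is already in the former, so what remains is a bookkeeping computation with the definition of $N'_{m+1}$. Concretely, set $y := \max_{x\in\X}f(x) - (c_\kappa/2)(D/2^{m+1})^\kappa$. Since $y \leq \max_{x\in\X}f(x)$ and $f$ is continuous on the convex (hence connected) domain $\X$, either $y\in\mathrm{Im}(f)$ by the intermediate value theorem or $y<\min_{x\in\X}f(x)$, in which case the claimed bound is trivial. Thus I may apply Lemma \ref{lem:bonus2} at this value of $y$, which reduces the problem to showing that a Pure Random Search with $N'_{m+1}$ evaluations uniform on $\X':=\X\cap B(x^\star, D\cdot 2^{-m})$ exceeds $y$ with probability at least $1-\delta/M$.

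The key observation enabling this reduction is that $x^\star\in B(x^\star, D\cdot 2^{-m})$, so $x^\star\in\X'$ and therefore $\max_{x\in\X'}f(x)=f(x^\star)=\max_{x\in\X}f(x)$: restricting the search domain to the small ball around the optimum costs nothing in the target value. I would then apply Proposition \ref{prop:cvg_prs} to the compact convex set $\X'$ (it has non-empty interior because $\X$ is convex with non-empty interior and $x^\star\in\X$, so the segment joining $x^\star$ to an interior point of $\X$ stays in $\X'$ near $x^\star$), with confidence parameter $\delta/M$ and the crude estimate $\diam{\X'}\leq \diam{B(x^\star, D\cdot 2^{-m})}=2D\cdot 2^{-m}$. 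This yields, with probability at least $1-\delta/M$,
\[
\max_{x\in\X}f(x) - \max_{i=1\dots N'_{m+1}}f(X'_i) \leq k\cdot 2D\cdot 2^{-m}\cdot\left(\frac{\ln(M/\delta)}{N'_{m+1}}\right)^{\frac{1}{d}}.
\]

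Finally I would substitute the lower bound $N'_{m+1}\geq \ln(M/\delta)\left(\frac{8kD}{c_\kappa D^\kappa}\right)^d 2^{(m+1)d(\kappa-1)}$ implied by its definition. The $\ln(M/\delta)$ factors cancel, leaving $(\ln(M/\delta)/N'_{m+1})^{1/d}\leq \frac{c_\kappa D^\kappa}{8kD}\,2^{-(m+1)(\kappa-1)}$, so the right-hand side becomes $\frac{c_\kappa D^\kappa}{4}\,2^{-m-(m+1)(\kappa-1)}$. Collecting the exponent via $-m-(m+1)(\kappa-1)=1-(m+1)\kappa$ reduces this to exactly $\frac{c_\kappa}{2}(D/2^{m+1})^\kappa$, which is $\max_{x\in\X}f(x)-y$. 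Combining with Lemma \ref{lem:bonus2} gives the conditional probability bound for the algorithm. The only mild obstacle is this arithmetic of the powers of two together with checking that $\X'$ meets the hypotheses of Proposition \ref{prop:cvg_prs}; there is no genuinely delicate step.
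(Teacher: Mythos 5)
Your proposal is correct and follows essentially the same route as the paper: combine Lemma \ref{lem:bonus2} with the Pure Random Search bound of Proposition \ref{prop:cvg_prs} applied on $\X\cap B(x^{\star},D\cdot 2^{-m})$ (with $\diam{\cdot}\leq 2D\cdot 2^{-m}$ and confidence $\delta/M$), then substitute the definition of $N'_{m+1}$ and simplify the powers of two. Your additional checks --- that $y\in\mathrm{Im}(f)$ or the bound is trivial, that $\max_{x\in\X'}f(x)=\max_{x\in\X}f(x)$ since $x^{\star}\in\X'$, and that $\X'$ is compact, convex with non-empty interior --- are details the paper's one-line proof leaves implicit, and they are handled correctly.
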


  \begin{proof}
   Omitting the conditionning upon $\bigcap_{l=1}^m E_l$,
   we obtain from the combination of Lemma \ref{lem:bonus2} and
   Proposition \ref{prop:cvg_prs}
   that with probability at least $1-\delta/M$:
  \begin{align*}
   \max_{x \in \X}f(x) - \max_{i=1\dots \tau_{m+1}} f(X_i)
    & \leq k \cdot 2D \cdot 2^{-m} \cdot
   \left(  \frac{\ln(M/\delta)}{N'_{m+1}} \right)^{\frac{1}{d}}  \\
    & \leq 
    k \cdot 2D \cdot 2^{-m} \cdot 
    \left(  \frac{\ln(M/\delta)}{\ln(M/\delta)2^{d(m+1)(\kappa-1)}}
    \left(  \frac{c_{\kappa}D^{\kappa}}{8kD} \right)^d
    \right)^{\frac{1}{d}} \\
    & = \frac{c_{\kappa}}{2} \cdot \left(  \frac{D}{2^{m+1}} \right)^{\kappa}.
  \end{align*}
  \end{proof}
  At this point, 
  we know from the combination of Corollary \ref{coro:stopping} and Corollary \ref{eq:coro}
  that
  $$
   \forall m \in\{ 1,\dots, M-1\},~~\P\left(E_{m+1}|\bigcap_{l=1}^{m} E_l
   \right)\geq 1-\delta/M
  $$
  which proves from (\ref{eq:final}) that $\P(E_M) \geq 1-\delta$.

  ~\\
  {\bf Analysis of $E_M$.}
  As 
  $
       \max_{i=1 \dots \tau_M} f(X_i)
   \geq \max_{x \in \X}f(x)  - \frac{c_{\kappa}}{2} 
   \cdot D^{\kappa}\cdot 2^{-M\kappa}
   $
   and 
      $
      \tau_M \leq  N'_1  
      + \sum_{l=1}^{M-1} \left( N'_{l+1}  + N_l \right)
      $
    on the event $E_M$,
  it remains to show that 
  $
   N'_1 + \sum_{l=1}^{M-1}  \left(N'_{l+1} + N_l \right) \leq n
  $
  to conclude the proof.
  Consider first the case $\kappa = 1$. 
  Setting $C= (8k/c_{\kappa})^d$ and observing that (i)
  $
   N'_l \leq  \ln(M/\delta) C +1,
   $
   (ii)
   $
   N_l \leq 2 \cdot C \cdot (2\sqrt{d})^d -1
   $
   for all $l \leq M$
   and (iii) 
   $ M\leq n$, one gets:
  \begin{align*}
   N'_1 + \sum_{l=1}^{M-1}  \left(N'_{l+1} + N_l \right) 
   & \leq C \cdot M  \left( \ln(M/\delta) + 2(2\sqrt{d})^d \right) \\
    & \leq n \cdot \frac{\ln(M/\delta) +2(2\sqrt{d})^d}{\ln(n/\delta)+ 2(2\sqrt{d})^d} \\
    & \leq n.
  \end{align*}
  For $\kappa>1$, 
  since (i) $M$ was chosen so that
  $
   \frac{2^{d(\kappa-1)M}-1}{2^{d(\kappa-1)}-1} 
   \leq \frac{n}{C}\cdot \frac{1}{\ln(n/\delta)+ 2(2\sqrt{d})^d}
  $
  and (ii) $M\leq n$,
  we obtain:
    \begin{align*}
   N'_1 + \sum_{l=1}^{M-1}  \left(N'_{l+1} + N_l \right) 
   & \leq C\cdot \left( \ln(M/\delta)+2(2\sqrt{d})^d  \right) 
   \sum_{l=1}^M (2^{d(\kappa-1)})^l \\
    & \leq 
   C\cdot \left( \ln(M/\delta)+2(2\sqrt{d})^d  \right)
   \cdot  \frac{2^{d(\kappa-1)M}-1}{2^{d(\kappa-1)}-1} \\
   & \leq n.
  \end{align*}
  Finally, using the elementary inequality $\floor{x}\geq x-1$ 
  over $M$
  and 
  the inequality $c_{\kappa}D^{\kappa} \leq k \diam{\X}$ (by Condition \ref{cond})
  leads  to the desired result and completes the proof.\hfill\(\Box\)

 ~\\
  \noindent {\bf Proof of Theorem \ref{th:lower_bound}.} {\sc(Lower bound)} 
  Pick any $n\in \mathbb{N}^{\star}$ and  $\delta \in (0,1)$, set
  $\epsilon = c_{\kappa} \rad{\X}^{\kappa} \delta^{\kappa/d} \exp(-\kappa(n-\sqrt{2n\ln(1/\delta)})/d)$,
  let $\X_{\epsilon} = \{x \in \X: f(x) \geq \max_{x\in\X}f(x) - \epsilon \}$ 
  be the corresponding level set.
  Observe first that since (i)
  $\X_{\epsilon}
  =\{x \in \X: \epsilon \geq f(x^{\star})-f(x)  \} 
  \subseteq \{x \in \X: \epsilon \geq c_{\kappa} \norm{x^{\star} - x }_2^{\kappa} \}
   = \X \cap B(x^{\star}, (\epsilon/c_{\kappa})^{1/\kappa}
  $
  and (ii) there exists $x \in \X$ such that $B(x,\rad{\X}) \subseteq \X$, then
  $\mu(\X_{\epsilon})/\mu(\X) 
  \leq ( (\epsilon/c_{\kappa})^{1/\kappa} /\rad{\X} )^d =
  \delta e^{-n-\sqrt{2n\ln(1/\delta)}}$.
  It can then easily be shown by reproducing the same steps as in the proof
  of the Lower bound of Theorem 17
  in (\cite{malherbe2016ranking}) that
  \begin{align*}
   \P\left( \max_{i=1\dots n}f(X_i) \geq \max_{x \in \X} f(x) - \epsilon \right) 
   & = \P\left( \frac{ \mu(\{x \in \X: f(x)
   \geq \underset{i=1\dots n}{\max}f(X_i) \}) }{\mu(\X)}
   \leq  \frac{\mu(\X_{\epsilon})}{ \mu(\X) } \right)\\
    &\leq 
   \P\left(\prod_{i=1}^nU_i \leq \frac{\mu(\X_{\epsilon})}{ \mu(\X) } \right) \\
     & \leq \P \left( \prod_{i=1}^nU_i \leq 
      \delta \cdot e^{-n-\sqrt{2n\ln(1/\delta)}} \right)  \\
     & = \P\left( \sum_{i=1}^n -\ln(U_i) > n + \sqrt{2n\ln(1/\delta)} + \ln(1/\delta) \right)\\
     & \leq \delta
  \end{align*}
  where $U_1,\dots, U_n$ denotes a sequence of $n$ i.i.d.~copies of $U\sim\mathcal{U}([0,1])$.
  We point out that a concentration results for gamma random variable
  was used on the last line (see Lemma 37 and Lemma 38
  in (\cite{malherbe2016ranking}) for more details).\hfill\(\Box\)

  \section{Analysis of AdaLipOpt (proofs of Section \ref{sec:adalipopt})}
  
  {\bf Proof of Proposition \ref{prop:lip_estimate}.} 
  Pick any $t\geq 2$, 
  consider any non-constant $f \in \bigcup_{k\geq 0}\Lip(k)$  and set
  $i^{\star} = \min\{i \in \mathbb{Z}: f \in \text{Lip}(k_i) \}$. 
  To prove the result, we decorelate the sample  and use the fact that 
  $(X_1, X_{\lfloor t/2 \rfloor+1})
  , \dots, 
  (X_{\lfloor t/2 \rfloor}, X_{2\lfloor t/2 \rfloor})$
  forms a sequence of ${\lfloor t/2 \rfloor}$ i.i.d.~copies of 
  $(X,X') \sim \mathcal{U}(\X \times \X)$:
  \begin{align*}
   \P\left( f \in \text{Lip}(\hat{k}_t)  \right)
   &= \P\left( \hat{k}_t = k_{i^{\star}}  \right) \\
   &= \P\left( \bigcup_{i \neq j}^t \left\{ 
   \abs{f(X_i) -f(X_j)} > k_{i^{\star}-1} \cdot \norm{X_i - X_j}_2
   \right\} \right)\\
    &\geq \P\left( \bigcup_{i=1}^{\lfloor t/2 \rfloor}  \left\{
    \abs{f(X_i) -f(X_{\lfloor t/2 \rfloor+i})} > k_{i^{\star}-1} 
    \cdot \norm{X_i - X_{\lfloor t/2 \rfloor+i}}_2
    \right\} \right) \\
    & = 1 -\P\left( \frac{ \abs{f(X_1) -f(X_2)}  }{~\norm{X_1 - X_{2}}_2} \leq k_{i^{\star}-1}
   \right)^{\lfloor  t/2 \rfloor} \\
   & = 1- \left(1 - \Gamma(f,k_{i^{\star}-1}) \right)^{\lfloor  t/2 \rfloor}.
  \end{align*}
  It remains to show that 
  $\Gamma(f,k_{i^{\star}-1})>0$.
  Observe first that since $f\in \Lip(k_{i^{\star}})$, then the function
  $
   F: (x,x') \mapsto \abs{f(x) - f(x')} - k_{i^{\star}-1} \cdot \norm{x-x'}_2
  $
  is also continuous. 
  However, as $f \notin$Lip$(k_{i^{\star}-1})$, we know that there
  exists some $(x_1,x_2 ) \in \X\times \X$ such that $F(x_1,x_2) > 0$.
  Hence, it follows from the continuity of $F$
  that there necessarily exists some $\epsilon>0$ such that 
  $\forall (x,x') \in B(x_1,\epsilon)\cap\X \times B(x_2, \epsilon)\cap\X$, $F(x,x') > 0$
  which proves the proof.\hfill\(\Box\)
  
  ~\\
  {\bf Proof of Proposition \ref{prop:consistency_adalipopt}.} 
  Combining the consistency equivalence of Proposition 
  \ref{prop:consistency_equivalence} with the upper bound 
  on the covering rate obtained in Example \ref{ex:PRS} gives the result.\hfill\(\Box\)

  ~\\
  {\bf Proof of Proposition \ref{prop:adagenericrate}}. Fix any $\delta \in (0,1)$,
  set $N_1 = 2+\ceil{2\ln(\delta/3)/\ln(1-\Gamma(f,k_{i^{\star}\minus 1}))}$ and 
  $
  N_2 = \ceil{((\sqrt{\ln(3/\delta)/2 + 4 N_1p} -\sqrt{ \ln(3/\delta)/2 }) /2p)^2}.
  $
  Considering any $n> N_2$, we prove the result in three steps.
  
  \noindent{\bf Step 1.} As the constant $N_1$ and $N_2$ were chosen so that
  Hoeffding's inequality ensures that
   $ \P\left(   \sum_{i=1}^{N_2} B_i \geq N_1 \right) \geq 1- \delta/3$,
   we know that after $N_2$ iterations and with probability $1-\delta/3$
   we have collected at 
   least $N_1$ evaluation points randomly and uniformly distributed over $\X$ 
   due to the  exploration step.\\

   \noindent{\bf Step 2.} 
   Using Proposition \ref{prop:lip_estimate} and 
   the fist $N_1$ evaluation points which have been 
   sampled independently and uniformly over $\X$, we know 
   that after $N_2$ iterations and on the event 
   $\{\sum_{i=1}^{N_2} B_i \geq N_1\}$
   the Lipschtz constant $k_{i^{\star}}$ has been estimated with probability at 
   least $1-\delta/3$, {\it i.e.},
   $
    \P\left( \forall t \geq N_2+1,~  \hat{k}_t = k_{i^{\star}}
    \mid \sum_{i=1}^{N_2} B_i \geq N_1  \right) \geq 1- \delta/3.
   $
   
  ~\\
   {\bf Step 3.} 
   Finally, as the Lipschtz constant estimate $\hat{k}_t$ satisfies $f\in\Lip(\hat{k}_t)$
   for all $t\geq N_2+1$ on the above event,
   one can easily show by reproducing the same steps as in Proposition \ref{prop:fasterprs}
   that
   contionned upon the event $\{\forall t \geq N_2+1,~  \hat{k}_t = k_{i^{\star}}\}
    \cap\{ \sum_{i=1}^{N_2} B_i \geq N_1 \}$
   the algorithm is always faster or equal to a Pure Random Search ran
   with $n-N_2$ i.i.d.~copies of $X' \sim\mathcal{U}(\X)$.
  Therefore, using (i) the bound of Proposition \ref{prop:cvg_prs},
  (ii) the elementaries inequalities 
  $\ceil{x} \leq x +1$, $\floor{x}\geq x-1$,
  $\sqrt{x+y} - \sqrt{x} \leq  \sqrt{y}$
  and (iv) the definition of $N_2<n$, we obtain that with probability at least 
  $(1-\delta/3)^3 \geq 1-\delta$,
  \begin{align*}
   \max_{x\in\X}f(x) - \max_{i=1\dots n}f(X_i) 
   & \leq k_{i^{\star}} \cdot
   \diam{\X} \cdot \left(  \frac{\ln(3/\delta)}{ n-N_2 } \right)^{\frac{1}{d}} \\
   & = k_{i^{\star}} \cdot
   \diam{\X} \cdot \left(  \frac{n}{n-N_2} \right)^{\frac{1}{d}} \cdot
   \left(  \frac{\ln(3/\delta)}{ n } \right)^{\frac{1}{d}}  \\
   & \leq k_{i^{\star}} \cdot
   \diam{\X} \cdot \left(  1+N_2 \right)^{\frac{1}{d}}
   \left(  \frac{\ln(3/\delta)}{ n } \right)^{\frac{1}{d}}\\
   & \leq k_{i^{\star}} \cdot \diam{\X} 
   \cdot \left( \frac{5}{p} + \frac{2\ln(\delta/3)}{p\ln(1-\Gamma(f, k_{i^{\star}\minus1}))}  \right)^{\frac{1}{d}} 
   \cdot \left( \frac{\ln(3/\delta)}{n} \right)^{\frac{1}{d}}.
  \end{align*}
  The result is extended to the case where $n\leq N_2$ by noticing that 
  the bound is superior to $k_{i^{\star}}\cdot \diam{\X}$ 
  in that case, and thus trivial.\hfill\(\Box\)

    ~\\
  {\bf Proof of Proposition \ref{th:fastada}}. Fix $\delta \in (0,1)$, 
  set $N_1 = 2+\ceil{2\ln(4/\delta)/\ln(1-\Gamma)}$ and
  $
  N_2 = \ceil{((\sqrt{\ln(4/\delta)/2 + 4 N_1p} -\sqrt{ \ln(4/\delta)/2 }) /2p)^2}
  $
  and let
  $
  N_3 = N_2 + \ceil{   2\ln(4/\delta) /(1-p)^2 }.
  $
  Picking any $n> N_3$, we proceed
  similarly as in the proof of Proposition \ref{prop:adagenericrate}
  in four steps:\\

  
  \noindent {\bf Steps 1 \& 2.}
  As in the above prove, by definition of $N_1$ and $N_2$ and due to Hoeffding's inequality
  and Proposition \ref{prop:lip_estimate}, 
  we know that the following event: 
  $ 
  \{\forall t \geq N_2+1, \hat{k}_t = k_{i^{\star}} \} 
  \cap \{ \sum_{i=1}^{N_2} B_i \geq N_1 \}  $
  holds true with probability at least $(1-\delta/4)^2$.

  
  ~\\
  {\bf Step 3.} Again, using Hoeffding's inequality  and the definition of
  $N_2$ and $N_3$,
  we know that after the iteration $N_2+1$ we have collected
  with probability at least $1-\delta/4$ at least 
  $(1-p)(n-N_3)/2$ exploitative evaluation points:
  \begin{align*}
   \sum_{i=N_2+1}^n \indic{B_i = 0} 
    \geq (1-p)(n-N_2) - \sqrt{  \frac{(n-N_2)\ln(4/\delta)}{2} }
  \geq \frac{1-p}{2} \cdot (n-N_3).
  \end{align*}

  ~\\
  {\bf Step 4.} Reproducing the same steps as in the proof of the fast rate of Theorem \ref{th:fast_rates}
  with the $(1-p) \cdot (n-N_3)/2$
  previous exploitative points and putting the previous results altogether
  gives that with proability at least $(1-\delta/4)^4\geq 1-\delta$,
  \begin{flushleft}
   $
   \displaystyle
   \max_{x \in \X}f(x) -\max_{i=1\dots n} f(X_i) \leq k_{i^{\star}} \times \diam{\X} \times
   $
  \end{flushleft}  
  \begin{flushright}
  $
  \begin{dcases*}
       \exp\left\{-~  C_{k, \kappa} \cdot 
	\frac{(1-p)(n-N_3)\ln(2)}{2\ln(4n/\delta) + 4(2\sqrt{d})^d }
	\right\},   \text{~~~~~~~~~~~~~~~~~~~~~~}\  \kappa =1  \\
	~\\
      \frac{2^{\kappa}}{2}
	\left( 1 + C_{k, \kappa}
     \cdot  \frac{(1-p)(n-N_3)(2^{d(\kappa~\!\minus~\!1)}-1)}{2\ln(4n/\delta) +4(2\sqrt{d})^d}
	\right)^{-\frac{\kappa}{d(\kappa-1)}}\!\!\!\!\!\!\!\!\!\!\!\!\!
	\!\!\!\!\!\!\text{~,~}
	~~~~~~~~~~~\kappa>1.
    \end{dcases*}
    $
  \end{flushright}
  We now take out the term $N_3$.
  Since $C_{k_{i^{\star}},\kappa}(1-p)\leq 1$, 
  when $\kappa=1$, we have
  \[
   \max_{x \in \X}f(x) - \max_{i=1\dots n}f(X_i) \leq k_{i^{\star}} \cdot \diam{\X} \cdot   
   \exp(5N_3/2) 
   \exp\left\{-~  C_{k, \kappa} \cdot 
	\frac{(1-p)n\ln(2)}{2\ln(4n/\delta) + 4(2\sqrt{d})^d }
	\right\}.
  \]
For $\kappa>1$, setting $C= C_{k_{i^{\star},\kappa}}(1-p)/(2\ln(4n/\delta) +4(2\sqrt{d})^d)$
and using the decomposition $n = (n- N_3)+N_3$,
we bound the ratio:
\begin{align*}
 \left(   
 \frac{1+C n (2^{d(\kappa-1}-1)}{1+C (n-N_3) (2^{d(\kappa-1)}-1)}
 \right)^{\frac{\kappa}{d(\kappa-1)}} 
 &\leq \left( 1+  \frac{ CN_3(2^{d(\kappa-1)}-1)}{1+C(2^{d(\kappa-1)}-1)}  \right)^{\frac{\kappa}{d(\kappa-1)}}.
\end{align*}
In the case where $\kappa/d(\kappa-1)\leq 1$, one directly obtains
\[
 \left( 1+  
 \frac{ CN_3(2^{d(\kappa-1)}-1)}{1+C(2^{d(\kappa-1)}-1)}  \right)^{\frac{\kappa}{d(\kappa-1)}}
 \leq \left(1+N_3 \right)^{\frac{\kappa}{d(\kappa -1)}} \leq (1+N_3) \leq e^{N_3}
\]
Considering the case where $\kappa/d(\kappa-1)> 1$ and setting
$\kappa =1+\epsilon/d$ with $\epsilon \in (0,1)$,
we obtain from the inequalities (i) $\kappa\leq 1+1/d \leq 2$, 
(ii) $\forall \epsilon\in(0,1)$, $2^{\epsilon}-1 \leq \epsilon$ and
(iii) $C\leq 1/2$ that
\[
  \left(1+\frac{ CN_3(2^{d(\kappa-1)}-1)}{1+C(2^{d(\kappa-1)}-1)}  \right)^{\frac{\kappa}{d(\kappa-1)}}
  \leq \left(1 + CN_3(2^{\epsilon}-1) \right)^{\frac{2}{\epsilon}}
  \leq (1+CN_3 \epsilon)^{\frac{2}{\epsilon}} \leq e^{2CN_3} \leq e^{N_3}
\]
Finally, using standard bounds on $N_3$ and noticing that the previous
bound is superior to $k_{i^\star} \diam{\X}$ whenever $n \leq N_3$, 
the previous result remains valid for any $n \in \mathbb{N}^{\star}$.\hfill\(\Box\)

%

%
%
%


\end{document}